\theoremstyle{plain}
\newtheorem{theorem}{Theorem}
\newtheorem{lemma}{Lemma}[section]
\newtheorem{proposition}[lemma]{Proposition}
\theoremstyle{definition}
\newtheorem{definition}[lemma]{Definition}
\newtheorem{assumption}[lemma]{Assumption}
\theoremstyle{remark}
\newcommand{\diag}{\mathrm{diag}}
\newcommand{\norm}[1]{\left\|{#1}\right\|} %
\newcommand{\B}{\mathcal{B}}
\newcommand{\M}{\mathcal{M}}
\newcommand{\BBS}{\B\B^*}
\newcommand{\MMS}{\M\M^*}
\newcommand{\BF}{\mathcal{B}\mathcal{F}}
\newcommand{\BD}{\mathcal{B}\mathcal{D}}
\newcommand{\DB}{\mathcal{D}\mathcal{B}}
\newcommand{\ind}[1]{^{(#1)}}
\newcommand{\vp}{\mathbf{p}}
\newcommand{\vq}{\mathbf{q}}
\newcommand{\vu}{\mathbf{u}}
\newcommand{\vv}{\mathbf{v}}
\newcommand{\vx}{\mathbf{x}}
\newcommand{\vy}{\mathbf{y}}
\newcommand{\vz}{\mathbf{z}}
\newcommand{\vA}{\mathbf{A}}
\newcommand{\vB}{\mathbf{B}}
\newcommand{\vC}{\mathbf{C}}
\newcommand{\vD}{\mathbf{D}}
\newcommand{\vF}{\mathbf{F}}
\newcommand{\vG}{\mathbf{G}}
\newcommand{\vI}{\mathbf{I}}
\newcommand{\vL}{\mathbf{L}}
\newcommand{\vM}{\mathbf{M}}
\newcommand{\vP}{\mathbf{P}}
\newcommand{\vQ}{\mathbf{Q}}
\newcommand{\vR}{\mathbf{R}}
\newcommand{\vS}{\mathbf{S}}
\newcommand{\R}{\mathbb{R}}
\newcommand{\F}{\mathbb{F}}
\newcommand{\ff}[2]{\tfrac{#1}{#2}}
\newcommand{\pd}[2]{\mathlarger{\prod\limits_{#1}^{#2}}}
\newcommand{\lt}{\left(}
\newcommand{\rt}{\right)}
\newcommand{\floor}[1]{\left\lfloor#1\right\rfloor}
\renewcommand{\mod}{\text{ mod }}
\newcommand{\MM}{\widetilde{\vM}}
\DeclareMathOperator*{\argmin}{argmin}
\newcommand{\Beidi}[1]{{\color{orange} [Beidi: {#1}]}}
\newcommand{\Tri}[1]{{\color{cyan} [Tri: {#1}]}}
\newcommand{\nimit}[1]{{\color{red} [Nimit: {#1}]}}
\newcommand{\micp}[1]{{\color{blue!70} [Michael: {#1}]}}
\newcommand{\arjun}[1]{{\color{green} [Arjun: {#1}]}}
\newcommand{\Beidi}[1]{}
\newcommand{\Tri}[1]{}
\newcommand{\nimit}[1]{}
\newcommand{\micp}[1]{}
\newcommand{\arjun}[1]{}
  \newlength{\defbaselineskip}
\titlespacing{\section}{0pt}{*1.0}{*0}
\titlespacing{\subsection}{0pt}{*0}{*0}
\titlespacing{\subsubsection}{0pt}{*0}{*0}
\def\setstretch#1{\renewcommand{\baselinestretch}{#1}}
\title{Monarch: Expressive Structured Matrices for Efficient and Accurate Training}
\author[1]{Tri Dao}
\author[1]{Beidi Chen}
\author[1]{Nimit Sohoni}
\author[1]{Arjun Desai}
\author[1]{Michael Poli}
\author[2]{Jessica Grogan}
\author[3]{Alexander Liu}
\author[3]{Aniruddh Rao}
\author[2]{Atri Rudra}
\author[1]{Christopher R{\'e}}
\affil[1]{Stanford University}
\affil[2]{University at Buffalo, SUNY}
\affil[2]{University of Michigan}
\affil[ ]{\texttt{\{trid,beidic,nims,arjundd,poli\}@stanford.edu}, \texttt{\{jrgrogan,atri\}@buffalo.edu}, \texttt{\{avliu,anrao\}@umich.edu}, \texttt{chrismre@cs.stanford.edu}}
\icmltitlerunning{Monarch}
\begin{document}

\iftoggle{arxiv}{
  \maketitle
}{
\twocolumn[
\icmltitle{Monarch: Expressive Structured Matrices for Efficient and Accurate Training}

\icmlsetsymbol{equal}{*}

\begin{icmlauthorlist}
\icmlauthor{Firstname1 Lastname1}{equal,yyy}
\icmlauthor{Firstname2 Lastname2}{equal,yyy,comp}
\icmlauthor{Firstname3 Lastname3}{comp}
\icmlauthor{Firstname4 Lastname4}{sch}
\icmlauthor{Firstname5 Lastname5}{yyy}
\icmlauthor{Firstname6 Lastname6}{sch,yyy,comp}
\icmlauthor{Firstname7 Lastname7}{comp}
\icmlauthor{Firstname8 Lastname8}{sch}
\icmlauthor{Firstname8 Lastname8}{yyy,comp}
\end{icmlauthorlist}

\icmlaffiliation{yyy}{Department of XXX, University of YYY, Location, Country}
\icmlaffiliation{comp}{Company Name, Location, Country}
\icmlaffiliation{sch}{School of ZZZ, Institute of WWW, Location, Country}

\icmlcorrespondingauthor{Firstname1 Lastname1}{first1.last1@xxx.edu}
\icmlcorrespondingauthor{Firstname2 Lastname2}{first2.last2@www.uk}

\icmlkeywords{Machine Learning, ICML}

\vskip 0.3in
]

\printAffiliationsAndNotice{\icmlEqualContribution} %
}

\begin{abstract}
  Large neural networks excel in many domains, but they are expensive to train and fine-tune.
  A popular approach to reduce their compute/memory requirements is to replace dense weight matrices with structured ones (e.g., sparse, low-rank, Fourier transform).
  These methods have not seen widespread adoption (1) in end-to-end training due to
  unfavorable efficiency--quality tradeoffs, and
  (2) in dense-to-sparse fine-tuning due to lack of tractable algorithms to
  approximate a given dense weight matrix.
  To address these issues, we propose a class of matrices (Monarch) that is \emph{hardware-efficient} (they are parameterized as products of two block-diagonal matrices for better hardware utilization) and \emph{expressive} (they can represent many commonly used transforms).
  Surprisingly, the problem of approximating a dense weight matrix with a Monarch matrix, though nonconvex, has an analytical optimal solution.
  These properties of Monarch matrices unlock new ways to train and fine-tune sparse and dense models.
  We empirically validate that Monarch can achieve favorable accuracy–efficiency tradeoffs in several end-to-end sparse training applications: speeding up ViT and GPT-2 training on ImageNet classification and Wikitext-103 language modeling by 2$\times$ with comparable model quality, and reducing the error on PDE solving and MRI reconstruction tasks by 40\%.
  In sparse-to-dense training, with a simple technique called ``reverse sparsification,'' Monarch matrices serve as a useful intermediate representation to speed up GPT-2 pretraining on OpenWebText by 2$\times$ without quality drop.
  The same technique brings 23\% faster BERT pretraining than even the very optimized implementation from Nvidia that set the MLPerf 1.1 record.
  In dense-to-sparse fine-tuning, as a proof-of-concept, our Monarch approximation algorithm speeds up BERT fine-tuning on GLUE by 1.7$\times$ with comparable accuracy.
\end{abstract}

\section{Introduction}
\label{sec:intro}

Large neural networks excel in many domains, but their training and fine-tuning demand extensive computation and memory~\citep{kaplan2020scaling}.
A natural approach to mitigate this cost is to replace dense weight matrices with structured ones, such as sparse \& low-rank matrices and the Fourier transform.
However, structured matrices (which can be viewed as a general form of sparsity) have not yet seen wide adoption to date, due to two main challenges.
(1) In the \textbf{end-to-end} (E2E) training setting, they have shown unfavorable efficiency--quality tradeoffs.
Model \emph{efficiency} refers how efficient these structured matrices are on modern hardware (e.g., GPUs).
Model \emph{quality} (performance on tasks) is determined by how expressive they are (e.g., can they represent commonly used transforms such as convolution or Fourier/cosine transforms that encode domain-specific knowledge).
Existing structured matrices are either not hardware-efficient, or not expressive enough.
(2) In the setting of \textbf{dense-to-sparse} (D2S) fine-tuning of pretrained models, 
a long-standing problem for most classes of structured matrices is the lack of tractable algorithms to approximate dense pretrained weight matrices~\citep{pan2012structured}.

Sparse matrices have seen advances in training deep learning models (e.g., pruning~\citep{han2015deep}, lottery tickets~\citep{frankle2018lottery}), but most work on (entrywise) sparsification focuses on reducing training or inference FLOPs, which do not necessarily map to E2E training time on modern hardware (e.g., GPUs).
In fact, most sparse training methods \emph{slow down} training in wall-clock time~\citep{gale2019state, hooker2020hardware}.
Moreover, sparse matrices are not able to represent commonly used transforms such as convolution and the Fourier transform.
Another class of structured matrices, such as Fourier, sine/cosine, Chebyshev, are used in specialized domains such as PDE solving~\citep{trefethen2000spectral} and medical imaging~\citep{hsieh2003computed}.
However, they are difficult to use in E2E training since only specific instances of these structured matrices have fast GPU implementations (e.g., FFT). Moreover, their applications requires domain expertise to hand-pick the right transforms.
Generalizations of these transforms (e.g., Toeplitz-like~\citep{sindhwani2015structured}, orthogonal polynomial transforms~\citep{driscoll1997fast}, low-displacement rank~\citep{kailath1979displacement}, quasi-separable~\citep{eidelman1999new}), though learnable, often lack efficient implementation on GPUs~\citep{thomas2018learning} for E2E training as well.
In addition, they have no known tractable algorithm to approximate a given dense matrix~\citep{pan2012structured}, making them difficult to use in D2S fine-tuning.

\textbf{E2E training.}
The technical challenge in addressing the efficiency--quality tradeoff of structured matrices is to find a parameterization that is both efficient on block-oriented hardware (e.g., GPUs) and expressive (e.g., can represent many commonly used transforms).
We propose a class of matrices called Monarch,\footnote{They are named after the monarch butterfly.} parameterized as products of two block-diagonal matrices (up to permutation), to address this challenge.
This parameterization leverages optimized batch-matrix-multiply (BMM) routines on GPUs, yielding up to 2$\times$ speedup compared to dense matrix multiply (\cref{subsec:benchmark_tasks}).
We show that the class of Monarch matrices contains the class of butterfly matrices~\citep{parker1995random,dao2019learning}, which can represent any low-depth arithmetic circuits in near optimal runtime and parameter size~\citep{dao2020kaleidoscope}.
Monarch matrices inherit this expressiveness and thus can represent many fast transforms (e.g., Fourier, sine/cosine/Chebyshev transforms, convolution) (\cref{thm:Monarch_expressiveness}).
\begin{figure}[t]
  \centering
  \includegraphics[width=.49\textwidth]{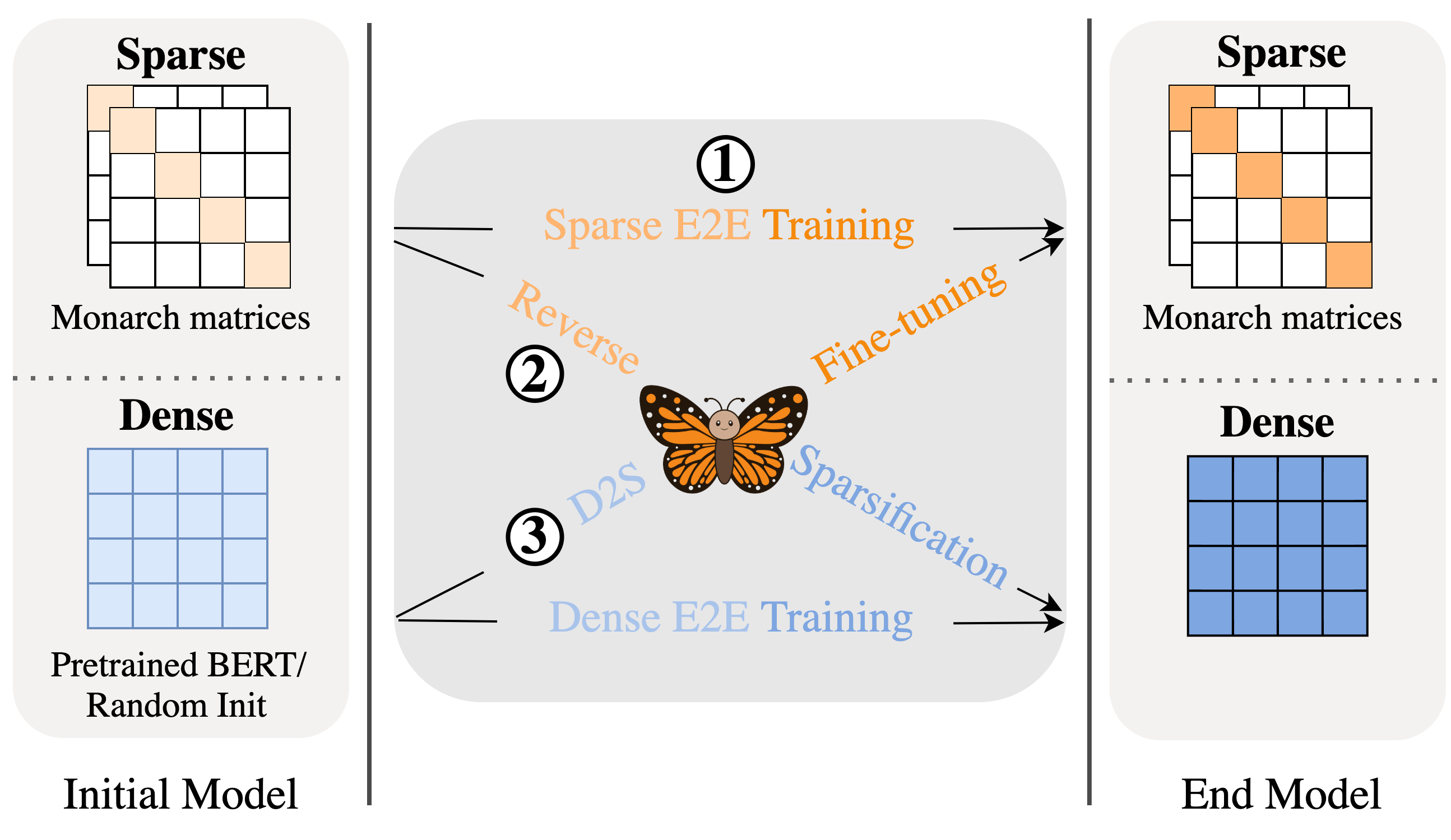}
  \label{fig:diagram}
  \vspace{-5mm}
  \caption{Monarch matrices unlock several ways to train sparse and dense models: end-to-end training a sparse (Monarch) model can be 2x faster than dense training thanks to its hardware efficiency; sparse-to-dense ``reverse sparsification'' can speed up training of large models such as GPT-2; and our dense-to-sparse Monarch projection algorithm can transfer knowledge from pretrained dense model to Monarch model and speed up BERT fine-tuning.}
  \vspace{-1.0em}
\end{figure}

\textbf{Sparse-to-dense (S2D) training, aka ``reverse sparsification''.} The hardware-efficiency and expressiveness of Monarch matrices unlock a new way to train dense models:
training with Monarch weight matrices for most of the time and then transitioning to dense weight matrices (\cref{fig:reverse_sparsification}).
This technique can be used in cases where sparse training faces representation or optimization difficulties~\citep{evci2019difficulty} or a dense model is necessary.
One such application is language modeling on large datasets, where a massive number of parameters are required~\citep{kaplan2020scaling} to memorize the textual patterns~\citep{geva2020transformer}.
Monarch matrices can serve as a fast intermediate representation to speed up the training process of the dense model.

\textbf{D2S fine-tuning.}
While transitioning from sparse to dense matrices is easy, the reverse direction is challenging.
The main technical difficulty is the \emph{projection} problem: finding a matrix in a class of structured matrices that is the closest to a given dense matrix.
Only a few specific classes of structured matrices have a tractable projection solution, such as entrywise sparse matrices (magnitude pruning~\citep{tewarson1973sparse}), low-rank matrices (the Eckart-Young theorem~\citep{eckart1936approximation}), and orthogonal matrices (the orthogonal Procrustes problem~\citep{schonemann1966generalized}).
For more expressive classes of structured matrices, projection remains a long-standing problem~\citep{pan2012structured}.
For example, \citet{desa2018two} show that all structured matrices (in the form of arithmetic circuits) can be written as products of sparse matrices, which can be represented as products of butterfly matrices~\citep{dao2020kaleidoscope}.
There have been numerous heuristics proposed to project on the set of butterfly matrices or products of sparse matrices, based on iterative first-order optimization~\citep{le2016flexible, dao2019learning, khalitov2021sparse} or alternating minimization~\citep{lin2021deformable}.
However, they lack theoretical guarantees.
In contrast, we derive a projection algorithm for our Monarch parameterization and prove that it finds the optimal solution (\cref{thm:Monarch_projection}).
We also derive an algorithm to factorize matrices that are products of Monarch matrices (\cref{subsec:recovery}). These new algorithms allows us to easily finetune a pretrained model into a model with Monarch weight matrices (\cref{subsec:finetuning}).

We validate our approach empirically in these three settings, showing that our Monarch matrix parameterization achieves a favorable efficiency--accuracy tradeoff compared to baselines on a wide range of domains: text, images, PDEs, MRI.
\iftoggle{arxiv}{}{
\vspace{-0.25em}
}
\begin{itemize}[leftmargin=*,nosep,nolistsep,noitemsep]
\item In the \textbf{E2E sparse training} setting (\cref{subsec:e2e_training}), our Monarch matrices model trains 2$\times$ faster than dense models while achieving the same accuracy / perplexity on benchmark tasks (ViT on ImageNet classification, GPT-2 on Wikitext-103 language modeling).
On scientific and medical tasks relying on hand-crafted fast transforms (PDE solving, MRI reconstruction), Monarch reduces the error by up to 40\% at the same training speed compared to domain-specific Fourier-based methods.
\item In the \textbf{S2D training} setting (\cref{subsec:s2d_training}), our ``reverse sparsification'' process with Monarch matrices speeds up GPT-2 pretraining on the large OpenWebText dataset by 2$\times$ compared to an optimized implementation from NVIDIA~\citep{shoeybi2019megatron}, with comparable upstream and downstream (text classification) quality.
When applied to BERT pretraining, our method is 23\% faster than the implementation from Nvidia that set the MLPerf~\citep{mattson2020mlperf} 1.1 record.
\item In the \textbf{D2S fine-tuning} setting (\cref{subsec:finetuning}), we show a proof of concept that our Monarch projection algorithm speeds up BERT fine-tuning.
We project a pretrained BERT model to a Monarch matrix model and fine-tune on GLUE, with 2$\times$ fewer parameters, 1.7$\times$ faster fine-tuning speed, and similar average GLUE accuracy as the dense model.\footnote{Monarch code is available at \url{https://github.com/HazyResearch/monarch}}
\end{itemize}

\section{Related Work and Background}
\label{sec:related_work}

\subsection{Related Work}

\textbf{Sparse Training.}
Sparse training is an active research topic. There has been inspiring work along the line of compressing models such as neural network pruning and lottery tickets~\citep{han2015deep,han2015learning, frankle2018lottery}. Pruning methods usually eliminate neurons and connections through iterative retraining~\citep{han2015deep,han2015learning,sanh2020movement} or at runtime~\citep{NIPS2017_a51fb975,dong2017learning}. 
Although both Monarch and pruning methods aim to produce sparse models, we differ in our emphasis on \emph{overall} efficiency, whereas pruning mostly focuses on inference efficiency and disregards the cost of finding the smaller model. Lottery tickets \citep{frankle2018lottery,frankle2019stabilizing,frankle2020linear} are a set of small sub-networks derived from a larger dense network, which outperforms their parent networks in convergence speed and potentially in generalization. Monarch can be roughly seen as a class of manually constructed lottery tickets.

\textbf{Structured Matrices.}
Structured matrices are those with subquadratic ($o(n^2)$ for dimension $n \times n$) number of parameters and runtime.
Examples include sparse and low-rank matrices, and fast transforms (Fourier, Chebyshev, sine/cosine, orthogonal polynomials).
They are commonly used to replace the dense weight matrices of deep learning models, thus reducing the number of parameters and training/inference FLOPs.
Large classes of structured matrices (e.g., Toeplitz-like~\citep{sindhwani2015structured}, low-displacement rank~\citep{kailath1979displacement}, quasi-separable~\citep{eidelman1999new}) have been shown to be able to represent many commonly used fast transforms.
For example, \citet{desa2018two} show that a simple divide-and-conquer scheme leads to a fast algorithm for a large class of structured matrices.
Our work builds on butterfly matrices~\citep{parker1995random, dao2019learning},
which have been shown to be expressive but remain hardware-inefficient.
Pixelated butterfly~\citep{chen2021pixelated} has attempted to make butterfly matrices more hardware-friendly, but at the cost of reduced expressiveness.
Furthermore, it is not known if one can directly decompose a dense pretrained model to a model with butterfly weight matrices without retraining.

\subsection{Butterfly Matrices}
\label{sec:butterfly}
Our work builds on recent work on \emph{butterfly matrices}. \citet{dao2019learning} introduced the notion of a {butterfly matrix} as a certain product of permuted block-diagonal matrices, inspired by the Cooley-Tukey fast Fourier transform algorithm~\citep{cooley1965algorithm}.
They encode the divide-and-conquer structure of many fast multiplication algorithms.
\citet{dao2020kaleidoscope} showed that all structured matrices can be written as products of such butterfly matrices, and this representation has optimal memory and runtime complexity up to polylogarithmic factors. We now review these definitions (following \citep{dao2020kaleidoscope}).

    A \textbf{butterfly factor} of size $k$ (where $k$ is even) is a matrix of the form
    \(
        \begin{bmatrix}
            \vD_1 & \vD_2 \\ \vD_3 & \vD_4
        \end{bmatrix}
    \)
    where each $\vD_i$ is a $\frac{k}{2} \times \frac{k}{2}$ diagonal matrix. We call this class of matrices $\BF\ind{k,k}$.

 A \textbf{butterfly factor matrix} of size $n$ and block size $k$ is a block diagonal matrix of $\frac{n}{k}$ butterfly factors of size $k$:
     \[
        \mathrm{diag}\left(\vB_1, \vB_2, \hdots, \vB_\frac{n}{k} \right),
     \]
      where $\vB_i \in \BF\ind{k,k}$. We call this class of matrices $\BF\ind{n,k}$.

    Finally, a \textbf{butterfly matrix} of size $n = 2^s$ is a matrix $\vM$ that can be expressed as a product of butterfly factor matrices:
    \[
        \vM = \vB_n \vB_{n/2} \hdots \vB_2,
    \]
    where each $\vB_i \in \BF\ind{n, i}$. We denote the set of size-$n$ butterfly matrices by $\B\ind{n}$.
    Equivalently, $\vM$ can be written in the following form:
    \[
         \vM = \vB_n \begin{bmatrix}
            \vM_1 & 0 \\
            0 & \vM_2
         \end{bmatrix},
    \]
    where $\vB_n \in \BF\ind{n,n}$ and $\vM_1, \vM_2 \in \B\ind{\frac{n}{2}}$.

    \citet{dao2020kaleidoscope} further introduce the \emph{kaleidoscope matrix hierarchy}: the class $\B\B^{*(n)}$ is the set of matrices of the form $\vM_1\vM_2^*$ for $\vM_1,\vM_2 \in \B\ind{n}$, and the class $(\B\B^{*(n)})^w_e$ is the set of all matrices of the form $\lt \prod\limits_{i=1}^{w} \vM_i \rt[1{:}n, 1{:}n]$ where each $\vM_i \in \B\B^{*(e\cdot n)}$. ($\vA^*$ denotes the conjugate transpose of $\vA$.)
    When the size $n$ is clear from context, we will omit the superscript $\ind{n}$ (i.e., just write $\B,\BBS$, etc.).
As shown by Theorem 1 of \citet{dao2020kaleidoscope}, the kaleidoscope hierarchy can represent any structured matrix with nearly-optimal parameters and runtime: if $\vM$ is an $n \times n$ matrix such that multiplying any vector $v$ by $\vM$ can be represented as a linear arithmetic circuit with depth $d$ and $s$ total gates, then $\vM \in (\B\B^{*(n)})^{O(d)}_{O(s/n)}$. \nimit{this is probably too much detail.}

\section{Monarch: Definition \& Algorithms}
\label{sec:theory}

In \cref{subsec:parametrization}, we introduce \emph{Monarch matrices},
and describe how they relate to butterfly matrices. In \cref{subsec:ee} we show that the class of Monarch matrices is at least as expressive as the class of butterfly matrices,
while admitting a practically efficient representation.
In particular, many fast transforms (e.g., Fourier, convolution) can be represented as a Monarch matrix or as the product of two or four Monarch matrices (\cref{thm:Monarch_expressiveness}).
In \cref{subsec:projection}, we show how to project onto the set of Monarch
matrices. This allows us to tractably approximate a given matrix
(e.g., a dense pretrained weight matrix) with a Monarch matrix, unlocking new applications~(cf. \cref{sec:experiments}).
In \cref{subsec:recovery}, we show how to recover the individual factors of the
larger class of products of two Monarch matrices.

\subsection{Monarch Parametrization for Square Matrices}
\label{subsec:parametrization}

Inspired by the 4-step FFT algorithm~\citep{bailey1990ffts}, we propose
the class of Monarch matrices, each 
parametrized as the product of two block-diagonal matrices up to permutation:
\begin{definition}\label{def:Monarch}
  Let $n = m^2$. An $n \times n$ \emph{Monarch matrix} has the form:
  \begin{equation*}
    \vM = \vP \vL \vP^\top \vR,
  \end{equation*}
  where $\vL$ and $\vR$ are block-diagonal matrices, each with $m$ blocks of
  size $m \times m$, and $\vP$ is the permutation that maps
  $[x_1, \dots, x_n]$ to
  $[x_1, x_{1+m}, \dots, x_{1+(m-1)m}, x_2, x_{2+m}, \dots, \newline x_{2+(m-1)m}, \dots, x_{m}, x_{2m}, \dots, x_n]$.
\end{definition}
We call this the \emph{Monarch parametrization}. We denote the class of all
matrices that can be written in this form as $\M\ind{n}$ (dropping the superscript when clear from context).
\cref{fig:blockdiag_parametrization} illustrates this parametrization.
\begin{figure}[t]
  \centering
  \includegraphics[width=.45\textwidth]{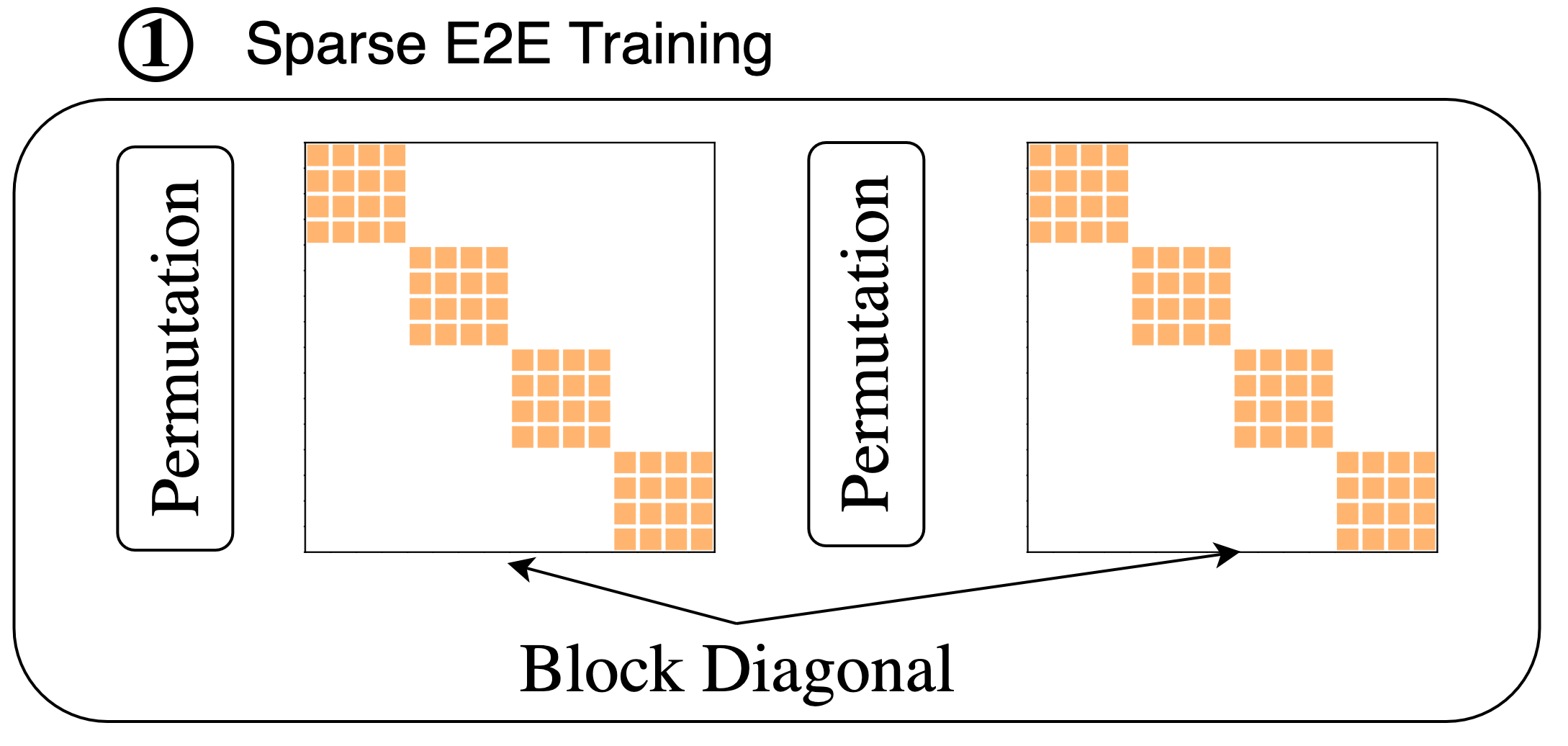}
  \vspace{-1em}
  \caption{\label{fig:blockdiag_parametrization}Monarch matrices are parametrized as products of two block-diagonal
    matrices up to permutation, allowing efficient multiplication algorithm that leverages batch
    matrix multiply.}
  \vspace{-0.5em}
\end{figure}

We now provide more intuition for this parametrization and connect it to butterfly
matrices.
For ease of exposition, suppose $\vB \in \B^{(n)}$ where $n$ is a power of 4.
Then let $\vL'$ be obtained by multiplying together the first $\ff{\log_2 n}{2}$
butterfly factor matrices in the butterfly factorization of $\vB$,
and $\vR$ by multiplying together the last $\ff{\log_2 n}{2}$ butterfly factor matrices.
(We detail this more rigorously in \cref{thm:b_contained}.)

The matrix $\vR$ is block-diagonal with $m = \sqrt{n}$ dense blocks, each block of size
$m \times m$:
$
  \vR = \diag(\vR_1, \dots, \vR_{m}).
$

The matrix $\vL'$ is composed of $m \times m$ blocks of size
$m \times m$, where each block is a diagonal matrix:
\begin{equation*}
  \vL' =
  \begin{bmatrix}
    \vD_{11} & \hdots & \vD_{1m} \\
    \vdots & \ddots & \vdots \\
    \vD_{m1} & \hdots & \vD_{mm} \\
  \end{bmatrix}.
\end{equation*}

The matrix $\vL'$ can also be written as block-diagonal with the same structure as $\vR$
after permuting the rows and columns.
Specifically, let $\vP$ be the permutation of Definition~\ref{def:Monarch}.
We can interpret $\vP$ as follows: it reshapes the vector $x$ of size $n$ as a matrix of size
$m \times m$, transposes the matrix, then converts back into a
vector of size $n$. Note that $\vP = \vP^\top$.
Then we can write
\begin{equation*}
  \vL = \vP \vL' \vP^\top, \quad \text{where } \vL = \diag(\vL_1, \dots, \vL_{m}).
\end{equation*}
Hence, up to permuting rows and columns, $\vL'$ is also a block-diagonal matrix of
$m$ dense blocks, each of size $m \times m$.

Thus we can write $\vB = \vP \vL \vP^\top \vR$,
where $\vL$, $\vR$, and $\vP$ are as in \cref{def:Monarch}.
So, $\vB \in \B\ind{n}$ implies that $\vB \in \M\ind{n}$.

\textbf{Products of Monarch Matrices.}
Another important class of matrices (due to their expressiveness, cf.\ \cref{thm:Monarch_expressiveness})
is the class $\MMS$: matrices that can be written as $\vM_1\vM_2^*$ for some $\vM_1, \vM_2 \in \M$. Further, $(\MMS)^2$ denotes the class of matrices that can be written $\vM_1\vM_2$ for $\vM_1, \vM_2 \in \MMS$.

\textbf{Extension to Rectangular Matrices.}
In practice, we also want a way to parametrize rectangular weight matrices, and to
increase the number of parameters of Monarch matrices to fit different
applications (analogous to the rank parameter in low-rank matrices and the
number of nonzeros in sparse matrices).
We make the simple choice to increase the block size of the block-diagonal
matrices in the Monarch parametrization, and to allow rectangular blocks. More details are in \cref{sec:permutation}.

\subsection{Expressiveness and Efficiency}
\label{subsec:ee}
We remark on the expressiveness of Monarch matrices and their products (ability
to represent many structured transforms), and on their computational and memory efficiency. 
\subsubsection{Expressiveness}
As described in Section \ref{subsec:parametrization}, any matrix $\vB \in \B\ind{n}$ can be written in the Monarch butterfly
representation, by simply condensing the $\log_2 n$ total factors into two matrices.
Thus, the Monarch butterfly representation is strictly more general than the original
butterfly representation (as there also exist matrices in $\M\ind{n}$ but not $\B\ind{n}$).
In other words, for a given size $n$, $\M \supset \B$; similarly $\MMS \supset \BBS$. 
In particular, \citet{dao2020kaleidoscope} showed that the following matrix classes are contained in $\BBS$,
which implies they are in $\MMS$ as well:
\begin{proposition}\label{thm:Monarch_expressiveness}
  The matrix class $\MMS$ can represent convolution, Hadamard
  transform, Toeplitz matrices~\citep{gray2006toeplitz}, and AFDF
  matrices~\citep{moczulski2015acdc}.
  The matrix class $(\MMS)^2$ can represent the Fourier transform, discrete sine and cosine
  transforms (DST/DCT), the $(HD)^3$~\citep{yu2016orthogonal} class,
  Fastfood~\citep{le2013fastfood}, and ACDC matrices~\citep{moczulski2015acdc}.
\end{proposition}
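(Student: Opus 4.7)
The plan is to reduce everything to the corresponding statement for butterfly matrices, which is already known from \citet{dao2020kaleidoscope}, by establishing the chain of containments $\B \subset \M$ (hence $\BBS \subset \MMS$ and $(\BBS)^2 \subset (\MMS)^2$) and then quoting the butterfly expressiveness results transform by transform.

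The first step is to make the inclusion $\B\ind{n} \subseteq \M\ind{n}$ rigorous. For $n$ a power of $4$, the informal argument already sketched in Section~\ref{subsec:parametrization} does the work: given $\vB = \vB_n \vB_{n/2} \cdots \vB_2 \in \B\ind{n}$, set $\vR = \vB_{n/2^{(\log_2 n)/2}} \cdots \vB_2$ (the product of the last $(\log_2 n)/2$ butterfly factor matrices) and $\vL' = \vB_n \cdots \vB_{n/2^{(\log_2 n)/2 - 1}}$ (the product of the first $(\log_2 n)/2$). A direct computation shows $\vR$ is block-diagonal with $m=\sqrt n$ dense $m\times m$ blocks, and $\vL'$ has an $m\times m$ grid of $m\times m$ diagonal blocks, so after conjugating by the stride permutation $\vP$ of \cref{def:Monarch} one gets $\vL = \vP \vL' \vP^\top$ block-diagonal with the same shape as $\vR$. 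Thus $\vB = \vP\vL\vP^\top \vR \in \M\ind{n}$. (For $n$ merely a power of $2$, the same argument goes through after grouping factors into two halves of roughly equal logarithmic depth and mildly generalizing the block shapes, which is exactly the rectangular extension referenced in \cref{sec:permutation}.)

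Given $\B \subseteq \M$, the semigroup-style statements $\BBS \subseteq \MMS$ and $(\BBS)^2 \subseteq (\MMS)^2$ follow immediately from the definitions of those product classes. At this point it suffices to invoke the kaleidoscope expressiveness results of \citet{dao2020kaleidoscope}: they show that convolution, the Hadamard transform, Toeplitz matrices, and AFDF matrices all lie in $\BBS$, and that the Fourier transform, DST, DCT, $(HD)^3$, Fastfood, and ACDC matrices all lie in $(\BBS)^2$. Pulling each of these facts through the two containments yields the two halves of the proposition.

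The only non-routine obstacle is checking that the butterfly-to-Monarch identification of factors really does produce a block-diagonal $\vL$ under the specific permutation $\vP$ defined in \cref{def:Monarch}; everything else is either a definitional unpacking or a citation. I would handle that obstacle by writing out the indices once and observing that $\vP$ is exactly the ``transpose a $\sqrt{n}\times\sqrt{n}$ reshape'' permutation, which is precisely the map that converts the diagonal-block / dense-block pattern of $\vL'$ into the dense-block / diagonal pattern of $\vL$, mirroring the reshape step in the 4-step FFT that motivated the definition in the first place.
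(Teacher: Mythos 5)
Your proposal is correct and follows essentially the same route as the paper: the paper's proof also reduces everything to the containment $\B \subseteq \M$ (established as \cref{thm:b_contained}, via exactly the grouping of butterfly factors you describe), deduces $\BBS \subseteq \MMS$ and $(\BBS)^2 \subseteq (\MMS)^2$, and then cites Appendix J of \citet{dao2020kaleidoscope} for the transform-by-transform memberships. The only cosmetic difference is that for $n$ a power of $2$ but not of $4$ the paper uses the general square block-size parametrization $\M\ind{b,n}$ rather than the rectangular extension you gesture at, but this does not change the argument.
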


\subsubsection{Efficiency}
\textbf{Parameters.} A Monarch matrix $\vM = \vP\vL\vP^\top \vR$ is described by $2 n \sqrt{n}$ parameters:
$\vL, \vR$ both have $\sqrt{n}$ dense blocks of size $\sqrt{n} \times \sqrt{n}$, for a total
parameter count of $n\sqrt{n}$ each. The permutation $\vP$ is \emph{fixed}, and thus doesn't add any parameters.
\textbf{Speed.}
To multiply by $\vM$, we need to multiply by a block diagonal matrix $\vR$, permute,
multiply by a block diagonal matrix $\vL$, and finally permute.
All four of these steps can be implemented efficiently.
The total number of FLOPs is $O(n \sqrt{n})$, which is more the $O(n \log n)$ for
a butterfly matrix.
However, since we can leverage efficient block-diagonal multiplication (e.g.,
batch matrix multiply), Monarch multiplication is easy to
implement and is fast in practice (2x faster than dense multiply, cf.\ \cref{sec:experiments}).

\subsection{Projection on the Set $\M$ of Monarch Matrices}
\label{subsec:projection}

Given our class of structured matrices, a natural question is the
\emph{projection} problem: finding a Monarch matrix that is the closest to a
given dense matrix.
We show that this problem has an analytical optimal solution, and show how to compute it efficiently.
This allows us to project dense models to Monarch models, enabling D2S
fine-tuning (\cref{subsec:finetuning}).

We formalize the problem: for a given matrix $\vA$, find
\begin{equation}
  \label{eq:projection_objective}
  \argmin\limits_{\vM \in \mathcal{M}} \norm{\vA - \vM}^2_F.
\end{equation}

Even though this problem is nonconvex (as $\vM$ is parametrized as the product of
two matrices), in Theorem~\ref{thm:Monarch_projection} we
show that there exists an analytical solution (full proof in~\cref{sec:proofs}).
This is analogous to the Eckart-Young theorem that establishes that
optimal low-rank approximation is obtained from the SVD~\citep{eckart1936approximation}.
\begin{theorem}\label{thm:Monarch_projection}
  Given an $n \times n$ matrix $\vA$, there is an $O(n^{5/2})$-time algorithm
  that optimally solves the projection problem~\eqref{eq:projection_objective},
  and returns the Monarch factors $\vL$ and $\vR$.
\end{theorem}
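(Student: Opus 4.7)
Proof proposal.

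The plan is to turn the nonconvex problem \eqref{eq:projection_objective} into $m^2$ independent rank-$1$ approximation subproblems by a careful reindexing, and then invoke the Eckart-Young theorem on each. Throughout, let $n=m^2$ and identify indices $k \in \{0,\dots,n-1\}$ with pairs $(a,b)\in\{0,\dots,m-1\}^2$ via $k = am+b$, so that the permutation $\vP$ of \cref{def:Monarch} acts as the transposition $(a,b)\mapsto(b,a)$.

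First I would unfold the Monarch product in this coordinate system. Writing the block-diagonal factors as $\vR=\diag(\vR_0,\dots,\vR_{m-1})$ and $\vL=\diag(\vL_0,\dots,\vL_{m-1})$ and pushing the permutations through, a direct computation shows that the entries of $\vM=\vP\vL\vP^\top\vR$ have the product form
\begin{equation*}
  \vM[(i_1,i_2),(j_1,j_2)] \;=\; \vL_{i_2}[i_1,j_1]\,\cdot\,\vR_{j_1}[i_2,j_2],
\end{equation*}
where $(i_1,i_2)$ index rows and $(j_1,j_2)$ index columns. This is the key identity; deriving it cleanly (and being careful about the ordering convention that $\vP$ induces) is the main technical step.

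Given the factorization, I would reshape $\vA$ into $m^2$ blocks indexed by $(i_2,j_1)$: for each such pair, define the $m\times m$ matrix $\vT^{(i_2,j_1)}[i_1,j_2] \defeq \vA[(i_1,i_2),(j_1,j_2)]$. Then the Frobenius error splits as
\begin{equation*}
  \norm{\vA-\vM}_F^2 \;=\; \sum_{i_2,j_1} \sum_{i_1,j_2} \Bigl(\vT^{(i_2,j_1)}[i_1,j_2] - \vL_{i_2}[i_1,j_1]\,\vR_{j_1}[i_2,j_2]\Bigr)^2.
\end{equation*}
The crucial observation is that the parameters decouple across $(i_2,j_1)$: the scalar $\vL_{i_2}[i_1,j_1]$ appears only when the outer indices equal $(i_2,j_1)$, and likewise for $\vR_{j_1}[i_2,j_2]$. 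Hence we have $m^2$ independent problems, the $(i_2,j_1)$-th being the rank-$1$ approximation of $\vT^{(i_2,j_1)}$ by the outer product of $u=\vL_{i_2}[\cdot,j_1]\in\R^m$ and $v=\vR_{j_1}[i_2,\cdot]\in\R^m$.

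By the Eckart-Young theorem, each such subproblem is solved by the top singular pair: if $\vT^{(i_2,j_1)} = \sigma_1 u_1 v_1^\top + \dots$ is the SVD, we set $\vL_{i_2}[\cdot,j_1]\defeq \sqrt{\sigma_1}\,u_1$ and $\vR_{j_1}[i_2,\cdot]\defeq \sqrt{\sigma_1}\,v_1^\top$ (the scalar split between the two factors is arbitrary). Assembling these choices recovers the global minimizer of \eqref{eq:projection_objective}. For complexity, computing all $m^2$ top singular pairs of $m\times m$ matrices costs $O(m^2\cdot m^3)=O(m^5)=O(n^{5/2})$; the reshaping and reassembly are lower order. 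The main conceptual obstacle is identifying the right tensor reshape that exposes the decoupling — once that identity is in hand, the rest is Eckart-Young applied block-by-block.
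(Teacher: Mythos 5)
Your proposal is correct and follows essentially the same route as the paper: reshape $\vA$ and $\vM$ into 4D tensors so that the Monarch structure forces each $m\times m$ slice (indexed by your $(i_2,j_1)$, the paper's $(j,k)$) to be rank one, observe that the Frobenius objective decouples over these slices, and apply Eckart--Young via $m^2$ SVDs of $m\times m$ matrices for a total cost of $O(m^5)=O(n^{5/2})$. Your entrywise identity $\vM[(i_1,i_2),(j_1,j_2)]=\vL_{i_2}[i_1,j_1]\,\vR_{j_1}[i_2,j_2]$ is exactly the paper's $M_{\ell jki}=L_{j\ell k}R_{kji}$ under a renaming of indices, and the split of $\sqrt{\sigma_1}$ between the two factors is an immaterial normalization choice.
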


We now derive this algorithm (\cref{alg:project}) by examining the structure of
a Monarch matrix $\vM$.

We first rewrite the steps of Monarch matrix-vector multiplication (i.e., computing $\vM \vx$).
The main idea is to view the input $\vx$, which is a vector of size $n = m^2$, as a 2D
tensor of size $m \times m$.
Then the two matrices $\vL$ and $\vR$ in the Monarch parametrization $\vM = \vP\vL\vP^\top \vR$ correspond
to batched matrix multiply along one dimension of $\vx$, followed by batched matrix
multiply along the other dimension of $\vx$.
Thus we view $\vx$ as a 2D tensor of size $m \times m$, and each of
$\vL$ and $\vR$ as a 3D tensor of size $m \times m \times m$.

Steps to multiply $\vx$ by a Monarch matrix $\vM = \vP \vL \vP^\top \vR$:
\begin{enumerate}[leftmargin=*,nosep,nolistsep,noitemsep]
  \item Multiply $\vR$ by $\vx$: $y_{kj} = \sum_i R_{kji} x_{ki}$, to obtain an output $\vy$ that is a
  2D tensor of size $m \times m$.
  \item Multiply $\vP \vL \vP^\top$ by $\vy$: $z_{\ell j} = \sum_k L_{j\ell k} y_{kj}$, to obtain an output
  that is a 2D tensor of size $m \times m$.
  \item Reshape $\vz$ back into a vector of size $n$, and return this.
\end{enumerate}
We can thus write the output $\vz$ as $z_{\ell j} = \sum_{k, i} L_{j\ell k} R_{kji} x_{ki}$.
\newline Since $\vM = \vP \vL \vP^\top\vR$, we can write:
\begin{equation}
  \label{eq:b_einsum}
  M_{\ell jki} = L_{j\ell k} R_{kji}.
\end{equation}
Note that here we view $\vM$ as a 4D tensor of size
$m \times m \times m \times m$.

When viewed as a 4D tensor, the structure of the matrix $\vM$ becomes
apparent, and the solution to the projection problem is easy to see.
Let's examine \cref{eq:b_einsum}: $M_{\ell jki} = L_{j\ell k} R_{kji}$.
We see that this reshaped tensor version of $\vM$ is simply $m \cdot m$ batches of
rank-1 matrices: we batch over the dimensions $k$ and $j$, and each batch is
simply a rank-1 matrix $(\vp_{jk}) (\vq_{jk})^\top$ for some length-$m$ vectors $\vp_{jk}, \vq_{jk}$.

Therefore, the projection objective (\cref{eq:projection_objective}) can be broken up
into the sum of $m \cdot m$ independent terms, each term corresponding to a
block of $\vA$ of size $m \times m$.
As the structure of a Monarch matrix forces each
block to have rank 1 as described above,
the solution to the projection problem becomes apparent:
given a matrix $\vA$, reshape it to a 4D tensor of size
$m \times m \times m \times m$, and take the rank-1
approximation of each batch with the SVD, which (after reshaping)
yields the factors $\vL, \vR$ of the desired matrix $\vM \in \mathcal{M}$.
(Note that if $\vA \in \M$ itself, this algorithm recovers the factors such that $\vA = \vP \vL \vP^\top\vR$.)

\begin{algorithm}[H]
  \caption{\label{alg:project}Projection on the set of Monarch matrices}
  \begin{algorithmic}
    \REQUIRE Matrix $\vA \in \R^{n \times n}$, with $n = m^2$.
    \STATE Reshape $\vA$ into a 4D tensor $\widetilde{\vA}$ of size $m \times m \times m \times m$, where
    $\widetilde{\vA}_{\ell jki} = \vA_{(\ell - 1) m + j, (k-1)m + i}$ for $\ell, j, k, i = 1, \dots, m$.\\
    \FOR{$1 \le j, k \le m$}
    \STATE Let $\widetilde{\vM}_{jk} = \widetilde{\vA}_{:, j, k, :}$ of size $m \times m$.
    \STATE Compute the best rank-1 approximation of $\widetilde{\vM}_{jk}$ as $\vu_{jk} \vv_{jk}^\top$ with the SVD of $\widetilde{\vA}$.
    \ENDFOR
    \STATE Let $\widetilde{\vR}$ be the $m \times m \times m$ tensor where $\widetilde{\vR}_{kji} = (\vv_{jk})_i$.
    \STATE Let $\widetilde{\vL}$ be the $m \times m \times m$ tensor where $\widetilde{\vL}_{j \ell k} = (\vu_{jk})_\ell$.
    \STATE Return $\widetilde{\vL}$, $\widetilde{\vR}$ as block-diagonal matrices $\vL, \vR$ (where the $b^{th}$ block of $\vL,\vR$ are $\widetilde{\vL}_{b,:,:}$, $\widetilde{\vR}_{b,:,:}$ respectively)
  \end{algorithmic}
\end{algorithm}

\subsection{Factorization of $\MMS$ Matrices}
\label{subsec:recovery}
In the previous section, we saw how to project onto the set $\M$.
As Theorem~\ref{thm:Monarch_expressiveness} shows, the broader class $\MMS$ also encompasses many important linear transforms.
In this section, we present an algorithm to compute the Monarch factorization of a given matrix $\vM \in \MMS$, under mild assumptions.
This allows us to store and apply $\vM$ efficiently.

\begin{figure}[t]
  \centering
  \includegraphics[width=.4\textwidth]{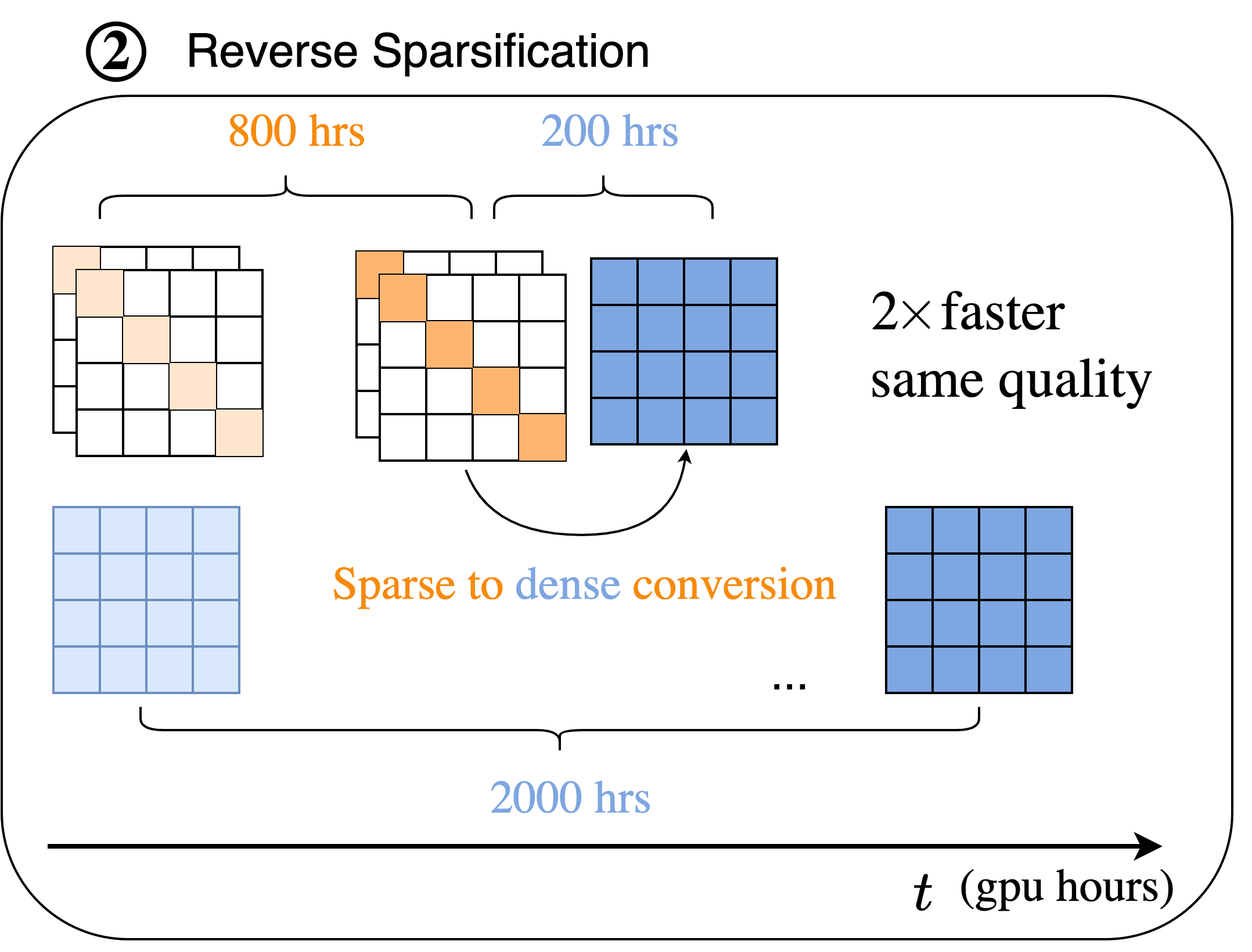}
  \vspace{-0.5em}
  \caption{\label{fig:reverse_sparsification}With the ``reverse sparsification'' process, Monarch matrices can speed up GPT-2 training by 2x.}
  \vspace{-0.5em}
\end{figure}

Specifically, observe that if $\vM \in \MMS$, we can write $\vM = (\vP\vL \vP^\top \vR) (\vR'^* \vP\vL'^* \vP^\top) = (\vP\vL_1\vP^\top) \vR (\vP\vL_2\vP^\top)$ for block-diagonal $\vL_1,\vL_2,\vR$ and the permutation $\vP$ of \cref{def:Monarch}.
Then, we can compute $\vL_1,\vL_2,\vR$ in such a factorization under Assumption \ref{assump:a1}, as stated in \cref{thm:Monarch_recovery}.  (Note that the factorization is not unique.)
\begin{assumption}
\label{assump:a1}
Assume that (1) $\vM \in \MMS$ is invertible and (2) $\vM$ can be written as $(\vP\vL_1\vP^\top)\vR(\vP\vL_2\vP^\top)$ where the blocks of $\vR$ have no zero entries.%
\end{assumption}%

\begin{theorem}\label{thm:Monarch_recovery}
Given an $n \times n$ matrix $\vM \in \MMS$ satisfying Assumption \ref{assump:a1}, there is an $O(n^{5/2})$-time algorithm to find its Monarch factors $\vL_1, \vR, \vL_2$.
\end{theorem}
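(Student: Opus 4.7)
My plan is to exploit the same 4D-tensor viewpoint that drove the projection algorithm in \cref{subsec:projection} and reduce recovery to an eigenvalue problem. Expanding $\vM=(\vP\vL_1\vP^\top)\vR(\vP\vL_2\vP^\top)$ in coordinates with indices written as pairs $(\ell,j),(k,i)\in\{1,\dots,m\}^2$ yields
\[
  \vM_{(\ell,j),(k,i)} \;=\; \sum_{k'=1}^{m} (L_1^{(j)})_{\ell,k'}\,(R^{(k')})_{j,i}\,(L_2^{(i)})_{k',k},
\]
where $L_1^{(j)},R^{(k)},L_2^{(i)}$ denote the $m\times m$ blocks of $\vL_1,\vR,\vL_2$. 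Consequently the slice $\widetilde{\vM}_{j,i}\in\R^{m\times m}$ defined by $(\widetilde{\vM}_{j,i})_{\ell,k}=\vM_{(\ell,j),(k,i)}$ factors cleanly:
\[
  \widetilde{\vM}_{j,i} \;=\; L_1^{(j)}\,D_{j,i}\,L_2^{(i)},\qquad D_{j,i}\defeq\diag\!\bigl((R^{(1)})_{j,i},\ldots,(R^{(m)})_{j,i}\bigr).
\]
Assumption \ref{assump:a1} makes each of $\widetilde{\vM}_{j,i}$, $L_1^{(j)}$, $L_2^{(i)}$, and $D_{j,i}$ invertible.

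Next, for each $j\ge 2$ I would form the matrix
\[
  C_j \defeq \widetilde{\vM}_{j,2}\,\widetilde{\vM}_{1,2}^{-1}\,\widetilde{\vM}_{1,1}\,\widetilde{\vM}_{j,1}^{-1}.
\]
A telescoping cancellation (every $L_2^{(\cdot)}$ meets its inverse, and so does the middle pair of $L_1^{(1)}$'s) gives $C_j = L_1^{(j)}\,\Lambda_j\,(L_1^{(j)})^{-1}$ with diagonal $\Lambda_j = D_{j,2}D_{1,2}^{-1}D_{1,1}D_{j,1}^{-1}$. Under the mild genericity condition that $\Lambda_j$ has simple spectrum, diagonalizing $C_j$ recovers $L_1^{(j)}$ up to column permutation and column scaling; if eigenvalues collide, one instead jointly diagonalizes the commuting family $\{\widetilde{\vM}_{j,i_2}\widetilde{\vM}_{1,i_2}^{-1}\widetilde{\vM}_{1,i_1}\widetilde{\vM}_{j,i_1}^{-1}\}_{i_1,i_2}$, all of which share $L_1^{(j)}$ as an eigenbasis. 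Running this once at a chosen $j_0\ge 2$ fixes a global column ordering.

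The remaining factors are then propagated linearly. Using $\widetilde{\vM}_{j,1}\widetilde{\vM}_{1,1}^{-1}=L_1^{(j)}D_{j,1}D_{1,1}^{-1}(L_1^{(1)})^{-1}$ at $j=j_0$ reads off $L_1^{(1)}$ (up to column scaling), and at every other $j$ gives $L_1^{(j)}\propto\widetilde{\vM}_{j,1}\widetilde{\vM}_{1,1}^{-1}L_1^{(1)}$, so all blocks of $\vL_1$ inherit the same column indexing. A symmetric use of $(L_1^{(1)})^{-1}\widetilde{\vM}_{1,i}=D_{1,i}L_2^{(i)}$ recovers every $L_2^{(i)}$ with matching row scaling. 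Finally, $D_{j,i}=(L_1^{(j)})^{-1}\widetilde{\vM}_{j,i}(L_2^{(i)})^{-1}$ is automatically diagonal, and reading off its $k$-th entry yields $(R^{(k)})_{j,i}$; the leftover diagonal scaling gauge is absorbed into $\vR$, matching the non-uniqueness already noted in the theorem.

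For runtime, slicing costs $O(n^2)$, the one eigendecomposition costs $O(m^3)=O(n^{3/2})$, and the dominant step---the $m^2$ triple products of $m\times m$ matrices used to extract all $D_{j,i}$---totals $O(m^5)=O(n^{5/2})$, matching the theorem. The step I expect to be the main obstacle is the spectral simplicity of $\Lambda_j$: Assumption \ref{assump:a1} only forbids zero entries in the blocks of $\vR$, so distinct eigenvalues are not automatic. The natural fixes are either to pass to the commuting family over all $(i_1,i_2)$---whose joint eigenbasis is degenerate only in truly pathological cases---or to treat ``no eigenvalue collisions'' as the additional mild assumption the theorem already invites.
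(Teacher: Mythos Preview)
Your approach is very close to the paper's: both reduce to the block identities $\widetilde{\vM}_{j,i}=L_1^{(j)}D_{j,i}L_2^{(i)}$ with diagonal $D_{j,i}$, form telescoping products of $\widetilde{\vM}$-blocks that are conjugate to a diagonal matrix, and recover one factor by (simultaneous) diagonalization before propagating the rest linearly. The difference is which family you diagonalize, and that difference creates a real gap.

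The paper forms $\vF(i,j)=\widetilde{\vM}_{i1}^{-1}\widetilde{\vM}_{ij}\widetilde{\vM}_{1j}^{-1}\widetilde{\vM}_{11}$ for \emph{all} pairs $(i,j)$; every $\vF(i,j)$ is conjugated by the \emph{same} matrix $\vC_1$. It then takes $\hat{\vC}_1$ to be \emph{any} simultaneous diagonalizer of this whole family and shows directly that the resulting $\hat{\vD}_{ij}$ equals $\hat{\vC}_1\vF(i,j)\hat{\vC}_1^{-1}$, which is diagonal \emph{by construction}. No genericity beyond Assumption~\ref{assump:a1} is needed. Your fallback family, by contrast, fixes the row index $j_0$ and varies only $(i_1,i_2)$; those matrices only involve rows $1$ and $j_0$ of each $R^{(k)}$. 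If two blocks $R^{(p)},R^{(q)}$ happen to agree on rows $1$ and $j_0$ but differ on some other row $j$---a situation Assumption~\ref{assump:a1} does not exclude---then every matrix in your family has equal $(p,p)$ and $(q,q)$ eigenvalues, so the recovered $\hat{L}_1^{(j_0)}$ is only determined up to an arbitrary $2\times 2$ block mixing those coordinates. After propagation one computes
\[
\hat{D}_{j,i}=(\hat{L}_1^{(j_0)})^{-1}\,\widetilde{\vM}_{j_0,1}\widetilde{\vM}_{j,1}^{-1}\widetilde{\vM}_{j,i}\widetilde{\vM}_{1,i}^{-1}\widetilde{\vM}_{1,1}\widetilde{\vM}_{j_0,1}^{-1}\,\hat{L}_1^{(j_0)},
\]
and the inner six-factor product is $L_1^{(j_0)}\cdot(\text{diag involving row }j)\cdot(L_1^{(j_0)})^{-1}$; since that diagonal need not agree on entries $p$ and $q$, $\hat{D}_{j,i}$ fails to be diagonal. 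So ``no eigenvalue collisions'' is not a harmless side condition here---it is exactly what your argument needs but the theorem does not assume.

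The fix is the paper's: enlarge the family you simultaneously diagonalize so that it ranges over the row index as well (e.g., include $\widetilde{\vM}_{j_0,i}\widetilde{\vM}_{j,i}^{-1}\widetilde{\vM}_{j,i'}\widetilde{\vM}_{j_0,i'}^{-1}$ or, equivalently, just use the $m^2$ six-factor expressions above). Then any simultaneous diagonalizer makes every $\hat{D}_{j,i}$ diagonal automatically, and the $O(n^{5/2})$ bound still holds since you diagonalize $O(m^2)$ matrices of size $m\times m$.
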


\mbox{To understand how to do this, define $\tilde{\vM} = \vP^\top \vM\vP$} \mbox{and observe that $\tilde{\vM}\, = \, \vL_1 (\vP\vR\vP^\top) \vL_2 \, =$} \newline
{
\setstretch{0.67}
\setlength\arraycolsep{1.2pt}
${}\hspace{-0.35em}\small \lt \begin{array}{ccccc} \vA_1 \\ & \vA_2 \\ & & \ddots \\ & & & \vA_{m} \end{array}\rt
\lt \begin{array}{cccc} \vD_{11} & \vD_{12} & \dots & \vD_{1m} \\ \vD_{21} & \vD_{22} & \dots & \vD_{2m} \\ \ddots & \ddots & \ddots & \ddots  \\ \vD_{m1} & \vD_{m2} & \dots & \vD_{mm} \end{array}\rt
\lt \begin{array}{ccccc} \vC_1 \\ & \vC_2 \\ & & \ddots \\ & & & \vC_{m} \end{array}\rt$
}

where $m = \sqrt{n}$, the $\vA_i$'s and $\vC_j$'s denote the $m \times m$ diagonal blocks of $\vL_1,\vL_2$ respectively, and each $\vD_{ij}$ is an $m \times m$ diagonal matrix. If we write $\MM$ as a block matrix with $m \times m$ blocks each of size $m \times m$, then we see that the block $\MM_{ij}$ is equal to $\vA_i \vD_{ij} \vC_j$. Notice that $\vM$ is invertible only if all the $\vA_i$'s and $\vC_j$'s are (since if any one of these is singular, then $\vL_1$ or $\vL_2$ is singular).

Thus, our goal is to find matrices $\hat{\vA}_1,\dots,\hat{\vA}_m, \hat{\vC}_1,\dots,\hat{\vC}_m$ and \emph{diagonal} matrices $\hat{\vD}_{11},\dots,\hat{\vD}_{mm}$ such that $\MM_{ij} = \hat{\vA}_i \hat{\vD}_{ij} \hat{\vC}_j$ for all $i,j$; this represents a valid Monarch factorization of $\vM$.

To provide intuition for how to do this, let's analyze a simple case in which all the $\vD_{ij}$'s are the identity matrix. Then we have the set of equations $\vA_i \vC_j = \MM_{ij}$. Again assume the $\vA_i$'s and $\vC_j$'s are invertible, so each $\MM_{ij}$ is as well. Suppose we set $\hat{\vC}_1 = \vI$ (identity matrix). Then we can immediately read off $\hat{\vA}_i = \MM_{i1}$ for all $i$. We can then set $\hat{\vC}_j = \hat{\vA}_1^{-1}\MM_{1j}$ for all $j$.
Let's now check that this strategy gives a valid factorization, i.e., that $\MM_{ij} = \hat{\vA}_i \hat{\vC}_j$ for all $i,j$. We have $\hat{\vA}_i \hat{\vC}_j = \MM_{i1} \MM_{11}^{-1} \MM_{1j}$. Recalling that in the ``true'' factorization we have $\MM_{ij} = \vA_i \vC_j$, this equals $(\vA_i \vC_1) (\vA_1 \vC_1)^{-1} (\vA_1 \vC_j) = \vA_i \vC_j$, as desired.

In the general case, we must deal with the diagonal $\vD_{ij}$ matrices as well. We will no longer be able to freely set $\hat{\vC}_1 = \vI$. However, once we find a proper choice of $\hat{\vC}_1$, we can use it to find all the $\hat{\vA}_i$'s and $\hat{\vC}_j$'s. We can find such a $\hat{\vC}_1$ via the idea of \emph{simultaneous diagonalization}; for space reasons,
we defer a full description of our algorithm (\cref{alg:mm_recovery}), and its analysis, to Appendix \ref{sec:proofs}.

\begin{figure}[t]
  \centering
  \includegraphics[width=.45\textwidth]{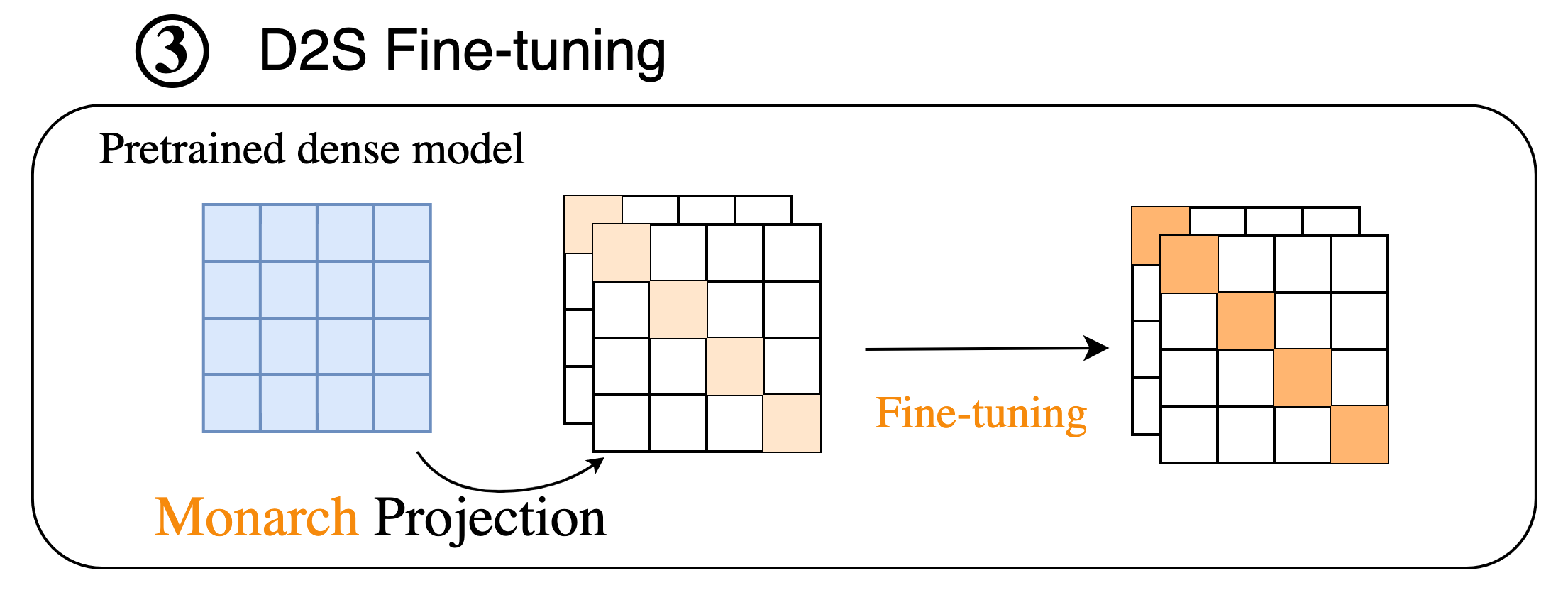}
  \vspace{-1em}
  \caption{\label{fig:monarch_projection}With~\cref{alg:project} for our Monarch parameterization, we can convert a pretrained model into a model with Monarch weight matrices and speed up downstream fine-tuning.}
  \vspace{-1.0em}
\end{figure}

\section{Using Monarch Matrices in Model Training}
\label{sec:method}

We can use our class of Monarch matrices to parameterize weight
matrices of deep learning models in several settings.

\begin{itemize}[leftmargin=*,nosep,nolistsep]
  \item In the \textbf{E2E sparse training} setting, we replace the dense weight
  matrices of a baseline model with Monarch matrices with the same dimension,
  initialize them randomly, and train as usual.
  Most of our baseline models are Transformers, and we replace the projection
  matrices in the attention blocks, along with the weights of the feed-forward
  network (FFN) blocks, with Monarch matrices.
  The Monarch parameterization is differentiable, and we rely on
  autodifferentiation to train with first-order methods such as Adam~\citep{kingma2014adam}.

  \item In the \textbf{S2D training} setting, we first replace the dense weight
  matrices of a baseline model with Monarch matrices, then train the sparse
  model for about 90\% of the usual number of iterations.
  We then convert the Monarch matrices to dense matrices (by simply multiplying
  the factors $L$ and $R$ along with permutations), and continue training for
  the remaining 10\% of the iterations.
  Compared to dense end-to-end training, we train for the same number of
  iterations, but the first 90\% of the iterations are faster due to the
  hardware efficiency of Monarch matrices.

  \item In the \textbf{D2S fine-tuning} setting, we start with a dense
  pretrained model (e.g., BERT), and project the dense weight matrices (e.g., in
  the attention blocks and FFN blocks) on the set of Monarch matrices using the
  algorithm in \cref{subsec:projection}.
  We then fine-tune the resulting model on downstream tasks (e.g., GLUE), using
  first-order methods.
\end{itemize}
We typically set the number of blocks in the block-diagonal matrices to be between 2 and 4 based on the parameter budgets (25\% -- 50\% of the dense model).

\section{Experiments}
\label{sec:experiments}

We validate our approach empirically, showing that our Monarch matrix parametrization achieves a favorable efficiency--accuracy tradeoff compared to baselines on a wide range of domains (text, images, PDEs, MRI), in three settings (E2E training, S2D training, and D2S fine-tuning):
\begin{itemize}[leftmargin=*,nosep,nolistsep,noitemsep]
\item
In \cref{subsec:benchmark_tasks}, on image classification and language modeling benchmarks, such as ViT / MLP Mixer on ImageNet and GPT-2 on Wikitext-103, Monarch is 2$\times$ faster to train than dense models, while achieving the same accuracy / perplexity. In \cref{subsec:pde_mri}, in scientific and medical domains where special transforms (Fourier) are common, Monarch outperforms Fourier transform based methods on PDE solving, with up to 40\% lower error, and on MRI reconstruction attains up to 15\% higher pSNR and 3.8\% higher SSIM.
\item In \cref{subsec:pde_mri}, we show that on the large OpenWebText dataset, reverse sparsification (training with Monarch weight matrices for most of the time, then transitioning to dense weight matrices) speeds up the pretraining of GPT-2 models by 2$\times$ compared to the dense model, with no loss in upstream or downstream quality.
Moreover, reverse sparsification speeds up BERT pretraining by 23\% even compared to the implementation from Nvidia that set the MLPerf~\citep{mattson2020mlperf} 1.1 record.
\item In \cref{subsec:finetuning}, as a proof of concept, we demonstrate that our Monarch approximation algorithm can improve fine-tuning efficiency for pretrained models. We show that compressing BERT to a Monarch matrix model performs comparably to a finetuned dense model on GLUE, with 2$\times$ fewer parameters and 1.7$\times$ faster finetuning speed.
\end{itemize}

\subsection{End-to-End Training}
\label{subsec:e2e_training}
\subsubsection{Benchmark Tasks: Image Classification, Language Modeling}
\label{subsec:benchmark_tasks}

We show that replacing dense matrices with Monarch matrices in ViT, MLP-Mixer, and
GPT-2 can speed up training by up to 2$\times$ without sacrificing model quality in~\cref{table:pretrain,table:gpt_pretrain}.

\textbf{Setup.} We use the popular vision benchmark, ImageNet~\citep{deng2009imagenet}. We choose recent popular Vision Transformer~\citep{dosovitskiy2020image}, and MLP-Mixer~\citep{tolstikhin2021mlp} as representative base dense models.
For language modeling, we evaluate GPT-2~\citep{radford2019language} on WikiText-103~\citep{merity2016pointer}.

\begin{table}[h]
  \small
  \centering
  \vspace{-2mm}
  \caption{\label{table:pretrain}The performance of Monarch matrices and ViT / MLP-Mixer on ImageNet, including the number of parameters and FLOPs. We measure the Top-1 accuracy and the training time speedup compared to the corresponding dense model. %
  \vspace{2mm}
  }
  \iftoggle{arxiv}{}{
  \resizebox{\linewidth}{!}
  }
  {
  \setlength{\tabcolsep}{3pt}
  \vspace{3em}
  \begin{tabular}{@{}c||ccccccc@{}}
  \specialrule{.15em}{.05em}{.05em}
    Model&\multicolumn{1}{c}{ImageNet acc.}&\multicolumn{1}{c}{Speedup} &\multicolumn{1}{c}{Params} & \multicolumn{1}{c}{FLOPs} \\
    \specialrule{.15em}{.05em}{.05em}
    Mixer-S/16& 74.0& - & 18.5M & 3.8G \\
    Monarch-Mixer-S/16& 73.7& 1.7$\times$ & 7.0M & 1.5G \\
    Mixer-B/16& 77.7& - & 59.9M & 12.6G \\
    Monarch-Mixer-B/16& 77.8& 1.9$\times$ & 20.9M & 5.0G \\
    \specialrule{.15em}{.05em}{.05em}
    ViT-S/16& 79.4 & - & 48.8M & 9.9G \\
    Monarch-ViT-S/16& 79.1 & 1.9$\times$ & 19.6M & 3.9G \\
    ViT-B/16& 78.5 & - & 86.6M  & 17.6G \\
    Monarch-ViT-B/16& 78.9 & 2.0$\times$ & 33.0M & 5.9G \\
    \specialrule{.15em}{.05em}{.05em}
  \end{tabular}
  }
\end{table}

\begin{table}[h]
  \small
  \centering
  \vspace{-3mm}
  \caption{\label{table:gpt_pretrain} Performance of Monarch matrices and GPT-2-Small/Medium on WikiText-103, including the \# of parameters and FLOPs. Monarch achieves similar perplexity (ppl) but 2.0$\times$ faster.}
  \vspace{1mm}
  \iftoggle{arxiv}{}{
    \resizebox{0.95\linewidth}{!}
  }
  {
\setlength{\tabcolsep}{5pt}
\begin{tabular}{c||cccc}
\specialrule{.15em}{.05em}{.05em}
\multirow{1}{*}{{ Model} } & \multicolumn{1}{c}{\multirow{1}{*}{PPL}}
                              & \multicolumn{1}{c}{\multirow{1}{*}{Speedup}}
                              & \multicolumn{1}{c}{\multirow{1}{*}{Params}}
                              & \multicolumn{1}{c}{\multirow{1}{*}{FLOPs}}\\
\specialrule{.15em}{.05em}{.05em}
GPT-2-Small &  20.6 & - & 124M& 106G\\
Monarch-GPT-2-Small& 20.7  & 1.8$\times$ &72M & 51G\\
\specialrule{.15em}{.05em}{.05em}
GPT-2-Medium &  20.9 & - & 355M& 361G\\
Monarch-GPT-2-Medium& 20.3  & 2.0$\times$ &165M & 166G\\
\specialrule{.15em}{.05em}{.05em}
\end{tabular}
}
\vspace{-2mm}
\end{table}

\subsubsection{PDE solving and multi-coil MRI reconstruction}
\label{subsec:pde_mri}

Many scientific or medical imaging tasks rely on specialized transforms such as the
Fourier transform.
We show that replacing the fixed Fourier transform with the more expressive
Monarch matrices yields higher model quality (lower reconstruction error) with
comparable model speed.

\textbf{Solving PDEs with Monarch Neural Operators.}
We follow the experimental setting in FNO~\citep{li2020fourier} and apply a Monarch--based neural operator to the task of solving the Navier--Stokes PDE. Compared to baseline U-Nets~\citep{ronneberger2015u}, TF-Nets~\citep{wang2020towards}, ResNets~\citep{he2016deep} and FNOs~\cite{li2020fourier}, neural operators based on Monarch improve solution accuracy across spatial resolutions by up to $40\%$ (Table \ref{table:pde}).

\paragraph{Non-periodic boundary conditions.} Traditional spectral methods based on Fourier transform work best with periodic boundary conditions and forcing terms. However, PDEs of practical interest often exhibit non--periodic or even unknown boundary conditions. Monarch operators are not constrained to the Fourier transform and can thus still learn the solution operator with excellent accuracy.

\begin{table}[h!] 
\scriptsize
\vspace{-4mm}
\caption{\label{table:pde}Benchmarks on Navier-Stokes (fixing resolution 64 × 64 for both training and testing).
Decreasing the viscosity coefficient $\nu$ makes the dynamics more chaotic.
}
\vspace{1mm}
\centering
\iftoggle{arxiv}{}{
  \resizebox{0.9\linewidth}{!}
}
{
\renewcommand{\arraystretch}{1}
\begin{tabular}{ c||ccc }
\specialrule{.15em}{.05em}{.05em}
Model & $v = 10^{-3}$  &  $v = 10^{-4}$ & $v = 10^{-5}$\\
\specialrule{.15em}{.05em}{.05em}
U-Net & 0.025  & 0.205  &   0.198\\
TF-Net  & 0.023  & 0.225 &  0.227 \\
ResNet & 0.070 &  0.287 &  0.275 \\
FNO & 0.017  & 0.178 & 0.155\\
Monarch-NO & \textbf{0.010} & \textbf{0.145} & \textbf{0.136} \\
\specialrule{.15em}{.05em}{.05em}
\end{tabular}
}
\textbf{\vspace{-3mm}}
\end{table}

\textbf{Accelerated MRI Reconstruction.} We characterize the utility of Monarch-based FFT operations for accelerated MRI reconstruction, a task which requires methods with both structured Fourier operators and dealiasing properties to recover high quality images. On the clinically-acquired 3D MRI SKM-TEA dataset \citep{desai2021skm}, Monarch-SENSE (mSENSE) enhances image quality by over 1.5dB pSNR and 2.5\% SSIM compared to zero-filled SENSE and up to 4.4dB and 3.8\% SSIM compared to U-Net baselines in data-limited settings. Setup details are available in~\cref{sec:experiment_details_mri}.

\paragraph{Expressive FFT.} By definition, standard IFFT in zero-filled SENSE cannot dealias the signal, resulting in artifacts in the reconstructed image. mSENSE replaces the inverse FFT (IFFT) operation in standard SENSE with learnable Monarch matrices. Thus, mSENSE preserves the structure of the Fourier transform while learning to reweight frequencies to suppress aliasing artifacts. Across multiple accelerations, mSENSE achieved up to +1.5dB and 2.5\% improvement in peak signal-to-noise ratio (pSNR) and structural similarity (SSIM), respectively (Table~\ref{table:mri}).

\paragraph{Data Efficiency.} While CNNs have shown promise for MRI reconstruction tasks, training these networks requires extensive amounts of labeled data to avoid overfitting. However, large data corpora are difficult to acquire in practice. mSENSE can be trained efficiently with limited supervised examples. In few shot settings, mSENSE can outperform U-Net by +4.4dB ($\approx$15\%) and 3.8\% SSIM (Table~\ref{table:mri-data-limited}).

\begin{table}[h!] 
\scriptsize
\vspace{-3mm}
\caption{\label{table:mri}Mean $\pm$ standard error of the mean of conventional and Monarch-SENSE (mSENSE) on dual-echo (E1,E2) MRI reconstruction at multiple acceleration factors (Acc.).
}
\vspace{1mm}
\centering
\iftoggle{arxiv}{}{
  \resizebox{\linewidth}{!}
}
{
\renewcommand{\arraystretch}{1.2}
\begin{tabular}{c||ccccc}
\specialrule{.15em}{.05em}{.05em}
  & & \multicolumn{2}{c}{pSNR (dB) ($\uparrow$)} & \multicolumn{2}{c}{SSIM ($\uparrow$)} \\
  Acc. & Model &             E1 &             E2 &                E1 &                E2 \\
\specialrule{.15em}{.05em}{.05em}
\multirow{2}{*}{2} & SENSE &  32.8$\pm$0.2 &  35.4$\pm$0.2 &  0.871$\pm$0.003 &  0.865$\pm$0.003 \\
  & mSENSE &  \textbf{34.3$\pm$0.2} &  \textbf{36.6$\pm$0.2} &  \textbf{0.886$\pm$0.002} &  \textbf{0.882$\pm$0.003} \\
\specialrule{.15em}{.05em}{.05em}
\multirow{2}{*}{3} & SENSE &  30.9$\pm$0.2 &  33.5$\pm$0.2 &  0.819$\pm$0.004 &  0.795$\pm$0.004 \\
  & mSENSE &  \textbf{32.3$\pm$0.2} &  \textbf{34.6$\pm$0.2} &  \textbf{0.843$\pm$0.003} &  \textbf{0.820$\pm$0.004} \\
\specialrule{.15em}{.05em}{.05em}
\multirow{2}{*}{4} & SENSE &  30.1$\pm$0.2 &  32.8$\pm$0.2 &  0.789$\pm$0.004 &  0.753$\pm$0.005 \\
  & mSENSE &  \textbf{31.2$\pm$0.2} &  \textbf{33.5$\pm$0.2} &  \textbf{0.812$\pm$0.003} &  \textbf{0.767$\pm$0.005} \\
\specialrule{.15em}{.05em}{.05em}
\end{tabular}
}
\end{table}

\begin{table}[h!] 
\scriptsize
\vspace{-5mm}
\caption{\label{table:mri-data-limited}Impact of number of training examples ($N$) on dual-echo MRI reconstruction at 2x acceleration.
}
\vspace{1mm}
\centering
\iftoggle{arxiv}{}{
  \resizebox{\linewidth}{!}
}
{
\renewcommand{\arraystretch}{1.2}
\begin{tabular}{c||ccccc}
\specialrule{.15em}{.05em}{.05em}
  &  & \multicolumn{2}{c}{pSNR (dB) ($\uparrow$)} & \multicolumn{2}{c}{SSIM ($\uparrow$)} \\
  $N$ & Model &            E1 &            E2 &               E1 &               E2 \\
\specialrule{.15em}{.05em}{.05em}
N/A & SENSE &  32.8$\pm$0.2 &  35.4$\pm$0.2 &  0.871$\pm$0.003 &  0.865$\pm$0.003 \\
\specialrule{.15em}{.05em}{.05em}
\multirow{2}{*}{1} & U-Net &  29.4$\pm$0.2 &  34.4$\pm$0.3 &  0.848$\pm$0.004 &  0.857$\pm$0.004 \\
  & mSENSE &  \textbf{33.8$\pm$0.2} &  \textbf{36.0$\pm$0.2} &  \textbf{0.886$\pm$0.003} &  \textbf{0.867$\pm$0.003} \\
\specialrule{.15em}{.05em}{.05em}
\multirow{2}{*}{2} & U-Net &  29.9$\pm$0.3 &  35.1$\pm$0.3 &  0.858$\pm$0.003 &  0.871$\pm$0.003 \\
  & mSENSE &  \textbf{34.0$\pm$0.2} &  \textbf{36.4$\pm$0.2} &  \textbf{0.883$\pm$0.002} &  \textbf{0.877$\pm$0.003} \\
\specialrule{.15em}{.05em}{.05em}
\multirow{2}{*}{3} & U-Net &  31.0$\pm$0.3 &  35.2$\pm$0.3 &  0.866$\pm$0.003 &  0.867$\pm$0.004 \\
  & mSENSE &  \textbf{33.9$\pm$0.2} & \textbf{ 36.5$\pm$0.2} &  \textbf{0.882$\pm$0.002} & \textbf{0.878$\pm$0.003} \\
\specialrule{.15em}{.05em}{.05em}
\multirow{2}{*}{5} & U-Net &  31.4$\pm$0.3 &  35.6$\pm$0.2 &  0.877$\pm$0.002 &  0.870$\pm$0.003 \\
  & mSENSE &  \textbf{33.9$\pm$0.2} &  \textbf{36.5$\pm$0.2} &  \textbf{0.881$\pm$0.002} &  \textbf{0.877$\pm$0.003} \\
\specialrule{.15em}{.05em}{.05em}
\end{tabular}
}
\end{table}

\subsection{Sparse-to-Dense Training (reverse sparsification)}
\label{subsec:s2d_training}
\paragraph{GPT-2 pretraining.}
On the large OpenWebtext dataset~\citep{Gokaslan2019OpenWeb}, we train a GPT-2 model with Monarch weight
matrices for 90\% of the training iterations, then relax the constraint on the
weight matrices and train them as dense matrices for the remaining 10\% of the
iterations.
We call this technique ``reverse sparsification.''
Previous sparse training techniques often don't speed up training, whereas our
hardware-efficient Monarch matrices do.
Therefore we can use them as an intermediate step to pretrain a large language
model (GPT-2) in 2$\times$ less time. We also evaluate its downstream quality on zero-shot generation from~\citep{eval-harness} and classification tasks from~\citep{zhao2021calibrate}, achieving comparable performance to the dense counterparts (\cref{table:gpt_finetune}). 

\begin{table}[h]
  \small
  \centering
  \vspace{-3mm}
  \caption{\label{table:gpt_finetune}The performance (accuracy) of GPT-2-medium trained with Monarch reverse sparsification and with conventional dense training on text classification benchmarks.}
  \setlength{\tabcolsep}{5pt}
  \vspace{1em}
  \iftoggle{arxiv}{}{
    \resizebox{\linewidth}{!}
  }
  {
  \begin{tabular}{@{}c||ccc@{}}
    \specialrule{.15em}{.05em}{.05em}
    Model&\multicolumn{1}{c}{OpenWebText (ppl)}&\multicolumn{1}{c}{Speedup}& \multicolumn{1}{c}{Classification (avg acc)} \\
    \specialrule{.15em}{.05em}{.05em}
    GPT-2m& 18.0 & - & 38.9 \\
    Monarch-GPT-2m& 18.0 & 2$\times$ & 38.8 \\
    \specialrule{.15em}{.05em}{.05em}
  \end{tabular}
  }
  \vspace{-3mm}
\end{table}

In \cref{fig:reverse_sparsification_bar}, we show the training time of the dense GPT-2 model, along with
the Monarch GPT-2 model.
After training the Monarch model for 90\% of the time, in the
last 10\% of the training steps, by transitioning to dense weight matrices, the model is able to reach the same 
performance of another model that was trained with dense weight matrices from
scratch.
By training with Monarch matrices for 90\% of the time, we reduce the total training time by 2$\times$.

\paragraph{BERT pretraining.}
On the Wikipedia + BookCorpus datasets~\citep{zhu2015aligning}, we train a BERT-large model with Monarch weight matrices for 70\% of the time and transition to dense weight matrices for the remaining 30\% of the time, which yields the same pretraining loss as conventional dense training.
In \cref{table:bert_speed}, we compare the total training time to several baseline implementations: the widely-used implementation from HuggingFace~\citep{wolf-etal-2020-transformers}, the more optimized implementation from Megatron~\citep{shoeybi2019megatron}, and the most optimized implementation we know of from Nvidia that was used to set MLPerf 1.1 training speed record. Our method is 3.5x faster than HuggingFace and 23\% faster than Nvidia's MLPerf 1.1 implementation\footnote{Our result is not an official MLPerf submission. We train BERT for both phase 1 (sequence length 128) and phase 2 (sequence length 512) according to the standard BERT training recipe\cite{devlin2018bert}, while MLPerf only measures training time for phase 2.}.
Experiment details are in~\cref{subsec:bert_details}.

\begin{table}[h]
  \small
  \centering
  \caption{\label{table:bert_speed}The total training time of BERT-large trained with Monarch reverse sparsification and with conventional dense training on 8 A100-40GB GPUs (DGX A100). Training consists of two phases, phase 1 with sequence length 128 and phase 2 with sequence length 512. Monarch training is 3.5x faster than HuggingFace and 23\% faster than Nvidia's MLPerf 1.1 implementation.}
  \vspace{1em}
  \iftoggle{arxiv}{}{
    \resizebox{\linewidth}{!}
  }
  {
    \begin{tabular}{@{}c||c@{}}
      Implementation & Training time (h)  \\ \hline
      HuggingFace &  84.5 \\
      MegaTron & 52.5 \\
      Nvidia MLPerf 1.1 & 30.2 \\
      Nvidia MLPerf 1.1 + DeepSpeed & 29.3 \\
      Monarch (ours) & \textbf{23.8} \\
    \end{tabular}
  }
  \vspace{-3mm}
\end{table}

\subsection{Dense-to-Sparse Fine-tuning}
\label{subsec:finetuning}

\begin{figure}[t]
  \centering
  \vspace{-3mm}
  \includegraphics[width=.3\textwidth]{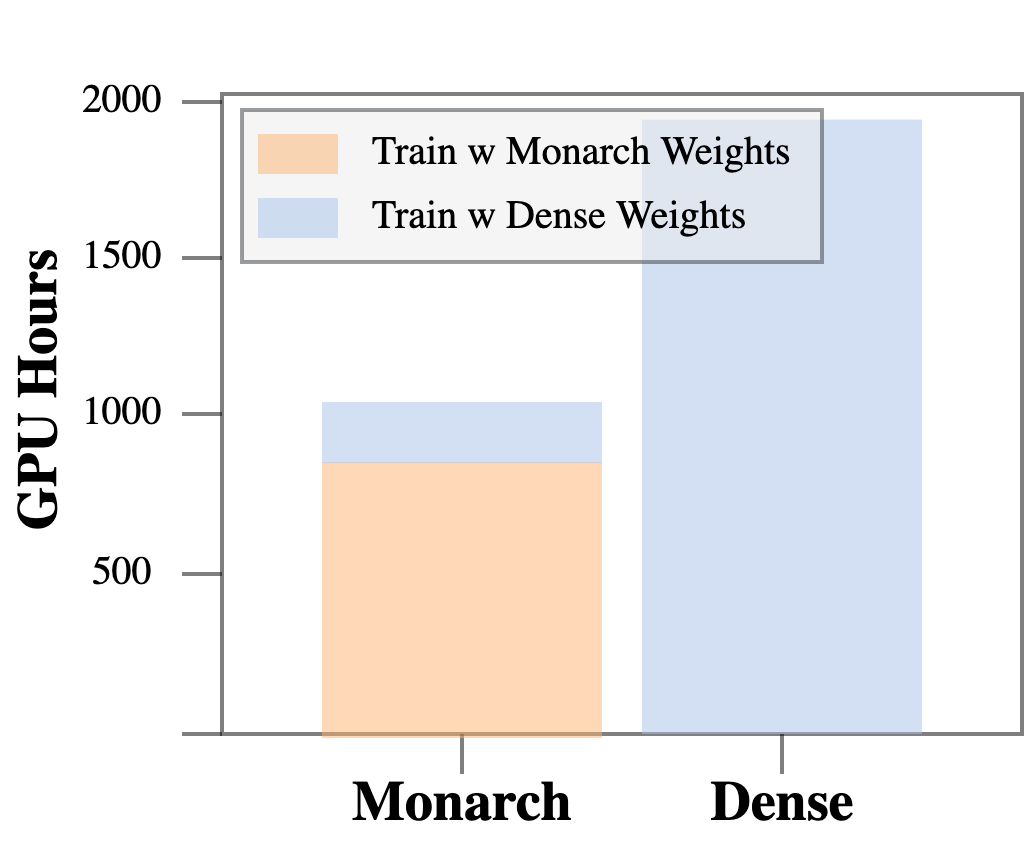}
  \vspace{-3mm}
  \caption{\label{fig:reverse_sparsification_bar}Time required (in A100 GPU hours) to reach the same perplexity (18.0)
    for GPT-2-small on OpenWebText.
    With ``reverse sparsification'', Monarch can speed up
    GPT-2 training by 2$\times$.\vspace{-1em}}
\end{figure}

We show that our Monarch approximation algorithm allows us to efficiently use
pretrained models, such as speeding up BERT finetuning on GLUE.

\paragraph{BERT finetuning.}
We take the BERT pretrained weights, approximate them with Monarch matrices,
and finetune the resulting model on the 9 GLUE tasks.
The results in \cref{table:bert_glue} shows that we obtain a Monarch finetuned
model with similar quality to the dense BERT model, but with 1.7$\times$ faster
finetuning speed.
This serves as a proof of concept, and we expect further speedup if additional model compression techniques are applied (e.g., quantization, kernel fusion).

\begin{table}[h]
  \small
  \centering
  \vspace{-5mm}
  \caption{\label{table:bert_glue}The performance of Monarch matrices in
    finetuning BERT on GLUE.}
  \setlength{\tabcolsep}{5pt}
  \vspace{1em}
  \iftoggle{arxiv}{}{
    \resizebox{\linewidth}{!}
  }
  {
  \begin{tabular}{@{}c||ccccccc@{}}
  \specialrule{.15em}{.05em}{.05em}
    Model&\multicolumn{1}{c}{GLUE (avg)}&\multicolumn{1}{c}{Speedup} &\multicolumn{1}{c}{Params} & \multicolumn{1}{c}{FLOPs} \\
    \specialrule{.15em}{.05em}{.05em}
    BERT-base & 78.6& - & 109M & 11.2G \\
    Monarch-BERT-base& 78.3& 1.5$\times$ & 55M & 6.2G  \\
    BERT-large & 80.4 & - & 335M & 39.5G \\
    Monarch-BERT-large & 79.6 & 1.7$\times$ & 144M & 14.6G  \\
    \specialrule{.15em}{.05em}{.05em}
  \end{tabular}
  }
  \vspace{-3mm}
\end{table}

\section{Conclusion}
\label{sec:conclusion}
We propose Monarch, a novel matrix parameterization that inherits the expressiveness of butterfly matrices and thus can represent many fast transforms.
Our parameterization leverages optimized batch matrix multiply routines on GPUs, yielding up to 2$\times$ speedup compared to dense matrix multiply.
We derive an efficient algorithm for projecting an arbitrary dense matrix on the set of Monarch factors.
Our algorithm allows us to easily fine-tune a pretrained model into a model with Monarch weight matrices.
As a result, Monarch matrices unlock new ways for faster end-to-end training, sparse-to-dense training, and dense-to-sparse fine-tuning of large neural networks.
By making structured matrices practical, our work is a first step towards unlocking tremendous performance improvements in applying sparse models to wide-ranging ML applications (including science and medicine).
We anticipate this work can inspire more future work on advancing machine learning models for interdisciplinary research with limited computational resources.

\section*{Acknowledgments}

We thank Laurel Orr, Xun Huang, Trevor Gale, Jian Zhang, Victor Bittorf, Sarah Hooper, Neel Guha, and Michael Zhang for their helpful discussions and feedback on early drafts of the paper.

We gratefully acknowledge the support of NIH under No.\ U54EB020405 (Mobilize), NSF under Nos.\ CCF1763315 (Beyond Sparsity), CCF1563078 (Volume to Velocity), and 1937301 (RTML); ARL under No.\ W911NF-21-2-0251 (Interactive Human-AI Teaming); ONR under No.\ N000141712266 (Unifying Weak Supervision); ONR N00014-20-1-2480: Understanding and Applying Non-Euclidean Geometry in Machine Learning; N000142012275 (NEPTUNE); NXP, Xilinx, LETI-CEA, Intel, IBM, Microsoft, NEC, Toshiba, TSMC, ARM, Hitachi, BASF, Accenture, Ericsson, Qualcomm, Analog Devices, Google Cloud, Salesforce, Total, the HAI-GCP Cloud Credits for Research program,  the Stanford Data Science Initiative (SDSI), and members of the Stanford DAWN project: Facebook, Google, and VMWare. The U.S.\ Government is authorized to reproduce and distribute reprints for Governmental purposes notwithstanding any copyright notation thereon. Any opinions, findings, and conclusions or recommendations expressed in this material are those of the authors and do not necessarily reflect the views, policies, or endorsements, either expressed or implied, of NIH, ONR, or the U.S.\ Government.

\bibliography{ref}
\bibliographystyle{icml2022}

\newpage
\appendix
\onecolumn

\section{Extended Related Work}
\label{app:related}
In this section, we extend the related works referenced in the main paper and discuss them in detail.
\paragraph{Sparse Training.} Our work is loosely related to neural network pruning. By iteratively eliminating neurons and connections, pruning has seen great success in compressing complex models. \citet{han2015deep,han2015learning} put forth two naive but effective algorithms to compress models up to 49x and maintain comparable accuracy. \citet{li2016pruning} employ filter pruning to reduce the cost of running convolution models up to 38 $\%$, \citet{NIPS2017_a51fb975} prunes the network at runtime, hence retaining the flexibility of the full model. \citet{dong2017learning} prunes the network locally in a layer by layer manner.  \citet{sanh2020movement} prunes with deterministic first-order information, which is more adaptive to pretrained model weights. \citet{lagunas2021block} prunes transformers models with block sparsity pattern during fine-tuning, which leads to real hardware speed up while maintaining the accuracy. \citet{zhu2017prune} finds large pruned sparse network consistently outperform the small dense networks with the same compute and memory footprints. Although both our and all the pruning methods are aiming to produce sparse models, we differ in our emphasis on the overall efficiency, whereas pruning mostly focuses on inference efficiency and disregards the cost in finding the smaller model.

There has been more recent work on sparse methods that focuses on speeding up
training and not just inference, such as SNFS~\citep{dettmers2019sparse},
RigL~\citep{dettmers2019sparse}, Top-KAST~\citep{jayakumar2021top}.
These methods often focus on FLOP counts, which may not correlate well with
wall-clock time on modern hardware (e.g., GPUs).
Block-sparsity is another approach that exploits the block-oriented nature of
GPUs~\citep{gray2017gpu, child2019generating, guo2020accelerating}.
Sparse models have also been found useful to improve the training process of
dense models.
For example, sparsity can be used to regularize dense models to improve
accuracy~\citep{han2016dsd}, or to alternate between sparse and dense training
to ease deployment~\citep{peste2021ac}.
Our sparse-to-dense reverse sparsification instead focuses on speeding up dense
training, where the sparse model is used for efficiency and not regularization.

In addition, models proposed in our work can be roughly seen as a class of manually constructed lottery tickets. Lottery tickets \citet{frankle2018lottery} are a set of small sub-networks derived from a larger dense network, which outperforms their parent networks in convergence speed and potentially in generalization. A huge number of studies are carried out to analyze these tickets both empirically and theoretically: \citet{morcos2019one} proposed to use one generalized lottery tickets for all vision benchmarks and got comparable results with the specialized lottery tickets; \citet{frankle2019stabilizing} improves the stability of the lottery tickets by iterative pruning; \citet{frankle2020linear} found that subnetworks reach full accuracy only if they are stable against SGD noise during training; \citet{orseau2020logarithmic} provides a logarithmic upper bound for the number of parameters it takes for the optimal sub-networks to exist; \citet{pensia2020optimal} suggests a way to construct the lottery ticket by solving the subset sum problem and it's a proof by construction for the strong lottery ticket hypothesis. Furthermore, follow-up works \citep{liu2020finding, wang2020picking, tanaka2020pruning} show that we can find tickets without any training labels.

\paragraph{Structured matrices and butterfly matrices.}
Structured matrices are those with asymptotically fast matrix-vector
multiplication algorithm ($o(n^2)$ time complexity) and few parameters ($o(n^2)$
space complexity).
Common examples include sparse \& low-rank matrices, and fast transforms such as
Fourier transform, Chebyshev transform, Legendre transform, and more generally
orthogonal polynomial transforms.
These transforms have been widely used in data preprocessing (e.g., DFT in
speech processing~\citep{jurafsky2014speech}) and kernel
approximation~\citep{le2013fastfood,yu2016orthogonal}.
Many generalizations of these transforms have been used in machine learning to
replace dense weight
matrices~\citep{sindhwani2015structured,thomas2018learning,gu2020hippo}.
\citet{desa2018two} shows that any structured matrix (in the form of arithmetic
circuits) can be written as product of sparse matrices,
and~\citet{dao2020kaleidoscope} shows that products of butterfly matrices can
represent these structured matrices almost optimally in terms of runtime and
memory.
The class of butterfly matrices~\citep{parker1995random} have also been used in
kernel models~\citep{munkhoeva2018quadrature, choromanski2019unifying} and deep
learning models~\citep{vahid2020butterfly,lin2021deformable,
  ailon2021sparse}.

\paragraph{Neural Operators for PDEs.}

Deep learning has found application in the domain of differential equations and scientific computing \cite{rackauckas2020universal}, with methods developed for prediction and control problems \cite{kidger2020neural,massaroli2021differentiable}, as well as acceleration of numerical schemes \cite{poli2020hypersolvers,jolicoeur2021gotta}. Specific to the \textit{partial differential equations} (PDEs) are approaches designed to learn solution operators \cite{raissi2019physics,fan2020solving,li2020fourier}, and hybridized solvers \cite{kochkov2021machine}, evaluated primarily on classical fluid dynamics.

The promise of these approaches is to offer, at the cost of an initial training procedure, accurate yet faster solutions than an appropriate numerical method tuned for a specific problem, which can then be leveraged for real-time forecasting or within larger feedback loops. Nonetheless, optimal design of neural operators remains an open problem, with most relying on fast Fourier transforms (FFT) or standard dense neural architectures. Instead, neural operators based on Monarch are capable of approximating all fast transforms, thus allowing automated optimization towards a suitable transform on a given PDE problem.

\paragraph{MRI.} Accelerated multi-coil MRI is an essential mechanism for reducing long scan times and making certain scan types feasible. In multi-coil MRI, data is acquired in the spatial Fourier domain (a.k.a \textit{k-space}) across multiple coils (sensors). To reduce scan time, this data is sampled below the required rate for recovering the underlying signal (i.e. Nyquist rate), which results in signal aliasing (see Appendix \ref{sec:experiment_details_mri}). In these settings, direct application of the inverse fast Fourier transform (FFT) cannot suppress aliasing artifacts.

Classical MRI reconstruction approaches supplement the FFT by leveraging shared information across multiple coils and strong analytical priors to regularize image recovery objectives. SENSE-based methods jointly dealias images across multiple coils and reweight the final image based on the spatial sensitivity profile of each coil \citep{pruessmann1999sense}. Compressed sensing promotes image sparsity in transformation domains (e.g. Fourier, wavelet) while enforcing data consistency between the Fourier transform of the reconstructed image and the observed measurements \citep{lustig2007sparse}. Low-rank methods enforce low rank structure across slowly-varying dimensions or local patches in the data \citep{ong2016beyond,ravishankar2017low,haldar2013low}. Additionally, GRAPPA-based techniques optimize kernels to directly interpolate missing k-space samples to promote smoothness in the Fourier domain \cite{griswold2002generalized}. Despite their efficacy, these methods have long reconstruction times, require explicit analytical priors, and require careful hyperparameter fine-tuning.

CNNs have shown promise as a fast-at-inference, learnable alternative to classical MRI reconstruction methods \cite{knoll2020deep}. In supervised learning, fully convolutional networks (e.g. U-Net \citep{ronneberger2015u} or unrolled networks \citep{sandino2020compressed,hammernik2018learning}) learn a mapping between paired zero-filled and fully-sampled, ground truth images. However, supervised methods require a large fully-sampled (labeled) data corpus and are sensitive to distribution drifts due to patient, hardware, and sequence heterogeneity \cite{darestani2021measuring}. To reduce dependence on labeled data, unsupervised methods have used generative adversarial networks \citep{cole2020unsupervised, mardani2018deep}, self-supervised learning \cite{yaman2020self}, dictionary learning \cite{lahiri2021blind}, and untrained networks \cite{darestani2021accelerated}. Despite their 
label efficiency, these techniques still underperform supervised methods and are also sensitive to distribution shift. Recently, a family of semi-supervised reconstruction methods demonstrated label efficiency and robustness to physics-driven perturbations, such as changes in signal-to-noise ratio or patient motion \citep{desai2021noise2recon, desai2021vortex}. However, these methods require large amounts of unlabeled data, which can be difficult to curate in few-shot settings. Thus, despite their success in controlled environments, prospective clinical deployment of these models has been stifled \citep{chaudhari2020prospective}.

In our work, we propose a model with a single FFT-initialized factorized Monarch matrix. Such a matrix can provide the benefits of both a simple linearized transformation like FFT and a learnable mechanism to remove aliasing artifacts resulting from the undersampled k-space. The smaller learnable parameter set may reduce overfitting in data-limited settings while preserving the transformation structure of Fourier matrices. Thus, our approach can be interpreted as a hybrid between analytically-constrained classical methods and data-dependent CNNs.

\section{Notation Review}
Throughout this paper, we use lowercase to denote scalars (e.g., $k$), lowercase boldface to denote vectors (e.g., $\vv$), and uppercase boldface to denote matrices (e.g., $\vA$).

$\vI$ denotes the identity matrix. We use $\vA^\top$ to denote the transpose of a matrix and $\vA^*$ to denote the conjugate transpose of a matrix. All results in this paper apply to matrices over the either the reals $\mathbb{R}$ or the complex numbers $\mathbb{C}$; when the field under consideration can be either one of these, we denote it by $\mathbb{F}$.

We use 1-indexing throughout this paper except where explicitly stated.

\newcommand{\baseb}[3]{\parens{{#1},{#2}}_{{#3}}}
\newcommand{\mx}[1]{\mathbf{#1}}
\newcommand{\floors}[1]{\left \lfloor #1 \right \rfloor}
\newcommand{\parens}[1]{\left( {#1}\right)}

\section{General Monarch Matrix Parametrization}
\label{sec:permutation}

In Section \ref{sec:Monarch_square}, we define a parametrization for square Monarch matrices of different ``block sizes'' (i.e., not necessarily $\sqrt{n}$), and prove some basic properties about them. In Section \ref{sec:Monarch_rect}, we further extend this to define rectangular Monarch matrices, and prove some basic properties about them.

Note: In this section, we use 0-indexing rather than 1-indexing, for notational convenience.

\subsection{General square matrices}
\label{sec:Monarch_square}
\subsubsection{Parametrization}
\label{sec:Monarch_square_param}
In this section, we define a more general Monarch parametrization for square matrices, allowing for different ``block sizes.'' Like \cref{def:Monarch}, the parametrization involves the product of a permuted block-diagonal matrix with another block-diagonal matrix; the difference is that we now allow the matrices $\vL$ and $\vR$ to have diagonal blocks of different sizes. Thus, the permutations applied to $\vL$ (to turn it into a block matrix where each block matrix is diagonal) will correspondingly also be different.

First, in \cref{def:square_r}, we define notation for a class of block-diagonal matrices.

\begin{definition}[Class $\BD\ind{b, n}$]
\label{def:square_r}
Let $b \in (1, n)$ be an integer that divides $n$. For $0\le i< \frac {n}{b}$, let $\mx{R}_{i}\in\F^{b \times b }$ be a $b \times b $ ``block" matrix. Then define the matrix $\vR$ with {\em block size} $b$ as follows:
\begin{equation}
 \label{eq:def-R}
  \vR = \diag\left(\vR_0, \dots, \vR_{\frac {n}{b}-1}\right).
\end{equation}
\end{definition}
(Note that the number of possible nonzero values in $\vR$ is $\frac {n}{b}\cdot b^2=nb$.)
We denote the class of all matrices $\vR$ expressible in this form by $\BD\ind{b, n}$. Note that this class is closed under (conjugate) transposition and contains the identity matrix.

Next, in \cref{def:Matrix L}, we define notation for a class of block matrices whose \emph{blocks} are diagonal.

\begin{definition}[Class $\DB\ind{b,n}$]
\label{def:Matrix L}
Let $b \in (1, n)$ be an integer that divides $n$. For $0 \le i, j < b$, let $\mx{D}_{i,j}\in\F^{b\times b}$ be a $b \times b$ diagonal matrix.
Then let $\vL$ be an $n \times n$ matrix with the following form: 
    \begin{equation}
        	\label{eq:def-L}
    \vL=
    	\begin{bmatrix}
    		\mx{D}_{0,0} & \dots & \mx{D}_{0,\frac{n}{b} -1} \\
    		\vdots & \ddots & \vdots \\
    		\mx{D}_{\frac{n}{b} -1,0} & \dots & \mx{D}_{\frac{n}{b} -1,\frac{n}{b} -1}
    	\end{bmatrix}
    \end{equation}
\end{definition}
(Note that the number of possible nonzero values in $\vL$ is $\parens{\frac nb}^2\cdot b=\frac{n^2}b$.)
We denote the class of all matrices $\vL$ expressible in this form by $\DB\ind{b, n}$. Note that this class is closed under (conjugate) transposition and contains the identity matrix. As we show in \cref{sec:sq-Monarch-properties}, $\vL$ can be written as a block-diagonal matrix with $b$ blocks of size $\ff{n}{b} \times \ff{n}{b}$ (i.e., a matrix in $\BD\ind{\frac{n}{b}, \, n}$), multiplied on the left and right with appropriate permutation matrices.
We denote the class of all matrices $\vL$ expressible in this form by $\DB\ind{b, n}$. Note that this class is closed under (conjugate) transposition. As we show in \cref{sec:sq-Monarch-properties}, $\vL$ can be written as a block-diagonal matrix with $b$ blocks of size $\ff{n}{b} \times \ff{n}{b}$ (i.e., a matrix in $\BD\ind{\frac{n}{b}, \, n}$), multiplied on the left and right with appropriate permutation matrices.

Using these two definitions, we define the class of Monarch matrices with a given block size.
\begin{definition}[Class $\M\ind{b,n}$]
\label{def:block_Monarch}
Let $b \in (1, n)$ be an integer that divides $n$. A \emph{Monarch matrix} of size $n \times n$ and ``block size $b$'' is a matrix of the form: 
    \begin{equation}
        	\label{eq:Monarch-general}
    \vM= \vL \vR
    \end{equation}
    where $\vL \in \DB\ind{b,n}$ and $\vR \in \BD\ind{b,n}$.
\end{definition}
We denote the class of all matrices $\vM$ expressible in this form by $\M\ind{b, n}$. Observe that when $b = \sqrt{n}$, this is exactly the matrix class $\M\ind{n}$ in \cref{def:Monarch}. (In other words, $\M\ind{n}$ is shorthand for $\M\ind{\sqrt{n}, n}$.) Note that a matrix in $\M\ind{b,n}$ is represented by $\frac{n^2}{b} + nb$ parameters.

We remark that $\M\ind{b,n} \supset \B\ind{n}$ for all block sizes $b \in (1, n)$ that divide $n$.

Based on \cref{def:block_Monarch}, we define the classes $\M\M^{*(b,n)}$ and $\M^*\M^{(b,n)}$::
\begin{definition}[Class $\M\M^{*(b,n)}$, $\M^*\M^{(b,n)}$]
\label{def:block_MM}
Let $b \in (1, n)$ be an integer that divides $n$ and suppose $\vM_1, \vM_2 \in \M^{(b,n)}$. We define $\M\M^{*(b,n)}$ to be the the class of all matrices $\vM$ expressible in the form $\vM= \vM_1 \vM_2^*$. \newline
We define $\M^*\M^{(b,n)}$ to be the the class of all matrices $\vM$ expressible in the form $\vM= \vM_1^* \vM_2$.
\end{definition}
Observe that when $b = \sqrt{n}$, $\M\M^{*(b,n)}$ is exactly the matrix class $\M\M^{*(n)}$ defined in \cref{sec:theory}. Note that a matrix in $\M\M^{*(b,n)}$ or $\M^*\M\ind{b,n}$. is represented by $2\frac{n^2}{b} + 2nb$ parameters.

Finally, we define the following ``Monarch hierarchy'' based on the kaleidoscope hierarchy of \cite{dao2020kaleidoscope}:
\begin{definition}[Class $(\M\M^{*(b,n)})^w_e$]
\label{def:block_MM}
Let $b \in (1, n)$ be an integer that divides $n$. We define the matrix class $(\M\M^{*(b,n)})^w_e$ as the set of all matrices $\vM$ that can be expressed as
    \begin{equation}
        	\label{eq:mm-hierarchy}
    \vM= \lt \pd{i=1}{w} \vM_i\rt [1:n, 1:n]
    \end{equation}
    where each $\vM_i \in \M\M^{*(b,e\cdot n)}$.
\end{definition}
Note that a matrix in $(\M\M^{*(b,n)})^w_e$ is represented by $2w\frac{e^2n^2}{b} + 2wenb$ parameters.

\subsubsection{Properties}
\label{sec:sq-Monarch-properties}
Here we show some properties of the matrix classes defined above. We first show some basic equivalent ways to define these classes. We then show (\cref{thm:lr_permutation}) that the matrices in $\DB\ind{b, n}$ are permuted block-diagonal matrices; specifically, that they can be converted to matrices in $\BD\ind{\frac{n}{b}, n}$ by applying the appropriate permutation. Finally, we state an expressivity result for the general ``Monarch hierarchy''  which follows from Theorem 1 of \cite{dao2020kaleidoscope}.

First, we define a class of permutations.
Let $1\le b\le n$ be integers such that $b$ divides $n$.
We will need to express each index $0\le i<n$ in ``block form.'' More specifically:

\begin{definition}\label{def:$i$}
Let $i \ge 0$, $b \ge 1$ be integers. Then define
\[i_0=i\mod{b},\]
and
\[i_1=\floors{\frac ib}.\] 
We use the notation $i\equiv\baseb{i_1}{i_0}{b}$ to denote the representation above. In particular, if $i\equiv(i_1,i_0)_{b}$,
then we have
\[
        i = i_1 \cdot b + i_0
\]
\end{definition}

Using this notation, we define the following class of permutations:
\begin{definition}
\label{def:sigma-b}
Let $b \in [1, n]$ be an integer that divides $n$.  Let $i\equiv\baseb{i_1}{i_0}{b}$. Define
    \begin{equation}
            \label{eq:sigma_b-def}
        \sigma_{(b,n)}(i) = i_0\cdot\frac{n}{b} + i_1.
    \end{equation}
That is, $\sigma_{(b,n)}(i)\equiv \baseb{i_0}{i_1}{\frac {n}{b}}$.
Let $\vP_{(b,n)}$ denote the $n \times n$ permutation matrix defined by the permutation $\sigma_{(b,n)}$.
\end{definition}
Intuitively, $\vP_{(b,n)}$ can be interpreted as reshaping a length-$n$ vector into an $b \times \ff{n}{b}$ matrix in row-major order, transposing the result, and then flattening this back into a vector (again in row-major order).

Now, we restate the formulation in \cref{def:square_r} equivalently as:
\begin{proposition}

\label{prop:R-eqv-def}
A matrix $\vR$ satisfies~\Cref{eq:def-R} (i.e., $\vR \in \BD\ind{b,n}$) if and only if the following holds for any
$0\le i,j< n$. Let $i\equiv\baseb{i_1}{i_0}{b}$ and $j\equiv\baseb{j_1}{j_0}{b}$.  Then
\begin{enumerate}
    \item\label{item:zero-loc-R} if $i_1\ne j_1$, then $\vR[i,j]=0$. 
    \item \label{item:nonzero-loc-R} Else (i.e., when $i_1=j_1$), then $\vR[i,j]=\vR_{i_1}[i_0,j_0]$.
\end{enumerate}

\end{proposition}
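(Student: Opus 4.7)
The plan is to prove both directions of the equivalence by directly unpacking the index decomposition $i \equiv (i_1, i_0)_b$, which is tailored so that $i_1$ identifies the block and $i_0$ identifies the position within the block. Since the statement is an ``if and only if'' about the entries of $\vR$, I would handle each direction separately by translating between the block-diagonal description in \cref{eq:def-R} and the entrywise description in items (1)--(2).

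First I would establish the forward direction. Assume $\vR$ has the form \cref{eq:def-R}, i.e.\ $\vR = \diag(\vR_0, \dots, \vR_{n/b-1})$ with each $\vR_k$ of size $b \times b$. By the definition of a block-diagonal matrix, the $(i,j)$-entry of $\vR$ is nonzero only when $i$ and $j$ lie within the same diagonal block of size $b \times b$. Using the decomposition $i = i_1 \cdot b + i_0$ with $0 \le i_0 < b$, the block containing row $i$ is precisely block number $i_1$, and the local row index within that block is $i_0$ (and similarly for $j$). Hence $i_1 \neq j_1$ puts $(i,j)$ outside every diagonal block, giving $\vR[i,j]=0$, which proves item (1). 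When $i_1 = j_1$, the entry $(i,j)$ lies in block $i_1$ at local coordinates $(i_0, j_0)$, so $\vR[i,j] = \vR_{i_1}[i_0, j_0]$, which is item (2).

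For the reverse direction, assume items (1)--(2) hold. Define $b \times b$ matrices $\vR_0, \dots, \vR_{n/b-1}$ by $\vR_k[a,c] \defeq \vR[k b + a,\, k b + c]$ for $0 \le a,c < b$. Then for any global indices $i,j$ with decompositions $(i_1, i_0)_b$ and $(j_1, j_0)_b$: if $i_1 = j_1 =: k$ we get $\vR[i,j] = \vR_k[i_0, j_0]$ by construction, matching the $(i,j)$ entry of $\diag(\vR_0,\dots,\vR_{n/b-1})$; and if $i_1 \neq j_1$, item (1) gives $\vR[i,j]=0$, again matching the corresponding off-block entry of the block-diagonal matrix. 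Therefore $\vR = \diag(\vR_0, \dots, \vR_{n/b-1})$ as in \cref{eq:def-R}.

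The whole argument is essentially bookkeeping, so the only place one can slip is the index translation itself: making sure that the quotient $i_1 = \lfloor i/b \rfloor$ really indexes the block while $i_0 = i \bmod b$ indexes within the block, under the convention that $\vR_0$ occupies rows/columns $0,\dots,b-1$, $\vR_1$ occupies $b,\dots,2b-1$, and so on. This is consistent with the $0$-indexing convention declared at the top of \cref{sec:permutation}, so no subtlety arises, and I expect no genuine obstacle beyond this routine check.
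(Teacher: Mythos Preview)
Your proposal is correct and is exactly the natural unpacking of the block-diagonal definition into entrywise conditions; the paper itself treats this proposition as an immediate restatement of \cref{def:square_r} and offers no separate proof, so you have simply made explicit the routine index bookkeeping the paper leaves implicit.
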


We restate the formulation in \cref{def:Matrix L} equivalently as:
\begin{proposition}
\label{prop:L-eqv-def}
A matrix $\vL$ satisfies~\Cref{eq:def-L} (i.e., $\vL \in \DB\ind{b,n}$) if and only if the following holds for any
$0\le i,j< n$. Let $i\equiv\baseb{i_1}{i_0}{b}$ and $j\equiv\baseb{j_1}{j_0}{b}$. Then 
\begin{enumerate}
    \item\label{item:zero-loc-L} if $i_0\ne j_0$, then $\vL[i,j]=0$. 
    \item \label{item:nonzero-loc-L} Else, (i.e., when $i_0=j_0$), then $\vL[i,j]=\vD_{i_1,j_1}[i_0,i_0]$.
\end{enumerate}
\end{proposition}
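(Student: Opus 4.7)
The plan is to prove the biconditional by directly unpacking the block structure of $\vL$ in \Cref{def:Matrix L} using the index decomposition $i = i_1 b + i_0$ (and likewise $j = j_1 b + j_0$). Under this decomposition $i_1$ is the ``block row'' index and $i_0$ is the row offset within the block, and analogously for the columns; the entire proposition is just the translation of ``block matrix with diagonal blocks'' into entrywise language.

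\textbf{Forward direction.} Suppose $\vL$ has the form in \eqref{eq:def-L}. Then the submatrix of $\vL$ indexed by rows $\{i_1 b, \dots, i_1 b + b - 1\}$ and columns $\{j_1 b, \dots, j_1 b + b - 1\}$ is exactly the $b \times b$ block $\vD_{i_1, j_1}$, so $\vL[i,j] = \vD_{i_1, j_1}[i_0, j_0]$. Because $\vD_{i_1, j_1}$ is diagonal by hypothesis, this entry is $0$ whenever $i_0 \ne j_0$, which gives Item 1, and equals $\vD_{i_1, j_1}[i_0, i_0]$ when $i_0 = j_0$, which gives Item 2.

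\textbf{Reverse direction.} Conversely, assume $\vL$ satisfies the two indexing conditions. I would construct the required blocks explicitly: for each $0 \le i_1, j_1 < n/b$ define the $b \times b$ matrix $\vD_{i_1, j_1}$ by $\vD_{i_1, j_1}[i_0, i_0] := \vL[i_1 b + i_0,\, j_1 b + i_0]$ for $0 \le i_0 < b$ and zero off-diagonal. Item 1 ensures this construction captures every off-diagonal entry of $\vL$ inside the $(i_1, j_1)$ block (they are zero on both sides), and Item 2 matches the diagonal entries. Each $\vD_{i_1, j_1}$ is diagonal by construction, and assembling them into the block form of \eqref{eq:def-L} reproduces $\vL$ exactly, so $\vL \in \DB\ind{b,n}$.

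The only ``obstacle'' here is notational bookkeeping: keeping absolute indices straight against their $(i_1, i_0)_b$ decompositions, and remembering that each $\vD_{i_1, j_1}$ must \emph{itself} be diagonal (which is the entire content of the proposition). No deeper ideas are needed, and the argument closely parallels the proof of \Cref{prop:R-eqv-def}, differing only in the roles played by the high- and low-order parts of the index pair.
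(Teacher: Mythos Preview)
Your proof is correct. The paper does not actually supply a proof of this proposition at all: it introduces it with the phrase ``We restate the formulation in \cref{def:Matrix L} equivalently as,'' and leaves it at that, treating the entrywise description as a self-evident reformulation of the block picture. Your write-up simply makes that reformulation explicit in both directions, which is exactly the intended content; there is nothing to compare against beyond noting that you have filled in details the paper omits.
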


We will argue the following:
\begin{theorem}\label{thm:lr_permutation} Let $1\le b\le n$ such that $b$ divides $n$.
Recall that $\vP_{(b,n)}$ is the permutation matrix defined by the permutation $\sigma_{(b,n)}$. Let $\vL$ be a matrix in $\DB\ind{b, n}$. Then we have
\[\vR'=\vP_{(b,n)}\cdot\vL\cdot\vP_{(b,n)}^\top,\]
where $\vR' \in \BD\ind{\frac{n}{b},\, n}$.
\end{theorem}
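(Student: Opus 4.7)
The plan is to prove the statement by directly unpacking the $(i',j')$ entry of $\vP_{(b,n)} \vL \vP_{(b,n)}^\top$ and matching the resulting pattern against the equivalent characterization of $\BD\ind{n/b, n}$ given by Proposition~\ref{prop:R-eqv-def} (with block size $n/b$ in place of $b$). The key observation is that the conjugation by the permutation matrix $\vP_{(b,n)}$ simply relabels row and column indices via $\sigma_{(b,n)}^{-1}$, so the whole argument reduces to bookkeeping on two-component index representations.

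First I would write
\[
    (\vP_{(b,n)} \vL \vP_{(b,n)}^\top)[i', j'] = \vL[\sigma_{(b,n)}^{-1}(i'),\, \sigma_{(b,n)}^{-1}(j')],
\]
and observe from Definition~\ref{def:sigma-b} that $\sigma_{(b,n)}^{-1}$ is precisely the map that sends $(a, c)_{n/b}$ to $(c, a)_b$ (so $\sigma_{(b,n)}^{-1} = \sigma_{(n/b, n)}$). Next I would parse indices in the target representation: write $i' \equiv (a, c)_{n/b}$ and $j' \equiv (a', c')_{n/b}$ with $0 \le a, a' < b$ and $0 \le c, c' < n/b$. Then $\sigma_{(b,n)}^{-1}(i')$ has base-$b$ representation $(c, a)_b$, i.e., old-coordinate $i_1 = c$, $i_0 = a$; and similarly $\sigma_{(b,n)}^{-1}(j')$ has $j_1 = c'$, $j_0 = a'$.

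Now I would apply Proposition~\ref{prop:L-eqv-def}: this entry of $\vL$ vanishes unless $i_0 = j_0$, i.e., unless $a = a'$, and in that case it equals $\vD_{i_1, j_1}[i_0, i_0] = \vD_{c, c'}[a, a]$. Translating back, this says
\[
    (\vP_{(b,n)} \vL \vP_{(b,n)}^\top)[i', j'] \;=\; \begin{cases} \vD_{c,c'}[a,a] & \text{if } a = a', \\ 0 & \text{otherwise.} \end{cases}
\]
For each $a \in \{0, \dots, b-1\}$ I would then define an $(n/b) \times (n/b)$ matrix $\vR'_a$ by $\vR'_a[c, c'] := \vD_{c, c'}[a, a]$. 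The displayed formula then matches the characterization in Proposition~\ref{prop:R-eqv-def} (with block size $n/b$), showing $\vP_{(b,n)} \vL \vP_{(b,n)}^\top = \diag(\vR'_0, \dots, \vR'_{b-1}) \in \BD\ind{n/b, n}$, as desired.

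The main obstacle is purely notational: keeping straight which base ($b$ or $n/b$) the two-component representation refers to, and verifying that $\sigma_{(b,n)}^{-1}$ indeed swaps the two components. There is no substantive mathematical difficulty—the statement is essentially the observation that transposing a row-major reshape of a vector exchanges the roles of ``which block'' and ``which position within the block,'' and the careful index tracking above makes this rigorous.
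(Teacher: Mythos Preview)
Your proposal is correct and follows essentially the same approach as the paper: both arguments amount to the index-bookkeeping observation that conjugation by $\vP_{(b,n)}$ swaps the two components of the base-$b$ (resp.\ base-$n/b$) index representation, then invoke Propositions~\ref{prop:L-eqv-def} and~\ref{prop:R-eqv-def}. The only cosmetic difference is that you parametrize by the output indices $(i',j')$ and pull back via $\sigma_{(b,n)}^{-1}$, whereas the paper parametrizes by the input indices $(i,j)$ and pushes forward via $\sigma_{(b,n)}$; the resulting block description $\vR'_a[c,c'] = \vD_{c,c'}[a,a]$ is identical.
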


\begin{proof}
We first note that multiplying an $n\times n$ matrix on the right (and left resp.) by $\vP_{(b,n)}^\top = \vP_{(\frac nb,n)}$ (and $\vP_{(b,n)}$ resp.) permutes the columns (and rows resp.) of the matrix according to $\sigma_{(b,n)}$.\footnote{This uses the fact that $\parens{\sigma_{(b,n)}}^{-1}=\sigma_{(\frac nb,n)}$ (which means $P_{(\frac{n}{b}, n)} = P_{(b, n)}^\top$ since the inverse of a permutation matrix is its transpose).} This implies that for any $0\le i,j<n$:
\begin{equation}
\label{eq:L-permuted}
\vR'[\sigma_{(b,n)}(i),\sigma_{(b,n)}(j)]=\vL[i,j].
\end{equation}
To complete the proof, we will argue that $\vR'$ satisfies the two conditions in~\Cref{prop:R-eqv-def}.

Towards this end, let $0\le i,j<n$ be arbitrary indices and further, define $i=\baseb{i_1}{i_0}{b}$ and $j=\baseb{j_1}{j_0}{b}$. Then note that $\sigma_{(b,n)}(i)=\baseb{i_0}{i_1}{\frac nb}$ and $\sigma_{(b,n)}(j)=\baseb{j_0}{j_1}{\frac nb}$.

By~\Cref{prop:L-eqv-def}, we have that if $i_0\ne j_0$, then $\vL[i,j]=0$. Note that $i_0\ne j_0$ satisfies the pre-condition for base size $\frac nb$ for indices $(\sigma_{(b,n)}(i),\sigma_{(b,n)}(j))$ in item~\ref{item:zero-loc-R} in~\Cref{prop:R-eqv-def}.  Then   by~\cref{eq:L-permuted}, we have that $\vR'[\sigma_{(b,n)}(i),\sigma_{(b,n)}(j)]=0$, which satisfies item~\ref{item:zero-loc-R} in~\Cref{prop:R-eqv-def}.

Now consider the case that $i_0=j_0$; then by item~\ref{item:nonzero-loc-L} in~\Cref{prop:L-eqv-def}, we have that $\vL[i,j]=\vD_{i_1,j_1}[i_0,i_0]$.  Note that $i_0= j_0$ satisfies the pre-condition for base size $\frac nb$ for indices $(\sigma_{(b,n)}(i),\sigma_{(b,n)}(j))$ in item~\ref{item:nonzero-loc-R} in~\Cref{prop:R-eqv-def} if we define $\vR'_{i_0}\in\F^{\frac nb\times\frac nb}$ as follows:
\[\vR'_{i_0}[i_1,j_1]=\vD_{i_1,j_1}[i_0,i_0].\] 
Note that the above implies that 
\[\vR'=\diag\parens{\vR'_0,\dots,\vR'_{b-1}},\]
where $\vR'_{\cdot}$ is as defined in the above paragraph. This means $\vR' \in \BD\ind{\frac{n}{b}, n}$, since each block $\vR_{i_0}'$ is a matrix of size $\frac{n}{b} \times \frac{n}{b}$.
\end{proof}

We now briefly note some alternate ways to express matrices in $\M\M^{*(b,n)}$.
\begin{proposition}
\label{prop:mm-eqv-def}
For any $\vM \in \M\M^{*(b,n)}$, we can write $\vM = (\vP_{(b,n)}^\top \vL_1 \vP_{(b,n)}) \vR (\vP_{(b,n)}^\top\vL_2\vP_{(b,n)})$, where $\vL_1,\vL_2 \in \BD\ind{\frac{n}{b},n}$ and $\vR \in \BD\ind{b,n}$.
\end{proposition}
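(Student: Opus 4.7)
The plan is to unwind the definitions and apply Theorem~\ref{thm:lr_permutation} to each of the two constituent Monarch factors in turn. Specifically, by \cref{def:block_MM}, any $\vM \in \M\M^{*(b,n)}$ has the form $\vM = \vM_1 \vM_2^*$ for some $\vM_1, \vM_2 \in \M\ind{b,n}$; then by \cref{def:block_Monarch}, we may further write $\vM_1 = \vL_1' \vR_1$ and $\vM_2 = \vL_2' \vR_2$, where $\vL_1', \vL_2' \in \DB\ind{b,n}$ and $\vR_1, \vR_2 \in \BD\ind{b,n}$. This yields the expansion
\[
\vM \;=\; \vL_1' \, \vR_1 \, \vR_2^* \, (\vL_2')^*,
\]
which is the algebraic starting point.

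Next, I would invoke Theorem~\ref{thm:lr_permutation}, rearranged, to get that every matrix in $\DB\ind{b,n}$ can be written as $\vP_{(b,n)}^\top (\cdot) \vP_{(b,n)}$ for a suitable element of $\BD\ind{\frac{n}{b}, n}$ (since $\vP_{(b,n)}$ is a permutation matrix and hence $\vP_{(b,n)}^{-1} = \vP_{(b,n)}^\top$). Applying this to $\vL_1'$ produces some $\vL_1 \in \BD\ind{\frac{n}{b},n}$ with $\vL_1' = \vP_{(b,n)}^\top \vL_1 \vP_{(b,n)}$. Applying it to $\vL_2'$ yields some $\tilde{\vL}_2 \in \BD\ind{\frac{n}{b},n}$ with $\vL_2' = \vP_{(b,n)}^\top \tilde{\vL}_2 \vP_{(b,n)}$; taking conjugate transposes and using that $\vP_{(b,n)}$ is real (so $\vP_{(b,n)}^* = \vP_{(b,n)}^\top$), I get $(\vL_2')^* = \vP_{(b,n)}^\top \tilde{\vL}_2^* \vP_{(b,n)}$. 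Setting $\vL_2 \defeq \tilde{\vL}_2^*$ and using that $\BD\ind{\frac{n}{b},n}$ is closed under conjugate transposition (noted after \cref{def:square_r}) gives $\vL_2 \in \BD\ind{\frac{n}{b},n}$.

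Then I would consolidate the middle factor: define $\vR \defeq \vR_1 \vR_2^*$. Since $\BD\ind{b,n}$ is closed under conjugate transposition, $\vR_2^* \in \BD\ind{b,n}$, and because the product of two block-diagonal matrices with matching $b \times b$ block partition is again block-diagonal with the same partition, we have $\vR \in \BD\ind{b,n}$. Substituting everything back gives
\[
\vM \;=\; \bigl(\vP_{(b,n)}^\top \vL_1 \vP_{(b,n)}\bigr)\, \vR \,\bigl(\vP_{(b,n)}^\top \vL_2 \vP_{(b,n)}\bigr),
\]
which is the claimed form.

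This argument is essentially a bookkeeping exercise, so I do not expect any genuinely hard step; the only care needed is to keep track of conjugate transposes (so that the closure of $\BD\ind{b,n}$ and $\BD\ind{\frac{n}{b},n}$ under conjugate transposition gets used in the right places) and to note explicitly that $\vP_{(b,n)}^* = \vP_{(b,n)}^\top$ so that the adjoint of $\vL_2' = \vP_{(b,n)}^\top \tilde{\vL}_2 \vP_{(b,n)}$ lands in the required form. No new combinatorial or analytic ingredient beyond Theorem~\ref{thm:lr_permutation} and the closure properties of the defined classes is needed.
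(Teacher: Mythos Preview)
Your proposal is correct and follows essentially the same approach as the paper: expand $\vM = \vL_1'\vR_1\vR_2^*(\vL_2')^*$, collapse the middle into a single $\vR \in \BD\ind{b,n}$, and apply Theorem~\ref{thm:lr_permutation} to rewrite each outer $\DB$-factor as a permuted $\BD\ind{\frac{n}{b},n}$ matrix. If anything, your bookkeeping is tidier than the paper's---you correctly write $\vR = \vR_1\vR_2^*$ (the paper has a harmless typo $\vR_1^*\vR_2$) and you explicitly absorb the conjugate transpose into $\vL_2$ via closure of $\BD\ind{\frac{n}{b},n}$, which the paper glosses over.
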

\begin{proof}
By definition (see \cref{def:square_r} and \cref{def:Matrix L}), if $\vM \in \M\M^{*(b,n)}$, we can write
$\vM = (\vL_1' \vR_1) (\vL_2' \vR_2)^* = \vL_1' (\vR_1^* \vR_2) \vL_2'^*$,
where $\vL_1',\vL_2' \in \DB\ind{b,n},\vR_1,\vR_2 \in \BD\ind{b,n}$.

Notice that since $\vR_1^*, \vR_2$ are both block-diagonal with the same structure (i.e., both have blocks of size $b \times b$), their product $\vR$ is also in $\BD\ind{b,n}$.
Also, by \cref{thm:lr_permutation} we can write $\vL_1 = \vP_{(b,n)} \vL_1' \vP_{(b,n)}^\top$, $\vL_2 = \vP_{(b,n)} \vL_2' \vP_{(b,n)}^\top$, where $\vL_1,\vL_2$ are both in $\BD\ind{\frac{n}{b},n}$ (i.e., block diagonal with blocks of size $\frac{n}{b} \times \frac{n}{b}$).

Thus, we can write $\vM = (\vP_{(b,n)}^\top \vL_1 \vP_{(b,n)}) \vR (\vP_{(b,n)}^\top\vL_2\vP_{(b,n)})$, where $\vL_1,\vL_2 \in \BD\ind{\frac{n}{b},n}$ and $\vR \in \BD\ind{b,n}$.
\end{proof}

We use the above to show a simple relationship between $\M\M^{*(b,n)}$ and $\M^*\M^{(b,n)}$.
\begin{proposition}
\label{prop:mstarm}
If $\vM \in \M\M^{*(b,n)}$, then $\vP_{(b,n)} \vM \vP_{(b,n)}^\top \in \M^*\M\ind{\frac{n}{b},n}$. Conversely, if $\vM \in \M^*\M^{(b,n)}$, then $\vP_{(b,n)}^\top \vM \vP_{(b,n)} \in \M^*\M\ind{\frac{n}{b},n}$.
\end{proposition}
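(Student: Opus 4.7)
The plan is to reduce both claims to \cref{prop:mm-eqv-def}: once we have the three-factor expansion of an $\M\M^*$ or $\M^*\M$ matrix, conjugating by $\vP_{(b,n)}$ just permutes each factor, and the bijections between the $\BD$ and $\DB$ classes tell us what each permuted factor is.

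First I would handle the forward direction. Given $\vM \in \M\M^{*(b,n)}$, apply \cref{prop:mm-eqv-def} to write
\[
  \vM = (\vP^\top \vL_1 \vP)\, \vR\, (\vP^\top \vL_2 \vP),
\]
with $\vL_1, \vL_2 \in \BD\ind{\frac{n}{b},n}$, $\vR \in \BD\ind{b,n}$, and $\vP := \vP_{(b,n)}$. Conjugating collapses the outer permutations to give $\vP \vM \vP^\top = \vL_1 (\vP \vR \vP^\top) \vL_2$. The next step is to identify $\vP \vR \vP^\top$: applying \cref{thm:lr_permutation} with $b$ replaced by $\frac{n}{b}$, and using $\vP_{(\frac{n}{b},n)} = \vP^\top$, shows that conjugation by $\vP$ maps $\BD\ind{b,n}$ bijectively onto $\DB\ind{\frac{n}{b},n}$. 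Hence $\vL' := \vP \vR \vP^\top \in \DB\ind{\frac{n}{b},n}$, and
\[
  \vP \vM \vP^\top = \vL_1 \vL' \vL_2, \qquad \vL_1,\vL_2 \in \BD\ind{\tfrac{n}{b},n},\ \vL' \in \DB\ind{\tfrac{n}{b},n}.
\]

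The remaining step is to recognize any such triple product as an element of $\M^*\M\ind{\frac{n}{b},n}$. Since $\vI \in \DB\ind{\frac{n}{b},n}$ and $\BD\ind{\frac{n}{b},n}$ is closed under conjugate transposition, I would write
\[
  \vL_1 \vL' \vL_2 = (\vI \cdot \vL_1^*)^* (\vL' \cdot \vL_2),
\]
i.e.\ as $\vM_a^* \vM_b$ for $\vM_a = \vI \cdot \vL_1^*$ and $\vM_b = \vL' \cdot \vL_2$, each in $\M\ind{\frac{n}{b},n}$. This gives membership in $\M^*\M\ind{\frac{n}{b},n}$.

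The second claim follows by the same strategy, starting from the analogue of \cref{prop:mm-eqv-def} for $\M^*\M\ind{b,n}$ (i.e.\ writing any element as $\vR_1 \vL \vR_2$ with $\vR_1,\vR_2 \in \BD\ind{b,n}$, $\vL \in \DB\ind{b,n}$), then conjugating by $\vP^\top$ and again invoking the bijections between $\BD$ and $\DB$ classes induced by the permutation. The main obstacle, as usual with these index-juggling arguments, will be the bookkeeping: I need to keep straight which of the two permutations ($\vP$ vs.\ $\vP^\top$) and which block size ($b$ vs.\ $\frac{n}{b}$) maps a given class to which other class. All of this is contained in \cref{thm:lr_permutation} and its immediate consequences, so the core of the proof is genuinely a short chain of substitutions rather than a new calculation.
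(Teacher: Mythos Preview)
Your approach is essentially the same as the paper's: both start from \cref{prop:mm-eqv-def} (or its $\M^*\M$ analogue), conjugate by the permutation, apply \cref{thm:lr_permutation} to identify the middle factor, and then insert an identity factor to split the resulting triple product into the form $\vM_a^*\vM_b$ (or $\vM_a\vM_b^*$). The only cosmetic difference is where the identity is inserted and how the factors are grouped; the paper keeps four factors in the converse direction while you collapse to three first, but this is immaterial.
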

\begin{proof}
Suppose $\vM \in \M\M^{*(b,n)}$. By \cref{prop:mm-eqv-def} we can write $\vM = (\vP_{(b,n)}^\top \vL_1 \vP_{(b,n)}) \vR (\vP_{(b,n)}^\top\vL_2\vP_{(b,n)})$, where $\vL_1,\vL_2 \in \BD\ind{\frac{n}{b},n}$ and $\vR \in \BD\ind{b,n}$.
Thus $\vP_{(b,n)} \vM \vP_{(b,n)}^\top =
\vL_1 (\vP_{(b,n)} \vR \vP_{(b,n)}^\top) \vL_2$.

Letting $\vL_1' = \vL_1, \vL_2' = \vL_2^*, \vR_1' = \vP_{(b,n)} \vR \vP_{(b,n)}^\top$, and $\vR_2' = \vI$, we have $\vL_1', \vL_2' \in \BD\ind{\frac{n}{b}, n}$, $\vR_1', \vR_2' \in \DB\ind{\frac{n}{b}, n}$, and
$\vL_1 (\vP_{(b,n)} \vR \vP_{(b,n)}^\top) \vL_2 = 
\vL_1' \vR_1' \vR_2'^* \vL_2'^* = (\vL_1' \vR_1')(\vL_2' \vR_2')^* = \vM_1'\vM_2'^*$, where $\vM_1' = \vL_1' \vR_1', \vM_2' = \vL_2' \vR_2'$, so $\vM_1', \vM_2' \in \M^*\M\ind{\frac{n}{b},n}$.

Now instead suppose $\vM \in \M^*\M^{(b,n)}$. So $\vM = \vM_1^* \vM_2 = \vR_1^* \vL_1^* \vL_2 \vR_2$ for some $\vR_1, \vR_2 \in \BD\ind{b,n}$ and $\vL_1, \vL_2 \in \DB\ind{b,n}$. Thus by \cref{thm:lr_permutation} (and the fact that $\BD\ind{b,n}$ is closed under conjugate transposition) we can write $\vR_1^* = \vP_{(\frac{n}{b},n)}^\top \vR_1' \vP_{(\frac{n}{b}, n)} = \vP_{(b,n)} \vR_1' \vP_{(b, n)}^\top$ for some $\vR_1' \in \DB\ind{\frac{n}{b}, n}$, and similarly, can write $\vR_2 = \vP_{(b,n)} \vR_2' \vP_{(b,n)}^\top$ for some $\vR_2' \in \DB\ind{\frac{n}{b}, n}$. 

So $\vP_{(b,n)}^\top \vM \vP_{(b,n)} = \vR_1' (\vP_{(b, n)})^\top \vL_1^*)(\vL_2 \vP_{(b, n)})) \vR_2' =
 \vR_1' (\vP_{(b, n)}^\top \vL_1^* \vP_{(b, n)})(\vP_{(b, n)}^\top \vL_2 \vP_{(b, n)}) \vR_2' = (\vR_1' \vL_1')(\vL_2' \vR_2')$, where $\vL_1' = \vP_{(b, n)}^\top \vL_1^* \vP_{(b, n)}$, $\vL_2' = \vP_{(b, n)}^\top \vL_2 \vP_{(b, n)}$ are in $\BD\ind{\frac{n}{b}, n}$ by \cref{thm:lr_permutation}. Thus letting $\vM_1' = \vR_1'\vL_1'$, $\vM_2' = \vR_2^*\vL_2'^*$, we have $\vM = \vM_1' \vM_2'^*$ with $\vM_1', \vM_2' \in \M^{*(\frac{n}{b},n)}$.
\end{proof}

We now show that the class $\M\ind{b,n}$ strictly contains the class $\B\ind{n}$ of $n \times n$ butterfly matrices (as defined in \citet{dao2020kaleidoscope}). We first show two elementary ``helper'' results.

\begin{proposition}
\label{prop:bd-contain}
If $b,\, c \in (1, n)$ are such that $b$ divides $c$ and $c$ divides $n$, then $\BD\ind{b, n} \subseteq \BD\ind{c, n}$.
\end{proposition}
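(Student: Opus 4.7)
The plan is to show the inclusion by taking an arbitrary matrix $\vR \in \BD\ind{b,n}$, which has the structure $\vR = \diag(\vR_0, \ldots, \vR_{n/b-1})$ with each $\vR_i \in \F^{b \times b}$, and grouping its diagonal blocks together into larger blocks of size $c \times c$. Set $k = c/b$ (which is a positive integer since $b \mid c$). The natural candidate is to define, for each $j = 0, \ldots, n/c - 1$,
\[
  \vR'_j = \diag\bigl(\vR_{jk}, \vR_{jk+1}, \ldots, \vR_{(j+1)k-1}\bigr),
\]
so $\vR'_j \in \F^{c \times c}$. Then I would verify directly that $\vR = \diag(\vR'_0, \ldots, \vR'_{n/c-1})$, witnessing that $\vR \in \BD\ind{c,n}$ in the sense of \cref{def:square_r}.

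An equivalent approach, perhaps cleaner, would be to use the index characterization of Proposition~\ref{prop:R-eqv-def}. It suffices to show that if $\vR[i,j] \ne 0$ then $\lfloor i/c \rfloor = \lfloor j/c \rfloor$. Since $\vR \in \BD\ind{b,n}$, we already have $\lfloor i/b \rfloor = \lfloor j/b \rfloor$. Writing $c = kb$, one checks the identity $\lfloor i/c \rfloor = \lfloor \lfloor i/b \rfloor / k \rfloor$ (and analogously for $j$), from which the desired equality of ``$c$-blocks'' is immediate.

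There is no real obstacle here: both approaches reduce to the elementary observation that a block-diagonal structure at granularity $b$ refines a block-diagonal structure at granularity $c$ whenever $b \mid c$. The only thing to be careful about is keeping the indexing conventions of \cref{def:$i$} and the divisibility hypotheses aligned, so that the grouping of $k$ consecutive $b$-blocks really does fit inside a single $c$-block with no overlap across the coarser diagonal boundary. Once that bookkeeping is in place, the inclusion $\BD\ind{b,n} \subseteq \BD\ind{c,n}$ follows directly from the definition.
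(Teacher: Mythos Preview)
Your proposal is correct, and your second approach---using the index characterization of Proposition~\ref{prop:R-eqv-def} and the implication $\lfloor i/b\rfloor=\lfloor j/b\rfloor \Rightarrow \lfloor i/c\rfloor=\lfloor j/c\rfloor$ when $b\mid c$---is exactly the argument the paper gives (stated there in contrapositive form). Your first approach, explicitly regrouping $k=c/b$ consecutive $b\times b$ blocks into a single $c\times c$ block, is an equally valid and equivalent way to see the same containment.
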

\begin{proof}
Suppose $\vR \in \BD\ind{b, n}$. Then by \cref{prop:R-eqv-def}, $\vR[i, j] = 0$ whenever $\floor{\frac{i}{b}} \ne \floor{\frac{j}{b}}$. Thus, whenever $\floor{\frac{i}{c}} \ne \floor{\frac{j}{c}}$, $\vR[i, j] = 0$, since $\floor{\frac{i}{c}} \ne \floor{\frac{j}{c}}$ implies $\floor{\frac{i}{b}} \ne \floor{\frac{j}{b}}$ by the assumption that $b$ divides $c$.
Applying \cref{prop:R-eqv-def} again, this means $\vR \in \BD\ind{c,n}$ as well.
\end{proof}

\begin{proposition}
\label{prop:db-contain}
If $b,\, c \in (1, n)$ are such that $b$ divides $c$ and $c$ divides $n$, then $\DB\ind{c, n} \subseteq \DB\ind{b, n}$.
\end{proposition}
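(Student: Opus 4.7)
The plan is to reduce the containment to a trivial number-theoretic fact via the entrywise characterization of $\DB$ matrices that was established in \cref{prop:L-eqv-def}. That proposition says $\vL\in\DB\ind{b,n}$ iff $\vL[i,j]=0$ whenever $i\bmod b\neq j\bmod b$, and analogously $\vL\in\DB\ind{c,n}$ iff $\vL[i,j]=0$ whenever $i\bmod c\neq j\bmod c$. So the first step is to invoke this characterization to replace a claim about block structure with a claim about the support of a matrix in terms of residues of its row/column indices.

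Given that reformulation, the containment $\DB\ind{c,n}\subseteq\DB\ind{b,n}$ reduces to verifying the implication $i\not\equiv j\pmod{b}\Longrightarrow i\not\equiv j\pmod{c}$. This is the contrapositive of a standard divisibility fact: if $c\mid(i-j)$, then since $b\mid c$ we also have $b\mid(i-j)$, i.e., $i\equiv j\pmod{b}$. Therefore, given any $\vL\in\DB\ind{c,n}$ and indices $i,j$ with $i\bmod b\neq j\bmod b$, we have $i\bmod c\neq j\bmod c$, hence $\vL[i,j]=0$ by the characterization of $\DB\ind{c,n}$; this is exactly the defining condition for $\vL\in\DB\ind{b,n}$, giving the claimed inclusion.

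The only point worth flagging (rather than a real obstacle) is that the direction of inclusion is opposite to that of \cref{prop:bd-contain}: a matrix in $\DB\ind{c,n}$ has at most $n^2/c$ nonzero entries, whereas one in $\DB\ind{b,n}$ may have up to $n^2/b$, so the larger-block version is the sparser, hence smaller, class. Once the entrywise reformulation via \cref{prop:L-eqv-def} is written down, the argument is a one-line contrapositive and is structurally the mirror image of \cref{prop:bd-contain}, with the roles of within-block residue $i_0=i\bmod b$ and block index $i_1=\lfloor i/b\rfloor$ swapped.
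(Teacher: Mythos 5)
Your proof is correct and follows the paper's argument exactly: both invoke the entrywise characterization of \cref{prop:L-eqv-def} and reduce the containment to the fact that $b \mid c$ makes congruence mod $c$ imply congruence mod $b$ (equivalently, the contrapositive you state). Your observation about the direction of inclusion being reversed relative to \cref{prop:bd-contain} is also accurate.
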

\begin{proof}
Suppose $\vL \in \DB\ind{c, n}$. Then by \cref{prop:L-eqv-def}, $\vL[i, j] = 0$ whenever $(i \mod c) \ne (j \mod c)$. Thus, whenever $(i \mod b) \ne (j \mod b)$, $\vL[i, j] = 0$, since $(i \mod b) \ne (j \mod b)$ implies $(i \mod c) \ne (j \mod c)$ by the assumption that $b$ divides $c$.
Applying \cref{prop:L-eqv-def} again, this means $\vL \in \DB\ind{b,n}$ as well.
\end{proof}

\begin{theorem}
\label{thm:b_contained}
Let $n \ge 4$ be a power of 2. The class of matrices $\B\ind{n}$ is a subset of the class $\M\ind{b, n}$, for all $b \in (1, n)$ that divide $n$. When $n \ge 512$ it is a strict subset.
\end{theorem}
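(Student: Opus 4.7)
My plan has two parts. First, establish the inclusion $\B\ind{n} \subseteq \M\ind{b,n}$ by grouping the butterfly factor matrices at the cutoff size $b$, with the ``small'' factors collapsing into $\vR \in \BD\ind{b,n}$ and the ``large'' factors collapsing into $\vL \in \DB\ind{b,n}$. Second, prove strictness for $n \ge 512$ by a dimension count that shows $\M\ind{b,n}$ already contains a linear subspace too large to fit inside $\B\ind{n}$.

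\textbf{Inclusion.} For any $\vB \in \B\ind{n}$, write $\vB = \vB_n \vB_{n/2} \cdots \vB_2$ with $\vB_i \in \BF\ind{n, i}$. Since $n = 2^s$ and $b \in (1, n)$ divides $n$, we have $b = 2^t$ for some $1 \le t \le s-1$. Split the product at $b$ by setting $\vR = \vB_b \vB_{b/2} \cdots \vB_2$ and $\vL = \vB_n \vB_{n/2} \cdots \vB_{2b}$, so that $\vB = \vL \vR$. For $\vR$: each factor $\vB_i$ with $i \le b$ is block-diagonal with blocks of size $i \times i$, hence lies in $\BD\ind{i, n}$, which by \cref{prop:bd-contain} sits inside $\BD\ind{b, n}$; matrices that are block-diagonal with a common block size are closed under multiplication, so $\vR \in \BD\ind{b, n}$. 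For $\vL$: the definition of a size-$i$ butterfly factor (two $\frac{i}{2} \times \frac{i}{2}$ diagonal sub-blocks in each block row) implies that $\vB_i[r,c] \ne 0$ forces $r \equiv c \pmod{i/2}$. When $i \ge 2b$ and both are powers of 2, $b$ divides $i/2$, so this pattern collapses to $r \equiv c \pmod{b}$. This mod-$b$ nonzero pattern is preserved under matrix multiplication (a nonzero entry of $\vA\vB$ needs some $k$ with $r \equiv k \equiv c \pmod b$), so $\vL$ inherits it; by \cref{prop:L-eqv-def}, $\vL \in \DB\ind{b, n}$, and thus $\vB \in \M\ind{b,n}$.

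\textbf{Strictness for $n \ge 512$.} Observe that $\M\ind{b,n}$ contains the linear subspace $\BD\ind{b,n}$ (taking $\vL = \vI$), of dimension $nb$, and also the linear subspace $\DB\ind{b,n}$ (taking $\vR = \vI$), of dimension $n^2/b$. On the other hand, $\B\ind{n}$ is the image of a polynomial map from an ambient parameter space of dimension $2n \log_2 n$ (the $\log_2 n$ butterfly factor matrices, each with $2n$ nonzeros), so its Zariski closure has dimension at most $2n \log_2 n$. Any linear subspace contained in $\B\ind{n}$ is, being already Zariski closed, contained in this closure, so its dimension is at most $2n \log_2 n$. For $b \in (1, n)$ dividing $n$, either $b \le \sqrt n$ (giving $n^2/b \ge n^{3/2}$) or $b \ge \sqrt n$ (giving $nb \ge n^{3/2}$); either way $\max(nb, n^2/b) \ge n^{3/2}$. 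For $n \ge 512$, one checks $n^{3/2} > 2n \log_2 n$ (at $n = 512$, $n^{3/2} \approx 11585$ while $2n \log_2 n = 9216$, and $\sqrt n/(2\log_2 n)$ is increasing in $n$). Hence one of $\BD\ind{b,n}$ or $\DB\ind{b,n}$ is a linear subspace of $\M\ind{b,n}$ of dimension strictly exceeding the dimension bound for $\B\ind{n}$, so it cannot be contained in $\B\ind{n}$, giving the strict inclusion.

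\textbf{Main obstacle.} The inclusion step is essentially bookkeeping on nonzero patterns and poses little difficulty once one has the mod-$b$ characterization in \cref{prop:L-eqv-def} and the containment in \cref{prop:bd-contain}. The delicate step is strictness: $\B\ind{n}$ is a non-linear variety, not a subspace, so one must justify the dimension bound via its polynomial parametrization. The standard fact that the image of a polynomial map has Zariski-closure dimension at most the parameter-space dimension suffices, but needs to be stated cleanly so that the comparison between the (affine) linear subspaces $\BD\ind{b,n}, \DB\ind{b,n}$ and the closure $\overline{\B\ind{n}}$ is rigorous.
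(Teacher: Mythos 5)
Your proof is correct and follows essentially the same route as the paper: split the butterfly factorization at block size $b$ so that the small factors collapse into $\vR \in \BD\ind{b,n}$ and the large ones into $\vL \in \DB\ind{b,n}$, then prove strictness by a parameter count comparing $n^{3/2}$ against $2n\log_2 n$. Your strictness step is in fact slightly more careful than the paper's, which informally asserts that at least $n^{3/2}$ parameters are required to describe $\M\ind{b,n}$; you pin this down by exhibiting a linear subspace of dimension at least $n^{3/2}$ inside $\M\ind{b,n}$ and bounding the dimension of the Zariski closure of $\B\ind{n}$ by that of its parameter space.
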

\begin{proof}
Recall from \cref{sec:butterfly} that if $\vB \in \B\ind{n}$, it has a \emph{butterfly factorization} 
$\vB = \vB_n \vB_{n/2} \hdots \vB_2$, where each $\vB_i \in \BF\ind{n, i}$.

Consider multiplying together the factors $\vB_b \vB_{b/2} \dots \vB_2$ (where $b \in (1, n)$ divides $n$). Since $\vB_i \in \BF\ind{n,i}$, by definition it is block diagonal with diagonal blocks of size $i \times i$; in other words, $\vB_i \in \BD\ind{i, n}$. Thus, each of the matrices $\vB_b, \vB_{b/2}, \dots, \vB_2$ is in $\BD\ind{b, n}$ (by \cref{prop:bd-contain}), i.e. block-diagonal with block size $b \times b$. This means their product $\vB_b \vB_{b/2} \dots \vB_2$ is also block diagonal with block size $b \times b$, i.e., it is in $\BD\ind{b, n}$.

Now, note that since $\vB_i \in \BF\ind{n,i}$, by definition it is a block matrix with blocks of size $i/2 \times i/2$, where each block is a diagonal matrix (note that some of these blocks are zero, except for the case of $\vB_n$). In other words, $\vB_i \in \DB\ind{i/2, n}$. Thus, for all $i \in \{n, n/2, \dots, 2b\}$, $\vB_i \in \DB\ind{(2b)/2, n} = \DB\ind{b, n}$ (by \cref{prop:db-contain}). So, their product $\vB_n \vB_{n/2} \dots \vB_{2b}$ is in $\DB\ind{b, n}$ as well, as by \cref{thm:lr_permutation} we can write $\vB_n \vB_{n/2} \dots \vB_{2b} = \vP_{(b,n)}^\top (\vP_{(b,n)} \vB_n \vP_{(b,n)}^\top) (\vP_{(b,n)} \vB_{n/2} \vP_{(b,n)}^\top) \dots (\vP_{(b,n)} \vB_{2b} \vP_{(b,n)}^\top) \vP_{(b,n)}$ and each of the $\vP_{(b,n)} \vB_i \vP_{(b,n)}^\top$'s in the preceding expression is in $\BD\ind{\frac{n}{b}, n}$.

Thus, if we let $\vL = \vB_n \vB_{n/2} \dots \vB_{2b}$ and $\vR = \vB_b \vB_{b/2} \dots \vB_2$,  we have $\vB = \vL\vR$ and $\vL \in \DB\ind{b, n}$, $\vR \in \BD\ind{b, n}$, which means that $\vB \in \M\ind{b,n}$ (\cref{def:block_Monarch}).

To show that the inclusion is strict, notice that any $\vM \in \M\ind{b,n}$ is the product of $\vL$ and $\vR$, where $\vR \in \BD\ind{b, n}$ and $\vP_{(b,n)}^\top \vL \vP_{(b,n)} \in \BD\ind{\frac{n}{b}, n}$ (by \cref{thm:lr_permutation}). Notice that the identity matrix is contained in both $\BD\ind{b,n}$ and $\DB\ind{b,n}$. Suppose first that $b \le \sqrt{n}$. Then even if we set $\vR$ to the identity, $\vM$ has at least $\frac{n^2}{b} \ge n^{3/2}$ free parameters (the entries in the blocks of the block-diagonal matrix $\vP_{(b,n)}^\top \vL \vP_{(b,n)}$ can be arbitrary, and there are $b$ such blocks each of size $\frac{n}{b}$). Similarly, in the case $b > \sqrt{n}$, we can set $\vL$ to the identity, and $\vM$ has at least $nb \ge n^{3/2}$ free parameters (the entries of the block-diagonal matrix $\vR$ can be arbitrary, and there are $nb$ total of these). Thus, at least $n^{3/2}$ parameters are required to uniquely describe any matrix in $\M\ind{b,n}$. However, a butterfly matrix in $\B\ind{n}$ has only $2n \log_2 n$ parameters. For $n > 256$, $2n \log_2 n < n^{3/2}$. (Note that this analysis is not tight: a more careful analysis can show the inclusion is strict even for smaller values of $n$.)

\end{proof}

We end this section with a theorem on the expressivity of the ``monarch hierarchy'' (products of monarch matrices), which follows from Theorem 1 of \cite{dao2020kaleidoscope}.
\begin{theorem}[Monarch hierarchy expressivity]
\label{thm:monarch_hierarchy}
Let $\vM$ be an $n \times n$ matrix such that matrix-vector multiplication of $\vM$ and an arbitrary vector $\vv$ (i.e., computation of $\vM \vv$) can
be represented as a linear arithmetic circuit with depth $d$ and $s$ total gates. Let $b \in (1, n)$ be a power of 2 that divides $n$.
Then, $\vM \in (\M\M^{*(b, n)})^{O(d)}_{O(s/n)}$.
\end{theorem}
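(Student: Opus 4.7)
The plan is to reduce directly to Theorem 1 of \cite{dao2020kaleidoscope} via the containment $\B\ind{n}\subseteq\M\ind{b,n}$ from \cref{thm:b_contained}, and then propagate this containment through the definition of the respective hierarchies.

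First, I would invoke Theorem 1 of \cite{dao2020kaleidoscope}: under the stated hypotheses on $\vM$ (depth $d$, $s$ gates), we obtain $\vM\in(\B\B^{*(n)})^{O(d)}_{O(s/n)}$. Unrolling the definition of this class, $\vM=\bigl(\prod_{i=1}^{w}\vK_i\bigr)[1{:}n,1{:}n]$ with $w=O(d)$, each $\vK_i\in\B\B^{*(e\cdot n)}$, and $e=O(s/n)$. By the standard zero-padding convention for butterflies, we may assume $en$ is the next power of $2$ above $es/n$; since $b$ is a power of $2$ dividing $n$, it still divides $en$.

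Next I would lift the pointwise containment to the product form. For each factor, write $\vK_i=\vN_i\vN_i'^{\,*}$ for some $\vN_i,\vN_i'\in\B\ind{en}$. By \cref{thm:b_contained} applied at size $en$ with the same block size $b$, we have $\B\ind{en}\subseteq\M\ind{b,en}$, so $\vN_i,\vN_i'\in\M\ind{b,en}$. By \cref{def:block_MM}, this yields $\vK_i\in\M\M^{*(b,en)}$.

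Finally, I would invoke \cref{def:block_MM} of the Monarch hierarchy: since $\vM$ is the upper-left $n\times n$ block of a product of $w=O(d)$ matrices, each in $\M\M^{*(b,en)}$ with $e=O(s/n)$, we conclude $\vM\in(\M\M^{*(b,n)})^{O(d)}_{O(s/n)}$, as desired.

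The proof is essentially a short chain of containments, so there is no real obstacle; the only subtlety worth flagging is the size-compatibility check for the butterfly-to-Monarch inclusion at size $en$ rather than $n$, which requires that $b$ (a power of $2$ dividing $n$) also divides $en$ — a triviality handled by the padding convention in the definition of the kaleidoscope/Monarch hierarchy.
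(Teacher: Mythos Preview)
Your proposal is correct and follows essentially the same approach as the paper: invoke Theorem~1 of \cite{dao2020kaleidoscope} to place $\vM$ in the kaleidoscope hierarchy, then use the containment $\B\ind{en}\subseteq\M\ind{b,en}$ from \cref{thm:b_contained} to propagate the inclusion $\B\B^{*(en)}\subseteq\M\M^{*(b,en)}$ through to the hierarchy classes. The paper states the containment chain in one line, whereas you unroll the definitions more explicitly and flag the size-compatibility check that $b$ divides $en$, which the paper leaves implicit.
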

\begin{proof}
Theorem 1 of \citet{dao2020kaleidoscope} says that if $n$ is a power of 2 and $\vA$ is an $n \times n$ matrix such that multiplying any vector $v$ by $\vA$ can be represented as a linear arithmetic circuit with depth $\le d$ and $\le s$ total gates, then $\vA \in (\B\B^{*(n)})^{O(d)}_{O(s/n)}$ (this is the ``kaleidoscope representation'' of $\vA$).

Recall from \cref{thm:b_contained} that for any $b \in (1, n)$ that is a power of 2 and divides $n$, $\M\ind{b, n} \supset \B\ind{n}$; thus, this implies $\M\M^{*(b,e\cdot n)} \supset \B\B^{*(e\cdot n)}$, and in turn $(\M\M^{*(b,n)})^w_e \supset (\B\B^{*(n)})^w_e$.

As $\vA \in  (\B\B^{*(n)})^{O(d)}_{O(s/n)}$, we thus have $\vA \in  (\M\M^{*(b,n)})^{O(d)}_{O(s/n)}$.
\end{proof}

As per \cite{dao2020kaleidoscope}, the class of kaleidoscope matrices $(\B\B^{*(n)})^{O(d)}_{O(s/n)}$ has $O(ds \log s)$ parameters and runtime, compared to the $O(s)$ parameters and runtime of the circuit. Note that at worst, $s$ is $O(n^2)$.

Define $f(n,s)$ to be the largest power of 2 that is $\le \min\left\{\ff{n}{2}, \sqrt{s}\right\}$. Note that $f(n,s) = O(\sqrt{s})$, and since $s = O(n^2)$, $f(n,s) = \Omega(\sqrt{s})$, so $f(n,s) = \Theta(\sqrt{s})$. %
We thus have $\vA \in (\M\M^{*(f(n,s), n)})^{O(d)}_{O(s/n)}$. The class $(\M\M^{*(f(n,s), n)})^{O(d)}_{O(s/n)}$ has $O(d\frac{s^2}{f(n,s)} + dsf(n,s)) = O(ds^{3/2})$ parameters. Thus, the monarch representation of $\vA$ is suboptimal by at most an $O(d\sqrt{s})$ factor compared to the $O(d{}\,\log s)$ of kaleidoscope.

\subsection{General rectangular matrices}
\label{sec:Monarch_rect}
In this section, we extend the Monarch parametrization to apply to \emph{rectangular} matrices, and prove some basic properties of the relevant matrix classes. (Note that our subsequent theoretical results (\cref{sec:proofs}) do not depend on this section, as they focus on the square parametrization.)

For the rest of the section, we will assume that $n_1, n_2, n_3, b_1, b_2 , b_3 \ge 1$ are integers such that:
\begin{itemize}
\item $b_i$ divides $n_i$ for all $1\le i\le 3$, and 
\item $\frac{n_1}{b_1} = \frac{n_2}{b_2}$.
\end{itemize}

We begin with the definition of the following class of rectangular block-diagonal matrices:
\begin{definition}
\label{def:monarch_rectangular}
For $0\le i< \frac{n}{b_1}$, let $\vR_{i}\in\F^{b_2 \times b_1}$ be a $b_2 \times b_1$ matrix. Then define the matrix $\vR \in \F^{n_2\times n_1}$ as follows:
\begin{equation}
 \label{eq:def-rect-R}
  \vR = \diag\left(\vR_0, \dots, \vR_{\frac {n_1}{b_1}-1}\right).
\end{equation}
\end{definition}
We say that $\vR$ has {\em block size} $b_2 \times b_1$. Recall that we have assumed $\frac {n_1}{b_1}=\frac{n_2}{b_2}$, so~\cref{eq:def-rect-R} is well-defined.
(Note that the number of possible nonzero values in $\vR$ is $\frac {n_1}{b_1}\cdot b_1 \times b_2 =n_1b_2$.)
We denote the class of all matrices $\vR$ expressible in this form by $\BD\ind{b_2 \times b_1, n_2 \times n_1}$.
Note that this class is only defined when $\frac {n_1}{b_1}=\frac{n_2}{n_2}$.

We restate the above definition equivalently as:
\begin{proposition} \label{prop:rect-R-eqv-def} 
$\vR\in\F^{n_2\times n_1}$ is in $\BD\ind{b_2 \times b_1, n_2 \times n_1}$ (with $\frac {n_1}{b_1}=\frac{n_2}{n_2}$) if and only if the following holds for any
$0\le i < n_2$ and $0\le j< n_1$. Let $i\equiv\baseb{i_1}{i_0}{b_2}$ and $j\equiv\baseb{j_1}{j_0}{b_1}$ (recalling this notation from \cref{def:$i$}.  Then
\begin{enumerate}
    \item\label{item:rect-zero-loc-R} if $i_1\ne j_1$, then $\vR[i,j]=0$. 
    \item \label{item:rect-nonzero-loc-R} Else (i.e., when $i_1=j_1$), then $\vR[i,j]=\vR_{i_1}[i_0,j_0]$.
\end{enumerate}

\end{proposition}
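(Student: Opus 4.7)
The plan is to verify both directions of the biconditional by directly tracing how the index decompositions $i \equiv \baseb{i_1}{i_0}{b_2}$ and $j \equiv \baseb{j_1}{j_0}{b_1}$ locate the entry $\vR[i,j]$ within the block-diagonal structure of~\cref{def:monarch_rectangular}. This proposition is the rectangular analogue of~\cref{prop:R-eqv-def}, and the same bookkeeping argument carries over once we track the two distinct block dimensions $b_1, b_2$ separately.

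For the forward direction, I would start from $\vR = \diag(\vR_0, \dots, \vR_{n_1/b_1 - 1})$ and observe that the $k$-th block $\vR_k \in \F^{b_2 \times b_1}$ occupies rows $\{k b_2, \dots, (k+1) b_2 - 1\}$ and columns $\{k b_1, \dots, (k+1) b_1 - 1\}$. The integer-division convention gives $i_1 = \floor{i / b_2}$ and $j_1 = \floor{j / b_1}$, so $\vR[i,j]$ lies in block $k$ iff $i_1 = j_1 = k$. Entries outside every diagonal block are zero by the block-diagonal structure, establishing item~\ref{item:rect-zero-loc-R}; inside block $k$ with $i_1 = j_1 = k$, the offsets $i_0 = i - i_1 b_2$ and $j_0 = j - j_1 b_1$ give $\vR[i,j] = \vR_k[i_0, j_0] = \vR_{i_1}[i_0,j_0]$, establishing item~\ref{item:rect-nonzero-loc-R}.

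For the reverse direction, suppose $\vR$ satisfies items~\ref{item:rect-zero-loc-R} and~\ref{item:rect-nonzero-loc-R}. I would define candidate blocks $\vR_k \in \F^{b_2 \times b_1}$ by $\vR_k[a, b] := \vR[k b_2 + a,\, k b_1 + b]$ for $0 \le a < b_2$, $0 \le b < b_1$, and $0 \le k < n_1/b_1$. The hypothesis $n_1/b_1 = n_2/b_2$ ensures this range of $k$ also enumerates the row blocks, so $\diag(\vR_0, \dots, \vR_{n_1/b_1 - 1})$ has the correct $n_2 \times n_1$ shape. Item~\ref{item:rect-nonzero-loc-R} forces agreement between $\vR$ and this block-diagonal matrix on the block-diagonal support, while item~\ref{item:rect-zero-loc-R} forces both matrices to vanish everywhere else; hence $\vR$ has the form required by~\cref{def:monarch_rectangular}.

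The only subtlety to track is dimensional consistency, namely that the hypothesis $n_1/b_1 = n_2/b_2$ makes the notion of ``block index $k$'' well-defined simultaneously along rows and columns, so that the decompositions $i = i_1 b_2 + i_0$ and $j = j_1 b_1 + j_0$ partition the index set into matching $b_2 \times b_1$ tiles. Beyond this, there is no hard step; the argument is a direct transcription of the square-case proof, with the two block dimensions kept separate throughout.
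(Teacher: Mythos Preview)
Your proposal is correct. The paper does not give an explicit proof of this proposition at all---it simply introduces it with ``We restate the above definition equivalently as'' and leaves the verification implicit; your direct index-bookkeeping argument is precisely the natural unpacking of \cref{def:monarch_rectangular} that the paper takes for granted.
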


Before we define the rectangular $\vL$, we first need to define the notion of a `wrapped diagonal' matrix:
\begin{definition} 
\label{def:wrapped-diag}
A {\em wrapped diagonal} matrix $\mx{S} \in\F^{b_3\times b_2}$ is defined as follows. First assume $b_2\le b_3$. Then for any $0\le i<b_3$ and $0\le j<b_2$, we have the following. If $i\mod{b_2}\ne j$, then $\vS[i,j]=0$. (If $b_2>b_3$, then instead apply the previous definition to $\vS^{\top}$.)

\end{definition}

We now define the following class of block matrices with each block a \emph{wrapped diagonal} matrix.
\begin{definition}\label{def:rect-Matrix L}
Let $\vL\in\F^{n_3\times n_2}$ have the form: 
    \begin{equation}
        	\label{eq:rect-def-L}
    \vL=
    	\begin{bmatrix}
    		\mx{S}_{0,0} & \dots & \mx{S}_{0,\frac{n_2}{b_2} -1} \\
    		\vdots & \ddots & \vdots \\
    		\mx{S}_{\frac{n_3}{b_3} -1,0} & \dots & \mx{S}_{\frac{n_3}{b_3} -1,\frac{n_2}{b_2} -1}
    	\end{bmatrix},
    \end{equation}
where each $\vS_{\cdot,\cdot}$ is a wrapped diagonal matrix in $\F^{b_3 \times b_2}$.
\end{definition}
We say that $\vL$ has {\em block size} $b_3 \times b_2$.
(Note that the number of possible nonzero values in $\vL$ is $\parens{\frac {n_2}{b_2}\cdot \frac{n_3}{b_3}} \max(b_2,b_3)=\frac{n_2 \cdot n_3}{\min(b_2,b_3)}$.)
We denote the class of all matrices $\vL$ expressible in this form by $\DB\ind{b_3 \times b_2, n_3 \times n_2}$.

We restate the above definition equivalently as:

\begin{proposition}
\label{prop:rect-L-eqv-def}
$\vL\in\F^{n_3\times n_2}$ is in $\DB\ind{b_3 \times b_2, n_3 \times n_2}$ if and only if the following holds for any
$0\le i < n_3$ and $0 \le j< n_2$. Let $i\equiv\baseb{i_1}{i_0}{b_3}$ and $j\equiv\baseb{j_1}{j_0}{b_2}$.  Assuming $b_2 \le b_3$, we have:
\begin{enumerate}
    \item\label{item:rect-zero-loc-L} if $i_0\mod{b_2}\ne j_0$, then $\vL[i,j]=0$. 
    \item \label{item:rect-nonzero-loc-L} Else, (i.e., when $i_0\mod{b_2}=j_0$), then $\vL[i,j]=\vS_{i_1,j_1}[i_0,j_0]$.
\end{enumerate}
If $b_2>b_3$, then in the above, the condition ``$i_0\mod{b_2}\ne j_0$'' gets replaced by ``$j_0\mod{b_2}\ne i_0$.''
\end{proposition}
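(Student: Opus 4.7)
The plan is to unpack \cref{def:rect-Matrix L} and \cref{def:wrapped-diag} in parallel, identify each global entry of $\vL$ with a specific entry of a specific block, and then translate the wrapped-diagonal structure of those blocks into the entrywise mod-condition. This mirrors the approach used for the square analog in \cref{prop:L-eqv-def}.

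First, given global indices $0 \le i < n_3$ and $0 \le j < n_2$, I would write $i \equiv \baseb{i_1}{i_0}{b_3}$ and $j \equiv \baseb{j_1}{j_0}{b_2}$, so that $i = i_1 b_3 + i_0$ with $0\le i_0<b_3$ and $j = j_1 b_2 + j_0$ with $0\le j_0<b_2$. By the block-matrix display in \cref{eq:rect-def-L}, this identifies $\vL[i,j]$ bijectively with the $(i_0, j_0)$ entry of the block $\vS_{i_1, j_1}$; that is, $\vL[i,j] = \vS_{i_1, j_1}[i_0, j_0]$. All of the remaining work is to characterize this block entry.

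Second, for the case $b_2 \le b_3$, I would apply \cref{def:wrapped-diag} to each block $\vS_{i_1, j_1} \in \F^{b_3 \times b_2}$: such a wrapped diagonal is zero at $(i_0, j_0)$ exactly when $i_0 \bmod b_2 \ne j_0$, and takes arbitrary values otherwise. Combined with the identification from the previous step, this gives the forward direction ($\vL \in \DB\ind{b_3 \times b_2, n_3 \times n_2}$ implies items~1--2). For the converse, given any $\vL$ satisfying the two entrywise conditions, one \emph{defines} its blocks by $\vS_{i_1, j_1}[i_0, j_0] \defeq \vL[i_1 b_3 + i_0, j_1 b_2 + j_0]$; item~1 then says precisely that each such block is a wrapped diagonal matrix, and \cref{eq:rect-def-L} is satisfied by construction, so $\vL \in \DB\ind{b_3 \times b_2, n_3 \times n_2}$.

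Third, for the remaining case $b_2 > b_3$, \cref{def:wrapped-diag} instructs us to apply the $b_2 \le b_3$ rule to $\vS_{i_1, j_1}^{\top} \in \F^{b_2 \times b_3}$; transporting that rule back through the transpose (which swaps the roles of row/column indices within each block) yields exactly the replacement condition stated in the proposition, with the argument otherwise identical. The whole proof is essentially definitional bookkeeping, and the only point that requires any care is this transposition step in the $b_2 > b_3$ case, where one must correctly translate zeros of $\vS_{i_1,j_1}^\top$ back into zeros of $\vS_{i_1,j_1}$.
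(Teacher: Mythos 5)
Your main case is right, and since the paper presents this proposition as a bare restatement of \cref{def:rect-Matrix L} with no written proof, your definitional unpacking — identifying $\vL[i,j]$ with $\vS_{i_1,j_1}[i_0,j_0]$ via $i=i_1b_3+i_0$, $j=j_1b_2+j_0$, and then reading off the wrapped-diagonal support condition blockwise — is exactly the intended argument for the $b_2\le b_3$ case, including the converse direction where you define the blocks from $\vL$ and let item~1 certify that they are wrapped diagonal.

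The gap is in the $b_2>b_3$ case, which is precisely the step you flag as "the only point that requires any care" and then do not carry out. If you actually transport the rule through the transpose, \cref{def:wrapped-diag} applied to $\vS_{i_1,j_1}^\top\in\F^{b_2\times b_3}$ says $\vS_{i_1,j_1}^\top[i',j']=0$ whenever $i'\bmod b_3\ne j'$ (the modulus is the \emph{column} count of the matrix the rule is applied to, which for $\vS^\top$ is $b_3$). Renaming $i'=j_0$, $j'=i_0$, this gives $\vS_{i_1,j_1}[i_0,j_0]=0$ whenever $j_0\bmod b_3\ne i_0$ — \emph{not} the condition $j_0\bmod b_2\ne i_0$ stated in the proposition. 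These genuinely differ: since $0\le j_0<b_2$, the stated condition collapses to $j_0\ne i_0$, whereas the correct condition permits nonzero entries at, e.g., $(i_0,j_0)=(0,b_3)$ when $b_2>b_3$ (this is what makes the block "wrapped"). So either the proposition's displayed condition has a typo ($b_2$ should be $b_3$), or your claim that the transposition "yields exactly the replacement condition stated" is false; in either case your proof does not close this step, because it asserts agreement with the target statement at the one place where the computation actually disagrees with it. To repair the write-up, do the renaming explicitly and state which condition you end up proving.
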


Using the above definitions, we now define the class of rectangular Monarch matrices.
\begin{definition}[Rectangular Monarch Matrix]
\label{def:block_Monarch}
Let $\vM \in \F^{n_3 \times n_1}$ be a matrix of the form: 
    \begin{equation}
        	\label{eq:Monarch-general}
    \vM= \vL \vR
    \end{equation}
    where $\vL \in \DB\ind{b_3 \times b_2, n_3 \times n_2}$ and $\vR \in \BD\ind{b_2 \times b_1, n_2 \times n_1}$.
\end{definition}
(As mentioned before, we assume $b_i$ divides $n_i$ for $i = 1,2,3$ and that $n_1/b_1 = n_2/b_2$.)
We denote the class of all matrices $\vM$ expressible in this form by $\M\ind{(b_1,b_2,b_3), (n_1,n_2,n_3)}$. Observe that when $b_1 = b_2 = b_3 = b$ and $n_1 = n_2 = n_3 = n$, this is exactly the matrix class $\M\ind{b, n}$ in \cref{def:block_Monarch}.

We are now ready to prove our main result in this section, which essentially follows from the observation that if we permute the rows and columns of $\vL$ such that the row/column block size in $\vL$ becomes the number of row/columns blocks in the permuted matrix (and vice-versa) then the permuted matrix has the form of $\vR$.

\begin{theorem} Let $1\le b,n_2,n_3$ be such that $b$ divides $n_2$ and $n_3$.
Suppose $\vL\in\F^{n_3\times n_2} \in \DB\ind{b \times b, n_3 \times n_2}$.
Then if we define
\[\vR'=\vP_{(b,n_3)}\cdot\vL\cdot\vP_{(b,n_2)}^\top,\]
we have that $\vR' \in \BD\ind{\frac{n_3}{b_3} \times \frac{n_2}{b_2}, n_3 \times n_2}$.
\end{theorem}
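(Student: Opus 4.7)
The plan is to follow the same template as the proof of \cref{thm:lr_permutation}, adapted to the rectangular setting where $\vL$ has $n_3$ rows and $n_2$ columns (rather than $n$ of each). The key observation is unchanged: left-multiplication by $\vP_{(b,n_3)}$ permutes the rows of $\vL$ according to $\sigma_{(b,n_3)}$, and right-multiplication by $\vP_{(b,n_2)}^\top$ permutes its columns according to $\sigma_{(b,n_2)}$ (this is the rectangular analog of the identity quoted at the start of the proof of \cref{thm:lr_permutation}, and follows from the same convention $\vP_{(b,n)}^\top = \vP_{(n/b,n)}$). Consequently, for every $0\le i<n_3$ and $0\le j<n_2$,
\[
\vR'[\sigma_{(b,n_3)}(i),\, \sigma_{(b,n_2)}(j)] \;=\; \vL[i,j].
\]

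Next, I would fix arbitrary indices and write them in base-$b$ form $i\equiv(i_1,i_0)_b$ and $j\equiv(j_1,j_0)_b$. By \cref{def:sigma-b}, the permuted indices admit the base-$(n_3/b)$ and base-$(n_2/b)$ representations $\sigma_{(b,n_3)}(i)\equiv(i_0,i_1)_{n_3/b}$ and $\sigma_{(b,n_2)}(j)\equiv(j_0,j_1)_{n_2/b}$. Because $\vL\in\DB\ind{b\times b,\,n_3\times n_2}$ has equal row and column block sizes $b_2=b_3=b$, each wrapped-diagonal block $\vS_{\cdot,\cdot}$ is simply a $b\times b$ diagonal matrix, and \cref{prop:rect-L-eqv-def} collapses to: $\vL[i,j]=0$ whenever $i_0\ne j_0$, and $\vL[i,j]=\vS_{i_1,j_1}[i_0,i_0]$ when $i_0=j_0$.

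Now I would verify that $\vR'$ satisfies the defining conditions of \cref{prop:rect-R-eqv-def} for the class $\BD\ind{\frac{n_3}{b}\times\frac{n_2}{b},\,n_3\times n_2}$. First, this class is well-defined since $n_3/(n_3/b)=b=n_2/(n_2/b)$. Under the permutations, the "block row index" of $\sigma_{(b,n_3)}(i)$ is $i_0$ and the "block column index" of $\sigma_{(b,n_2)}(j)$ is $j_0$, so the condition $i_0\ne j_0$ translates exactly to the vanishing condition of item~\ref{item:rect-zero-loc-R} of \cref{prop:rect-R-eqv-def}. For the nonzero case $i_0=j_0$, define the $i_0$-th diagonal block of $\vR'$ by $\vR'_{i_0}[i_1,j_1]\defeq \vS_{i_1,j_1}[i_0,i_0]$, which is a $\frac{n_3}{b}\times\frac{n_2}{b}$ matrix; this identifies $\vR'=\diag(\vR'_0,\dots,\vR'_{b-1})$, completing the proof.

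The main obstacle is purely bookkeeping: keeping track of base-$b$ index decompositions across the two distinct ambient dimensions $n_2$ and $n_3$, and confirming that the hypothesis $b_2=b_3=b$ reduces the wrapped-diagonal condition of \cref{prop:rect-L-eqv-def} to an ordinary diagonal condition. No new ingredients beyond the indexing identities for $\sigma_{(b,\cdot)}$ and the equivalent characterizations \cref{prop:rect-R-eqv-def,prop:rect-L-eqv-def} are needed.
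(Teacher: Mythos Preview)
Your proposal is correct and follows essentially the same approach as the paper: both proofs use the permutation identity $\vR'[\sigma_{(b,n_3)}(i),\sigma_{(b,n_2)}(j)]=\vL[i,j]$, observe that $b_2=b_3=b$ reduces the wrapped-diagonal condition to an ordinary diagonal condition $i_0\ne j_0$, and then verify the two items of \cref{prop:rect-R-eqv-def} by defining $\vR'_{i_0}[i_1,j_1]=\vS_{i_1,j_1}[i_0,j_0]$. The only cosmetic difference is that you write $\vS_{i_1,j_1}[i_0,i_0]$ where the paper writes $\vS_{i_1,j_1}[i_0,j_0]$, but since $i_0=j_0$ in that case these coincide.
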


\begin{proof}
We recall  that multiplying an $m\times n$ matrix on the right (and left resp.) by $\vP_{(b,n)}^\top = \vP_{(\frac nb,n)}$ (and $\vP_{(b,m)}$ resp.) permutes the columns (and rows resp.) of the matrix according to $\sigma_{(b,n)}$ (and $\sigma_{(b,m)}$) respectively.\footnote{This uses the fact that $\parens{\sigma_{(b,n)}}^{-1}=\sigma_{(\frac nb,n)}$.} This implies that for any $0\le i,j<n$:
\begin{equation}
\label{eq:rect-L-permuted}
\vR'[\sigma_{(b,n_3)}(i),\sigma_{(b,n_2)}(j)]=\vL[i,j].
\end{equation}

Recall that in the notation of \cref{def:rect-Matrix L} we have $b_2=b_3=b$, so we are in the $b_2 \le b_3$ case.
To complete the proof, we will argue that $\vR'$ satisfies the two conditions in~\Cref{prop:rect-R-eqv-def}.\footnote{Note that we also need that the ratios of the row/column length to the row/column block sizes are the same; i.e., in our case we need that $\frac{n_3}{n_3 / b_3}=\frac{n_2}{n_2 / b_2}$, which is true because $b_2=b_3=b$.}

Towards this end, let $0\le i,j<n$ be arbitrary indices and further, define $i=\baseb{i_1}{i_0}{b}$ and $j=\baseb{j_1}{j_0}{b}$. Then note that $\sigma_{(b,n_3)}(i)=\baseb{i_0}{i_1}{\frac {n_3}{b}}$ and $\sigma_{(b,n_2)}(j)=\baseb{j_0}{j_1}{\frac {n_2}{b}}$.

By~\Cref{prop:rect-L-eqv-def}, we have that if $i_0\mod{b}\ne j_0$, then $\vL[i,j]=0$. Note  that since $i_0,j_0<b$ by definition, the condition $i_0\mod{b}\ne j_0$ is equivalent to saying $i_0\ne j_0$. Note that $i_0\ne j_0$ satisfies the pre-condition for base size $\frac {n_3}{b}\times \frac{n_2}{b}$ for indices $(\sigma_{(b,n_3)}(i),\sigma_{(b,n_2)}(j))$ in item~\ref{item:rect-zero-loc-R} in~\Cref{prop:rect-R-eqv-def}.  Then   by~\cref{eq:rect-L-permuted}, we have that $\vR'[\sigma_{(b,n_3)}(i),\sigma_{(b,n_2)}(j)]=0$, which satisfies item~\ref{item:rect-zero-loc-R} in~\Cref{prop:rect-R-eqv-def}.

Now consider the case that $i_0=j\mod b$, which by the observation in the above paragraph is the same as $i_0=j_0$. Then by item~\ref{item:rect-nonzero-loc-L} in~\Cref{prop:rect-L-eqv-def}, we have that $\vL[i,j]=\vS_{i_1,j_1}[i_0,j_0]$.  Note that $i_0= j_0$ satisfies the pre-condition for base size $\frac{n_3}{b}\times \frac{n_2}{b}$ for indices $(\sigma_{(b,n_3)}(i),\sigma_{(b,n_2)}(j))$ in item~\ref{item:rect-nonzero-loc-R} in~\Cref{prop:rect-R-eqv-def} if we define $\vR'_{i_0}\in\F^{\frac{n_3}{b}\times\frac{n_2}{b}}$ as follows:
\[\vR'_{i_0}[i_1,j_1]=\vS_{i_1,j_1}[i_0,j_0].\] 

Note that the above implies that 
\[\vR'=\diag\parens{\vR'_0,\dots,\vR'_{b-1}},\]
where $\vR'_{\cdot}$ is as defined in the above paragraph.
This means $\vR' \in \BD\ind{\frac{n_3}{b} \times \frac{n_2}{b}, n_3 \times n_2}$, since $\vR'$ has size $n_3 \times n_2$ and each block $\vR_{i_0}'$ is a matrix of size $\frac{n_3}{b} \times \frac{n_2}{b}$.
\end{proof}

\section{Theory}
\label{sec:proofs}

\subsection{Expressiveness of $\M$}
\label{subsec:expressiveness_proof}

\begin{proof}[Proof of \cref{thm:Monarch_expressiveness}]
  As~\citet[Appendix J]{dao2020kaleidoscope} show, the matrix class $\BBS$ can
  represent convolution, Hadamard transform, Toeplitz matrices, and AFDF.
  Since the Monarch class $\MMS$ contains the butterfly class $\BBS$ (which follows from \cref{thm:b_contained}), it follows
  that $\MMS$ can also represent those transforms / matrices.

  Note that the Hadamard transform is actually in $\B$ \citep{dao2020kaleidoscope}, so it is in $\M$ as well.

  \citet[Appendix J]{dao2020kaleidoscope} also show that the matrix class $(\BBS)^2$ can
  represent the Fourier, discrete sine/cosine transforms, the $(HD)^3$ class,
  Fastfood, and ACDC matrices.
  By the same argument, as the Monarch class $(\MMS)^2$ contains the butterfly
  class $(\BBS)^2$, $(\MMS)^2$ can thus also represent these transforms / matrices.
\end{proof}

\subsection{Projection onto $\M$}

In \cref{alg:project}, we provide pseudocode for the algorithm outlined in
\cref{subsec:projection}.
We now prove \cref{thm:Monarch_projection}.
Note that the rectangular matrix case generalizes naturally from the square matrix case, by replacing square blocks with rectangular blocks.

\begin{proof}[Proof of \cref{thm:Monarch_projection}]
  As shown in \cref{subsec:projection}, after reshaping the Monarch matrix $\vM$ as a 4D tensor $M_{\ell  jki}$ and writing the two block-diagonal matrices $\vL$ and $\vR$ as 3D tensors $L_{j \ell  k}$ and $R_{k j i}$, we obtain:
  \begin{equation*}
    M_{\ell  j k i} = L_{j \ell  k} R_{k j i}, \quad \text{for } \ell, j, k, i = 1, \dots, m.
  \end{equation*}
  We can similarly reshape the given matrix $A$ into a 4D tensor $A_{\ell j k i}$ with size $m \times m \times m \times m$.

  Since the squared Frobenius norm objective $\norm{A - M}_F^2$ (\cref{eq:projection_objective}) only depends on the entries of $A$ and $M$ and not their shape,
  we can rewrite the objective after reshaping:
  \begin{align*}
    \norm{A - M}_F^2
    &= \sum_{\ell  j k i} \left(A_{\ell  j k i} - M_{\ell  j k i}\right)^2 \\
    &= \sum_{\ell  j k i} \left( A_{\ell  j k i} - L_{j \ell  k} R_{k j i} \right)^2 \\
    &= \sum_{j k} \sum_{\ell i} \left( A_{\ell  j k i} - L_{j \ell  k} R_{k j i} \right)^2.
  \end{align*}
  We see that the objective decomposes into $m \times m$ independent terms (indexed by $j$ and $k$).
  For each value of $j$ and $k$, the objective is exactly the rank-1 approximation objective for the corresponding slice $\vA_{:, j, k, :}$.

  Let $\vu_{jk} \vv_{jk}^\top$ be the best rank-1 approximation of $\vA_{:, j, k, :}$ (which we can compute using the SVD, by the Eckart--Young theorem~\citep{eckart1936approximation} for Frobenius norm).
  Let $\vR$ be the 3D tensor of size $m \times m \times m$ where $\vR_{kji} = (\vv_{jk})_i$, and let $\vL$ be the 3D tensor of size $m \times m \times m$ where $\vL_{j \ell k} = (\vu_{jk})_\ell$.
  Then each of the terms in the objective is minimized, and thus the overall objective is minimized.

  We see that the algorithm requires $m \cdot m$ SVD's, each of size $m \times m$.
  Each SVD takes $O(m^3)$ time~\citep{trefethen2000spectral}, so the overall time complexity is $O(m^5) = O(n^{5/2})$.
\end{proof}

\subsection{Monarch Factorizations for Matrices in $\MMS$}
In this section, we describe the algorithm for factorizing matrices in $\MMS$ previously outlined in \cref{subsec:recovery} (\cref{alg:mm_recovery}). Again, \cref{alg:mm_recovery} handles the general case where the block sizes of $L$ and $R$ can be different. We then prove \cref{thm:Monarch_recovery_full}, which has \cref{thm:Monarch_recovery} as an immediate corollary.

Our goal is thus to compute the matrices $\vL_1,\vR,\vL_2$ in the factorization of $\vM$.
In order to compute this factorization, we require the following assumption on $\vM$:
\begin{assumption}
\label{assump:a1_appendix}
Assume that (1) $\vM \in \M\M^{*(b,n)}$ is invertible and (2) $\vM$ can be written as $(\vP_{(b,n)}^\top \vL_1 \vP_{(b,n)}) \vR (\vP_{(b,n)}^\top\vL_2\vP_{(b,n)})$, where $\vL_1,\vL_2 \in \BD\ind{\frac{n}{b},n},\vR \in \BD\ind{b,n}$, and $\vR$ has no nonzero entries in its diagonal blocks.
(Note that by \cref{prop:mm-eqv-def}, we can write any $\vM \in \M\M^{*(b,n)}$ as $(\vP_{(b,n)}^\top \vL_1 \vP_{(b,n)}) \vR (\vP_{(b,n)}^\top\vL_2\vP_{(b,n)})$; thus, (2) is merely the assumption that $\vR$ has no zero entries in its blocks.)
\end{assumption}%

This is analogous to \cref{assump:a1}, except applicable to the more general block size $b$.
We now present \cref{alg:mm_recovery} to find factors $\vL_1,\vR,\vL_2$ of matrices satisfying \cref{assump:a1_appendix}.

First, observe that if we define $\MM = \vP_{(b,n)} \vM \vP_{(b,n)}^\top$, we have $\MM = \vL_1 (\vP_{(b,n)} \vR \vP_{(b,n)}^\top) \vL_2$. By \cref{thm:lr_permutation}, the matrix $\vP_{(b,n)} \vR \vP_{(b,n)}^\top$ is in $\DB\ind{\frac{n}{b}, n}$, i.e., is a block matrix with blocks of size $\frac{n}{b} \times \frac{n}{b}$ where each block is a diagonal matrix.
Thus, we can write:

\begin{equation*}
\lt \begin{array}{cccc} \MM_{11} & \MM_{12} & \dots & \MM_{1b} \\ \MM_{21} & \MM_{22} & \dots & \MM_{2b} \\ \ddots & \ddots & \ddots & \ddots  \\ \MM_{b1} & \MM_{b2} & \dots & \MM_{bb} \end{array}\rt
= \lt \begin{array}{ccccc} \vA_1 \\ & \vA_2 \\ & & \ddots \\ & & & \vA_{b} \end{array}\rt
\lt \begin{array}{cccc} \vD_{11} & \vD_{12} & \dots & \vD_{1b} \\ \vD_{21} & \vD_{22} & \dots & \vD_{2b} \\ \ddots & \ddots & \ddots & \ddots  \\ \vD_{b1} & \vD_{b2} & \dots & \vD_{bb} \end{array}\rt
\lt \begin{array}{ccccc} \vC_1 \\ & \vC_2 \\ & & \ddots \\ & & & \vC_{b} \end{array}\rt,
\end{equation*}

where $\vA_1,\dots,\vA_b$ are $\frac{n}{b} \times \frac{n}{b}$ matrices that are the diagonal blocks of $\vL_1$; $\vC_1,\dots,\vC_b$ are $\frac{n}{b} \times \frac{n}{b}$ matrices that are the diagonal blocks of $\vL_2$; $\vD_{11},\dots,\vD_{1b},\vD_{21},\dots,\vD_{2b},\dots,\vD_{b1},\dots,\vD_{bb}$ are $\frac{n}{b} \times \frac{n}{b}$ \emph{diagonal} matrices that are the blocks of $\vP_{(b,n)} \vR \vP_{(b,n)}^\top$; and
$\MM_{11},\dots,\MM_{1b},\MM_{21},\dots,\MM_{2b},\dots,\MM_{b1},\dots,\MM_{bb}$ are $\frac{n}{b} \times \frac{n}{b}$ matrices that are the blocks of $\MM = \vP_{(b,n)} \vM \vP_{(b,n)}^\top$.

Thus, we have the set of matrix equations $\vA_i \vD_{ij} \vC_j = \MM_{ij}$, for $1 \le i, j \le b$. Notice that the assumption that the $\vR$ has no nonzero entries in its blocks (\cref{assump:a1_appendix}) is equivalent to assuming that none of the diagonal entries of any matrix $\vD_{ij}$ is equal to zero. Also, the assumption that $\vM$ is invertible implies that $\vL_1, \vL_2$ are invertible (since the product of square singular matrices is singular), which in turn implies that each block matrix $\vA_i$ and each block matrix $\vC_j$ is invertible (since a square block-diagonal matrix where one of the blocks is singular is itself singular). Taken together, this means that each matrix $\MM_{ij}$ is invertible, since $\MM_{ij} = \vA_i \vD_{ij} \vC_j$ and each of the matrices on the RHS of the equation is invertible.

Observe that given a solution to the set of equations $\vA_i \vD_{ij} \vC_j = \MM_{ij}$, if we rescale and permute the matrices $\vA_i, \vD_{ij}, \vC_j$ appropriately, the result is still a solution to the equations. Specifically, let $\vP$ be any permutation matrix and $\{\vS_i\}_{i=1}^b, \{\vS_j'\}_{j=1}^b$ be any invertible diagonal matrices (i.e., diagonal matrices without any zeros on the diagonal). 
Define $\vD_{ij}' = \vS_i \vP^\top {\vD}_{ij} \vP \vS_j'$ for all $i, j$. Notice that $\vP^\top \vD_{ij} \vP = \vP^{-1} \vD_{ij} \vP$ is diagonal because $\vD_{ij}$ is diagonal. Thus, $\vD_{ij}'$ is diagonal (and invertible) since the product of diagonal matrices is diagonal. Define $\vA_i' = \vA_i \vP \vS_i^{-1}$ and $\vC_j' = \vP^\top \vS_j'^{-1} \vC_j$ for all $i, j$.
Thus, we have that $\MM_{ij} = \vA_i \vD_{ij} \vC_j = (\vA_i \vP \vS_i^{-1} ) \vD_{ij}' (\vP^\top \vS_j'^{-1} \vC_j) = {\vA}_i' \vD_{ij}' \vC_j'$ for all $i, j$: in other words, we can scale the $\vA_i$'s on the right by any invertible diagonal matrix, the $\vC_j$'s on the left by any invertible diagonal matrix, and apply a matching permutation to the rows of the $\vC_j$'s and the columns of the $\vA_i$'s, and apply matching transformations to the $\vD_{ij}$'s and the result will still be a valid factorization. This implies that as long as we recover a ``correct'' $\hat{\vC}_1$ up to a permutation and scaling of its rows, we can set the $\hat{\vD}_{i1}$'s and $\hat{\vD}_{1j}$'s to the identity matrix, and then compute the remaining $\hat{\vA}_i$'s and $\hat{\vC}_j$'s via the equations $\hat{\vA}_i = \MM_{i1}\hat{\vC}_1^{-1}$ and $\hat{\vC}_j = \hat{\vA}_1^{-1}\MM_{1j}$.

To understand how we can compute such a matrix $\hat{\vC}_1$, define $\vF(i, j) = \MM_{i1}^{-1} \MM_{ij} \MM_{1j}^{-1} \MM_{11}$ and observe that
\begin{align*}
\vF(i, j) &= \MM_{i1}^{-1} \MM_{ij} \MM_{1j}^{-1} \MM_{11} \\ &=
(\vC_1^{-1} \vD_{i1}^{-1} \vA_i^{-1}) (\vA_i \vD_{ij} \vC_j) (\vC_j^{-1} \vD_{1j}^{-1} \vA_1^{-1}) (\vA_1 \vD_{11} \vC_1) \\
&= \vC_1^{-1} (\vD_{i1}^{-1} \vD_{ij} \vD_{1j}^{-1} \vD_{11}) \vC_1
\end{align*} for all $1 \le i, j \le b$.
Note that $\vD_{i1}^{-1} \vD_{ij} \vD_{1j}^{-1} \vD_{11}$ is a diagonal matrix;
thus, $\vC_1 \vF(i, j) \vC_1^{-1}$ is diagonal for all $i, j$, i.e., 
$\vC_1$ simultaneously diagonalizes all the matrices $\vF(i, j)$.
(Note: In this paper, we say that a matrix $\vQ$ ``simultaneously diagonalizes'' a set of matrices $\vG_1, \dots, \vG_k$ if $\vQ \vG_i \vQ^{-1}$ is a diagonal matrix for all $1 \le i \le k$. Note that sometimes the opposite convention [i.e., $\vQ^{-1} \vG_i \vQ$ must be diagonal] is used in the literature; we adopt the former for notational convenience.)
Indeed, if \emph{any} matrix simultaneously diagonalizes all these matrices, then it leads to a valid factorization, which we show in the proof of \cref{thm:Monarch_recovery_full}. Therefore, we compute some matrix that simultaneously diagonalizes all these matrices, and set $\hat{\vC}_1$ to that matrix.

These ideas form the basis of \cref{alg:mm_recovery}, which is presented formally below. \cref{alg:mm_recovery} uses simultaneous diagonalization as a subroutine; we discuss how to solve simultaneous diagonalization problems below.

\begin{algorithm}[H]
\caption{$\MMS$ Factorization}
\label{alg:mm_recovery}
\begin{algorithmic}[1]
\REQUIRE Block size $b$; matrix $\vM \in \M\M^{*(b,n)}$ satisfying \cref{assump:a1_appendix}
\item Define $\MM_{ij}$ (of size $\frac{n}{b} \times \frac{n}{b}$) as the $i,j$ block of $\vP_{(b,n)} \vM \vP_{(b,n)}^\top$
\FOR{$1 \le i, j \le b$}
\item Compute $\vF(i,j) := \MM_{i1}^{-1}\MM_{ij}\MM_{1j}^{-1}\MM_{11}$
\ENDFOR
\item $\hat{\vC}_1 \leftarrow \ \textsc{SIMULTANEOUS\_DIAG}\lt \{\vF(i,j)\}_{i,j=1,1}^{b,b}\rt $
\FOR{$1 \le i \le b$}
\item $\hat{\vA}_i \leftarrow \MM_{i1} \hat{\vC}_1^{-1}$
\ENDFOR
\FOR{$2 \le j \le b$}
\item $\hat{\vC}_j \leftarrow \hat{\vA}_1^{-1} \MM_{1j}$
\ENDFOR
\FOR{$1 \le i, j \le b$}
\item $\hat{\vD}_{ij} \leftarrow \hat{\vA}_i^{-1} \MM_{ij}\hat{\vC}_j^{-1}$
\ENDFOR
\end{algorithmic}
\end{algorithm}

\begin{theorem}\label{thm:Monarch_recovery_full}
Given an $n \times n$ matrix $\vM \in \M\M^{*(b,n)}$ satisfying Assumption \ref{assump:a1}, \cref{alg:mm_recovery} finds its Monarch factors $\vL_1, \vR, \vL_2$ in  time $O\lt \frac{n^3}{b} \rt$.
\end{theorem}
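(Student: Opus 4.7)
The plan is to pass to the permuted matrix $\MM := \vP_{(b,n)}\vM\vP_{(b,n)}^\top$ so that the target factorization takes the block form $\MM_{ij} = \vA_i\vD_{ij}\vC_j$ with each $\vD_{ij}$ diagonal, then reduce recovery to the simultaneous diagonalization of the $b^2$ matrices $\vF(i,j)$ computed in \cref{alg:mm_recovery}. The whole correctness argument will be gauge-invariant: I plan to show that \emph{any} $\hat{\vC}_1$ that jointly diagonalizes the $\vF(i,j)$'s yields a valid Monarch factorization, so no uniqueness statement about the diagonalizer will be needed.

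First I will invoke \cref{prop:mm-eqv-def} to write $\vM = (\vP_{(b,n)}^\top\vL_1\vP_{(b,n)})\vR(\vP_{(b,n)}^\top\vL_2\vP_{(b,n)})$ with $\vL_1,\vL_2 \in \BD\ind{n/b,n}$ and $\vR \in \BD\ind{b,n}$, and apply \cref{thm:lr_permutation} to the middle factor to conclude that the $b\times b$ coarse blocking of $\MM$ satisfies $\MM_{ij} = \vA_i\vD_{ij}\vC_j$, where $\vA_i,\vC_j$ are the diagonal blocks of $\vL_1,\vL_2$ and each $\vD_{ij}$ is diagonal. Invertibility of $\vM$ together with the no-zero-entries hypothesis on $\vR$ makes every $\vA_i$, $\vC_j$, $\vD_{ij}$, and $\MM_{ij}$ invertible, which is what allows the definition of each $\vF(i,j)$. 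Substituting $\MM_{ij} = \vA_i\vD_{ij}\vC_j$ into $\vF(i,j) = \MM_{i1}^{-1}\MM_{ij}\MM_{1j}^{-1}\MM_{11}$ causes the $\vA$'s to telescope, leaving
\[
\vF(i,j) \;=\; \vC_1^{-1}\bigl(\vD_{i1}^{-1}\vD_{ij}\vD_{1j}^{-1}\vD_{11}\bigr)\vC_1,
\]
so the true $\vC_1$ simultaneously diagonalizes all $b^2$ of the $\vF(i,j)$'s, guaranteeing that SIMULTANEOUS\_DIAG is given a well-posed input.

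Next, for any $\hat{\vC}_1$ with $\hat{\vC}_1\vF(i,j)\hat{\vC}_1^{-1}$ diagonal for every $i,j$, I define $\hat{\vA}_i, \hat{\vC}_j, \hat{\vD}_{ij}$ as in \cref{alg:mm_recovery}. The relation $\MM_{ij} = \hat{\vA}_i\hat{\vD}_{ij}\hat{\vC}_j$ holds by construction, so the only verification needed is that each $\hat{\vD}_{ij}$ is diagonal. Expanding $\hat{\vA}_i^{-1} = \hat{\vC}_1\MM_{i1}^{-1}$ and $\hat{\vC}_j^{-1} = \MM_{1j}^{-1}\MM_{11}\hat{\vC}_1^{-1}$ and multiplying through gives
\[
\hat{\vD}_{ij} \;=\; \hat{\vC}_1\,\vF(i,j)\,\hat{\vC}_1^{-1},
\]
which is diagonal by choice of $\hat{\vC}_1$. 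Packing the blocks into $\hat{\vL}_1, \hat{\vL}_2 \in \BD\ind{n/b,n}$ and the $\hat{\vD}_{ij}$'s into a matrix in $\DB\ind{n/b,n}$, and conjugating back through $\vP_{(b,n)}$ via \cref{thm:lr_permutation}, recovers the desired $\hat{\vR} \in \BD\ind{b,n}$ and a valid factorization $\vM = (\vP_{(b,n)}^\top\hat{\vL}_1\vP_{(b,n)})\hat{\vR}(\vP_{(b,n)}^\top\hat{\vL}_2\vP_{(b,n)})$.

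For the runtime, every matrix operation acts on $(n/b)\times(n/b)$ matrices at cost $O(n^3/b^3)$. Forming $\MM$ costs $O(n^2)$; caching the $2b$ inverses $\MM_{i1}^{-1}, \MM_{1j}^{-1}$ and then assembling all $b^2$ matrices $\vF(i,j)$ costs $O(n^3/b)$; SIMULTANEOUS\_DIAG on $(n/b)\times(n/b)$ inputs costs $O(n^3/b^3)$; the $b$ matrices $\hat{\vA}_i$ and the $b-1$ matrices $\hat{\vC}_j$ cost $O(n^3/b^2)$ together; and assembling the $b^2$ matrices $\hat{\vD}_{ij}$ costs $O(n^3/b)$, totalling $O(n^3/b)$ as claimed. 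The one delicate step, and the main obstacle in a careful write-up, is SIMULTANEOUS\_DIAG itself: existence of a joint diagonalizer is handed to us by the key identity, so the subroutine can be implemented by eigendecomposing a generic linear combination $\sum_{i,j}\alpha_{ij}\vF(i,j)$, whose eigenvectors diagonalize the entire commuting family with probability $1$. All bookkeeping about the ordering and scaling of columns is absorbed by the gauge argument above, which is the reason I can avoid any uniqueness claim for the diagonalizer.
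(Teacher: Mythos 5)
Your proposal is correct and follows essentially the same route as the paper's proof: permute to $\MM$, derive the block equations $\MM_{ij}=\vA_i\vD_{ij}\vC_j$, show the true $\vC_1$ simultaneously diagonalizes the $\vF(i,j)$'s, and then verify that \emph{any} simultaneous diagonalizer yields a valid factorization via the identity $\hat{\vD}_{ij}=\hat{\vC}_1\vF(i,j)\hat{\vC}_1^{-1}$, with the same $O(n^3/b)$ accounting. The only deviation is your implementation of the simultaneous-diagonalization subroutine by eigendecomposing a generic linear combination (correct with probability~1, and in fact cheaper than the paper's deterministic sequential-refinement procedure), but this is an implementation detail that does not change the argument or the stated runtime bound.
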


Notice that by setting $b = \sqrt{n}$, we immediately recover \cref{thm:Monarch_recovery}.
Note also that by \cref{prop:mstarm}, \cref{thm:Monarch_recovery_full} implies that given an $\vM \in \M^*\M^{(\frac{n}{b},n)}$, we can find its Monarch factorization in time $O(\frac{n^3}{b})$ as well (e.g., simply permute it to a matrix in $\M\M^{*(b,n)}$ and then run \cref{alg:mm_recovery}). 
We now prove \cref{thm:Monarch_recovery_full}.

\begin{proof}
We first show that the factorization returned by \cref{alg:mm_recovery} is valid, which reduces to showing that (1) $\MM_{ij} = \hat{\vA}_i \hat{\vD}_{ij} \hat{\vC}_j$ and (2) $\hat{\vD}_{ij}$ is diagonal, for all $1 \le i, j \le b$ as argued above.

As argued above, since $\MM$ satisfies \cref{assump:a1_appendix}, then there exists a matrix ($\vC_1$) that simultaneously diagonalizes all the $\vF(i,j)$'s. Thus, we can always compute some matrix that simultaneously diagonalizes these matrices (i.e., line 2 of \cref{alg:mm_recovery} will always return a valid solution); we discuss how to actually do this below. By definition of simultaneous diagonalization, this matrix (which we set $\hat{\vC}_1$ to) is invertible.

So, $\hat{\vA}_i = \MM_{i1}\hat{\vC}_1^{-1}$ is invertible for all $i$. Thus $\hat{\vC}_j = \hat{\vA}_1^{-1} \MM_{1j}$ is invertible for all $j$ as well. (Note that the equation $\hat{\vC}_j = \hat{\vA}_1^{-1} \MM_{1j}$ holds by construction of $\hat{\vC}_j$ for $j \ge 2$, and by construction of $\hat{\vA}_1$ when $j = 1$.) As $\hat{\vD}_{ij} = \hat{\vA}_i^{-1} \MM_{ij}\hat{\vC}_j^{-1}$ by definition, we thus have that $\MM_{ij} = \hat{\vA}_i \hat{\vD}_{ij} \hat{\vC}_j$ for all $i, j$.

It remains to show that $\hat{\vD}_{ij}$ is diagonal.
\begin{align*}
\hat{\vD}_{ij} &= \hat{\vA}_i^{-1} \MM_{ij}\hat{\vC}_j^{-1} \\
&= (\MM_{i1} \hat{\vC}_1^{-1})^{-1} \MM_{ij} (\hat{\vA}_1^{-1} \MM_{1j})^{-1} \\
&= \hat{\vC}_1 \MM_{i1}^{-1} \MM_{ij} \MM_{1j}^{-1} \hat{\vA}_1  \\
&= \hat{\vC}_1 (\MM_{i1}^{-1} \MM_{ij} \MM_{1j}^{-1} \MM_{11}) \hat{\vC}_1^{-1} \\
&= \hat{\vC}_1 \vF(i, j) \hat{\vC}_1^{-1}%
\end{align*}

But $\hat{\vC}_1 \vF(i, j) \hat{\vC}_1^{-1}$ is diagonal for all $i,j$ by \emph{definition} of $\hat{\vC}_1$ as a matrix that simultaneously diagonalizes the $\vF(i,j)$'s.

As for $\vL_1,\vR,\vL_2$, recall that we can simply set $\vL_1 = \diag(\hat{\vA}_1, \dots, \hat{\vA}_b)$, $\vL_2 = \diag(\hat{\vC}_1, \dots, \hat{\vC}_b)$, and $\vR = \vP_{(b,n)}^\top \lt \begin{array}{cccc} \hat{\vD}_{11} & \hat{\vD}_{12} & \dots & \hat{\vD}_{1b} \\ \hat{\vD}_{21} & \hat{\vD}_{22} & \dots & \hat{\vD}_{2b} \\ \ddots & \ddots & \ddots & \ddots  \\ \hat{\vD}_{b1} & \hat{\vD}_{b2} & \dots & \hat{\vD}_{bb} \end{array}\rt
\vP_{(b,n)}$, and we have $\vM = (\vP_{(b,n)}^\top \vL_1 \vP_{(b,n)}) \vR (\vP_{(b,n)}^\top\vL_2\vP_{(b,n)})$ with $\vL_1, \vL_2 \in \BD\ind{\frac{n}{b}, n}$ and $\vR \in \BD\ind{b,n}$ as argued above. This completes the proof of correctness.

Now, we analyze the runtime. There are $b^2$ matrices $\F(i,j)$ to compute, and computing each one takes $O(\frac{n^3}{b^3})$ time. Once we've found $\hat{\vC}_1$, there are $b$ matrices $\hat{\vA}_i$ to compute, each one taking $O(\frac{n^3}{b^3})$ time, and $b-1$ matrices $\hat{\vC}_j$ (for $j \ge 2$) to compute, each one taking $O(\frac{n^3}{b^3})$ time, and then $b^2$ matrices $\hat{\vD}_{ij}$ to compute, each taking $O(\frac{n^3}{b^3})$ time. (Note that we can compute each of these faster using fast matrix multiplication / inversion; however, it turns out not to matter as the simultaneous diagonalization is the bottleneck.)

Finally, we analyze the simultaneous diagonalization runtime. Simultaneous diagonalization of a set of matrices $\{\vG_1, \dots, \vG_k\}$ is equivalent to finding a mutual eigenbasis for the matrices, since if $\vD_i$ is a diagonal matrix and $\vQ \vG_i \vQ^{-1} = \vD_i$, then the $j^{th}$ column of $\vQ$ is an eigenvector of $\vG_i$ with eigenvalue equal to the $j^{th}$ entry of $\vD_i$.

A simple algorithm for simultaneous diagonalizing a set of matrices, assuming that they are in fact simultaneously diagonalizable (which implies that each matrix is individually diagonalizable), is as follows (e.g. see \cite{Conrad_theminimal, gerstner1993numerical}): first, set $i = 1$ and diagonalize the first matrix $\vG_i = \vG_1$ (i.e., find an eigenbasis), and set $\vQ$ to be the diagonalizing matrix (i.e., the matrix of eigenvectors). So, $\vQ \vG_1 \vQ^{-1}$ is diagonal.
By the assumption that the matrices are in fact simultaneously diagonalizable, $\vQ \vG_j \vQ^{-1}$ will be permuted block diagonal for all $j \ne i$ as well: the size of each block corresponds to the multiplicity of the corresponding eigenvalue of $\vG_1$. (Note that if $\vG_1$'s has unique eigenvalues, then the eigenbasis is unique (up to permutation and nonzero scaling), and thus in this case $\vG_1$ uniquely determines the simultaneously diagonalizing matrix, up to arbitrary permutation and nonzero scaling of the rows. In other words, the block size will be 1 in this case, meaning that $\vQ \vG_j \vQ^{-1}$ will be diagonal for all $j$, and we are done.)

So now, we repeat the following for all $i$ up to $k$. Increment $i$ and compute $\vQ \vG_i \vQ^{-1}$. If it is already diagonal, move on. Otherwise, first permute $\vQ \leftarrow \vP \vQ \vP^\top$ so that it is block diagonal (observe that this maintains the property that $\vQ \vG_j \vQ^{-1}$ is diagonal for all $j < i$, since $\vP \vD \vP^\top$ is diagonal for any permutation $\vP$ and diagonal matrix $\vD$). Then for each block of size $> 1$, compute a matrix that diagonalizes that block; denoting the number of blocks (including size-1 blocks) by $b$, let $\vQ_1', \dots, \vQ_b'$ denote the corresponding diagonalizing transformations, or the scalar 1 when the block is of size 1. Finally set $\vQ' \leftarrow \diag(\vQ_1', \dots, \vQ_b') $ and $\vQ \leftarrow \vQ'^{-1} \vQ \vQ'$. By construction, $\vQ \vG_i \vQ^{-1}$ will now be diagonal; also, $\vQ \vG_j \vQ^{-1}$ is still diagonal for all $j < i$, because any linear combination of a set of eigenvectors of a diagonalizable matrix corresponding to a repeated eigenvalue $\lambda$ is itself an eigenvector of that matrix with eigenvalue $\lambda$.

Thus, once we've processed all $k$ of the $\vG_i$'s, $\vQ$ is a matrix that simultaneously diagonalizes all of them. At each step $i$, we compute diagonalizing transformations for square block matrices whose sizes $s_1, \dots, s_k$ sum to $n$. As eigendecomposition (for a fixed desired precision) takes $O(n^3)$ time for an $n \times n$ matrix, this means the total runtime of step $i$ is $O\lt \sum_{j=1}^{k} s_i^3 \rt \le O(n^3)$. Thus the total runtime of the entire simultaneous diagonalization procedure is $O(kn^3)$, where $k$ is the number of matrices. (Note that iterative methods for simultaneous diagonalization also exist \citep{gerstner1993numerical,akema2020approximate} and could be used to speed up this step in practice.)

Applying this to our problem, we have $b^2$ matrices to simultaneously diagonalize, each of size $\frac{n}{b} \times \frac{n}{b}$. This leads to a total runtime of $O\lt b^2 \cdot (\frac{n}{b})^3\rt = O\lt \frac{n^3}{b}\rt$ for the entire simultaneous diagonalization procedure, and thus the runtime of \cref{alg:mm_recovery} is also $O\lt \frac{n^3}{b}\rt$, as desired.

(Note: As can be seen from the above analysis, we don't actually need $\vM$ itself to be invertible---we simply need all its blocks $\MM_{ij}$ to be, so that all the $\vA_i$'s and $\vC_j$'s are, which is a weaker assumption that invertibility of $\vM$ given that we already assumed the $\vD_{ij}$'s are invertible due to the nonzero assumption on the blocks of $\vR$.)

\end{proof}

\section{Experiment Details}
\label{sec:experiment_details}

\subsection{Model Configurations and Hyperparameters}

We summarize the details required to replicate our experiments below.

\subsubsection{Image Classification}

\textbf{Baseline Model:} For dense models, we use standard implementations of
ViT~\citep{dosovitskiy2020image}, MLP-Mixer{tolstikhin2021mlp} from the
\texttt{timm} library and from the T2T-ViT codebase~\citep{yuan2021tokens}.

The Monarch version of these models simply swap out the dense weight matrices in the attention blocks (projection matrices) and in the FFN block (linear layers) with Monarch matrices.
We set the number of blocks in the block-diagonal matrices to 4.
We also reduce the amount of regularization (stochastic depth) as our Monarch models are smaller than the dense models.

We adopt the hyperparameters (optimizer, learning rate, learning rate
scheduler) from~\citet{yuan2021tokens}.
Details are in~\cref{table:imagenet_hparams}.

We measure the wall-clock training time on V100 GPUs.

\begin{table}[!htbp]
 \caption{Configuration of the ImageNet experiment}   
\centering
\resizebox{0.8\linewidth}{!}{
\noindent\begin{tabular}{@{}c||ccccccc@{}}
  \specialrule{.15em}{.05em}{.05em}
Model&\multicolumn{1}{c}{Optimizer}&\multicolumn{1}{c}{Weight Decay}&\multicolumn{1}{c}{Learning Rate}&\multicolumn{1}{c}{Drop Path}&\multicolumn{1}{c}{Warmup/Epoch}\\
  \specialrule{.15em}{.05em}{.05em}
ViT-Small& AdamW & 0.05 & 0.001 & 0.1& 5/300 \\
Monarch-ViT-Small& AdamW & 0.05 & 0.001 &0& 5/300 \\
ViT-Base& AdamW & 0.05 & 0.001 &0.1& 5/300 \\
Monarch-ViT-Base& AdamW & 0.05 & 0.001 &0& 5/300 \\
  \specialrule{.15em}{.05em}{.05em}
Mixer-Small &AdamW& 0.1 &0.001&0.1& 5/300 \\
Monarch-Mixer-Small &AdamW&0.1 &0.001& 0 & 5/300 \\
Mixer-Base &AdamW& 0.1 &0.001&0.1& 5/300 \\
Monarch-Mixer-Base &AdamW &0.1 &0.001& 0 & 5/300 \\
  \specialrule{.15em}{.05em}{.05em}
\end{tabular}
}
\label{table:imagenet_hparams}
\end{table}

We follow the naming convention in the Vision Transformer paper and MLP-Mixer paper. In particular, ViT-S and ViT-B refers to the small and base ViT models respectively, and 16 refers to the patch size of 16x16. The MLP-Mixer models follow the same convention.

\subsubsection{Language Modeling}
For dense models, we use standard implementations of
GPT-2~\citep{radford2019language} from Huggingface \texttt{transformers} library and from Nvidia's Megatron-LM repo. 
We follow the training recipe of the Megatron-LM repo.

The Monarch version of these models simply swap out the dense weight matrices in the attention blocks (projection matrices) and in the FFN block (linear layers) with Monarch matrices.
We set the number of blocks in the block-diagonal matrices to 4.
We also reduce the regularization strength (dropout) as our model is smaller.

We report the hyperparameters used in~\cref{table:wt103} and~\cref{table:owt}.
We use an effective batch size of 512, and use gradient accumulation to fit into available GPU memory.

We measure the wall-clock training time on V100 GPUs.
\begin{table}[!h]
    \vspace{-0.5cm}
\centering
\caption{Configuration of the WikiText-103 experiments}
\resizebox{0.8\linewidth}{!}{
\noindent\begin{tabular}{@{}c||ccccccc@{}}
  \specialrule{.15em}{.05em}{.05em}
Model&\multicolumn{1}{c}{Optimizer}&\multicolumn{1}{c}{Weight Decay}&\multicolumn{1}{c}{Learning Rate}&\multicolumn{1}{c}{Dropout}&\multicolumn{1}{c}{Warmup/Epoch}\\
  \specialrule{.15em}{.05em}{.05em}
GPT-2-small& AdamW & 0.1 & 6e-4 & 0.1& 10/100 \\
Monarch-GPT-2-small& AdamW & 0.1 & 6e-4 & 0.0 & 10/100 \\
GPT-2-medium& AdamW & 0.1 & 1.5e-4 & 0.1& 10/100 \\
Monarch-GPT-2-medium & AdamW & 0.1 & 1.5e-4 & 0.0 & 10/100 \\
  \specialrule{.15em}{.05em}{.05em}
\end{tabular}
}
\label{table:wt103}
\end{table}

\begin{table}[!h]
\vspace{-0.5cm}
\centering
\caption{Configuration of the OpenWebText experiments}
\resizebox{0.8\linewidth}{!}{
\noindent\begin{tabular}{@{}c||ccccccc@{}}
  \specialrule{.15em}{.05em}{.05em}
Model&\multicolumn{1}{c}{Optimizer}&\multicolumn{1}{c}{Weight Decay}&\multicolumn{1}{c}{Learning Rate}&\multicolumn{1}{c}{Dropout}&\multicolumn{1}{c}{Warmup/Total iterations}\\
  \specialrule{.15em}{.05em}{.05em}
GPT-2-Small& AdamW & 0.1 & 6e-4 & 0.1& 4k/400k \\
Monarch-GPT-2-Small & AdamW & 0.1 & 6e-4 & 0.0 & 4k/400k \\
GPT-2-Medium& AdamW & 0.1 & 1.5e-4 & 0.1& 4k/400k \\
Monarch-GPT-2-Medium & AdamW & 0.1 & 1.5e-4 & 0.0 & 4k/400k \\
  \specialrule{.15em}{.05em}{.05em}
\end{tabular}
}
\label{table:owt}
\end{table}

\subsection{Details for PDE Solving}
We adopt the experiment setting and data generation of Navier-Stokes Equation from FNO~\citep{li2020fourier}. It considers the 2-d Navier-Stokes equation for a viscous, incompressible fliud in vorticity form on the unit tortus:
\begin{align}
    \partial_{t} w(x, t) + u(x, t) \cdot \nabla w(x, t) & = v \Delta w(x, t) + f(x), & x \in (0, 1)^2, t \in (0, T] \\
    \nabla w(x, t) & = 0, & x \in (0, 1)^2, t \in (0, T] \\
    w(x, 0) & = w_0(x), & x \in (0, 1)^2 \\
\end{align}
where $u \in C([, T0])$;$H_{per}((0, 1)^2; \mathbb{R}^2))$ for any $r>0$ is the velocity field, $w=\nabla \times u$ is the vorticity, $w_0 \in L^2_{per}((0, 1)^2; \mathbb{R})$ is the initial vorticity, $v \in \mathbb{R_{+}}$ is the viscosity coefficient, and $f \in L_{per}^2((0, 1)^2; \mathbb{R})$ is the forcing function. 
$T$ represents the time interval since it is time-dependent equation. $v$ represents the viscosity. N represents the number of training pairs or data. \cref{table:pde} shows the results for viscosities $v=1e-3, 1e-4, 1e-5$, $T=50, 30, 20$ respectively and use $N=1000$. 

\subsection{Details for GPT-2 Downstream Tasks}
We train Pixelfly-GPT2-small on a larger scale dataset, OpenWebText, and evaluate the downstream quality on zero-shot generation and classification tasks from~\citep{zhao2021calibrate}, achieving comparable and even better performance to the dense model. Specifically, the datasets contains five popular classification tasks: SST2, Trec, CB, Agnews, and Dbpedia. We also adapated the calibrated metric from~\citep{zhao2021calibrate} for evaluation. Results for each individual task are shown in~\cref{table:gpt_finetune_full}. 

\begin{table}[h]
  \small
  \centering
  \vspace{-3mm}
  \caption{\label{table:gpt_finetune_full}The performance (accuracy) of GPT-2-medium trained with Monarch reverse sparsification and with conventional dense training on text classification benchmarks.}
  \setlength{\tabcolsep}{5pt}
  \vspace{1em}
   \resizebox{0.7\linewidth}{!}{
  \begin{tabular}{@{}c||ccccc@{}}
    \specialrule{.15em}{.05em}{.05em}
    Model&\multicolumn{1}{c}{OpenWebText (ppl)}&\multicolumn{1}{c}{Speedup}& \multicolumn{1}{c}{Classification (avg acc)} \\
    \specialrule{.15em}{.05em}{.05em}
    GPT-2m& 68.3 & 37.0 & 10.7 & 52.0 & 26.6\\
    Monarch-GPT-2m& 72 & 38.6 & 12.5 & 47.3 & 23.0 \\
    \specialrule{.15em}{.05em}{.05em}
  \end{tabular}
  }
  \vspace{-3mm}
\end{table}

\subsection{Details for BERT Pretraining}
\label{subsec:bert_details}

We follow the training procedure and hyperparameters of the reference
implementation from Nvidia Deep Learning examples
(\url{https://github.com/NVIDIA/DeepLearningExamples}).
In particular, we use the LAMB optimizer with learning rate 4e-3.
We use as large a minibatch size as possible that still fits in the GPU memory
(A100-40GB), and use gradient accumulation to reach an effective batch size of
64k sequences for phase 1 (maximum sequence length 128) and 32k for phase 2
(maximum sequence legnth 512).
We train is mixed precision (fp16 and fp32).

We use all the optimizations that were in Nvidia's BERT implementation
in MLPerf 1.1:
\begin{enumerate}
  \item Only compute the prediction scores (last layer) for masked tokens as
  the outputs of other tokens are not used to compute the masked language
  modeling loss.
  \item Remove padding tokens and only compute the attention for non-padding
  tokens.
  \item Use a fused CUDA kernel (FMHA) that combines 4 steps into one kernel: computes
  $Q K^T$, take softmax, apply dropout, multiply by $V$, where $Q, K, V$ are the
  query, key, and value respectively.
  \item Fuse matrix multiplication and adding bias into one CUDA kernel in the feed-forward network
  (FFN) layers. The gradient of the bias is also fused with the matrix
  multiplication the backward pass.
  \item Fuse matrix multiplication and adding bias into one CUDA kernel in the
  attention output projection.
  \item Fuse dropout and adding residual in the residual connection at the end
  on the attention and FFN blocks.
\end{enumerate}

We train with DeepSpeed~\citep{rasley2020deepspeed} ZeRO optimizer stage 1 to
shard the optimizer states, thus reducing GPU memory usage and allowing us to
use larger batch sizes.
For the Nvidia MLPerf implementation, we report the speed for both Apex's
automatic mix-precision (AMP) level O2 (as in the original implementation), and
DeepSpeed ZeRO optimizer.

\subsection{Accelerated Multi-coil MRI Reconstruction}
\label{sec:experiment_details_mri}

\subsubsection{Background}
In multi-coil MRI, multiple receiver coils (i.e. sensors) acquire complex-valued measurements in the spatial frequency (a.k.a. \textit{k-space}) domain. These measurements are modulated by the spatially-varying sensitivity maps, which characterize the sensitivity of each coil to the imaging target. In accelerated MRI, scan times are reduced by decreasing the number of samples acquired in k-space. Because the data is sampled below the Nyquist rate, reconstructing the underlying image is an ill-posed problem.

The forward problem for accelerated multi-coil MRI can be written as the matrix equation
\begin{equation*}
    y = \Omega\boldsymbol{F}\boldsymbol{S}x + \epsilon
\end{equation*}
where $\Omega$ is the binary undersampling mask that indexes acquired samples in k-space, $y$ is the vectorized measured signal in k-space, $\boldsymbol{F}$ is the discrete Fourier transform matrix, $\boldsymbol{S}$ is the receiver coil sensitivity maps,  $x$ is the ground-truth signal in image-space, and $\epsilon$ is additive complex Gaussian noise. The acceleration factor is given by $R = \frac{\sum_i^{|N|} \Omega_i}{|\Omega|}$.

\subsubsection{Experimental Details}

\paragraph{Dataset.} We benchmark our method on the SKM-TEA Raw Data Track, which consists of dual-echo 3D MRI scans \citep{desai2021skm}. Scans are accelerated using Poisson Disc undersampling masks distributed with the dataset. During training, Poisson Disc masks are generated, cached, and applied to mask the k-space data to simulate accelerated scans.

\paragraph{Matrix Shape.} Like all matrices, Monarch matrices have an explicit shape constraint, which is a limitation of these matrices for MRI reconstruction tasks. Thus, the SKM-TEA dataset was filtered to include scans of shape $512 \times 512 \times 160$, which is the most frequently occuring scan shape. A total of 3 scans were dropped from the original 155 scans in the dataset. Our method and all baselines were trained on this filtered dataset.

\begin{table}[!ht]
    \vspace{-0.5cm}
\centering
\caption{Baseline configurations of the SKM-TEA MRI reconstruction experiments.}
\resizebox{0.6\linewidth}{!}{
\noindent\begin{tabular}{c||cccccc}
  \specialrule{.15em}{.05em}{.05em}
Model & Params & Optimizer & Weight Decay & Learning Rate & Epoch \\
  \specialrule{.15em}{.05em}{.05em}
SENSE & --- & --- & --- & --- & --- \\
U-Net &  7.8M & Adam & 1e-4 & 1e-3 & 20 \\
mSENSE  & 57.5K & Adam & 1e-4 & 1e-3 & 20 \\
  \specialrule{.15em}{.05em}{.05em}
\end{tabular}
}
\label{table:skmtea-config}
\end{table}

\paragraph{Baselines.} We compare our method to two baselines, SENSE and U-Net. Parameter count and hyperparameters are available in Table \ref{table:skmtea-config}.
\begin{itemize}
    \item \textit{SENSE}: SENSE performs a linear combination of the images acquired on each coil \citep{pruessmann1999sense}. Here, the inverse fsat Fourier transform (IFFT) is applied to the acquired k-space for each coil. The resulting images are combined into a single complex image by weighting each coil image by corresponding coil sensitivity maps. In accelerated MRI, the unsampled frequencies are zero-valued; thus, SENSE produces a \textit{zero-filled image}. Note, SENSE does not require any training.
    \item \textit{U-Net}: U-Net is a popular fully convolutional neural network baseline for MRI reconstruction \citep{ronneberger2015u}. We use the default implementation and hyperparameters used by \citet{desai2021skm} to benchmark the SKM-TEA dataset. In this approach, the SENSE-reconstructed zero-filled image is mapped to SENSE-reconstructed ground truth images.
\end{itemize}

\paragraph{Monarch-SENSE (mSENSE):} We propose a modification to the SENSE method, in which the (IFFT) is parameterized by a factorized Monarch matrix. This matrix is initialized to the IFFT but, unlike SENSE, is learnable. While mSENSE is trainable, it has 137x fewer trainable parameters than U-Net.

\paragraph{Metrics:} We evaluate reconstruction performance using peak signal-to-noise ratio (pSNR) and structural similarity (SSIM) on both echoes (echo1 - E1, echo2 - E2) separately. Both metrics were computed on the 3D volume of each echo.

\paragraph{Extended Results.} We provide sample reconstructions of SENSE, mSENSE, and U-Net in data-limited settings for first (Fig.~\ref{fig:mri-data-limited-echo1}) and second (Fig.~\ref{fig:mri-data-limited-echo2}) echoes. Both SENSE and U-Net reconstructed images have aliasing artifacts. Due to the random Poisson Disc undersampling pattern, these artifacts are incoherent, causing them to manifest as blurring around fine structures and edges. In contrast, mSENSE can recover these structures with higher fidelity. Even in the second echo, which has lower signal-to-noise ratio (SNR) than the first echo, mSENSE does not overblur the image.

\begin{figure}
    \centering
    \includegraphics[width=0.9\linewidth]{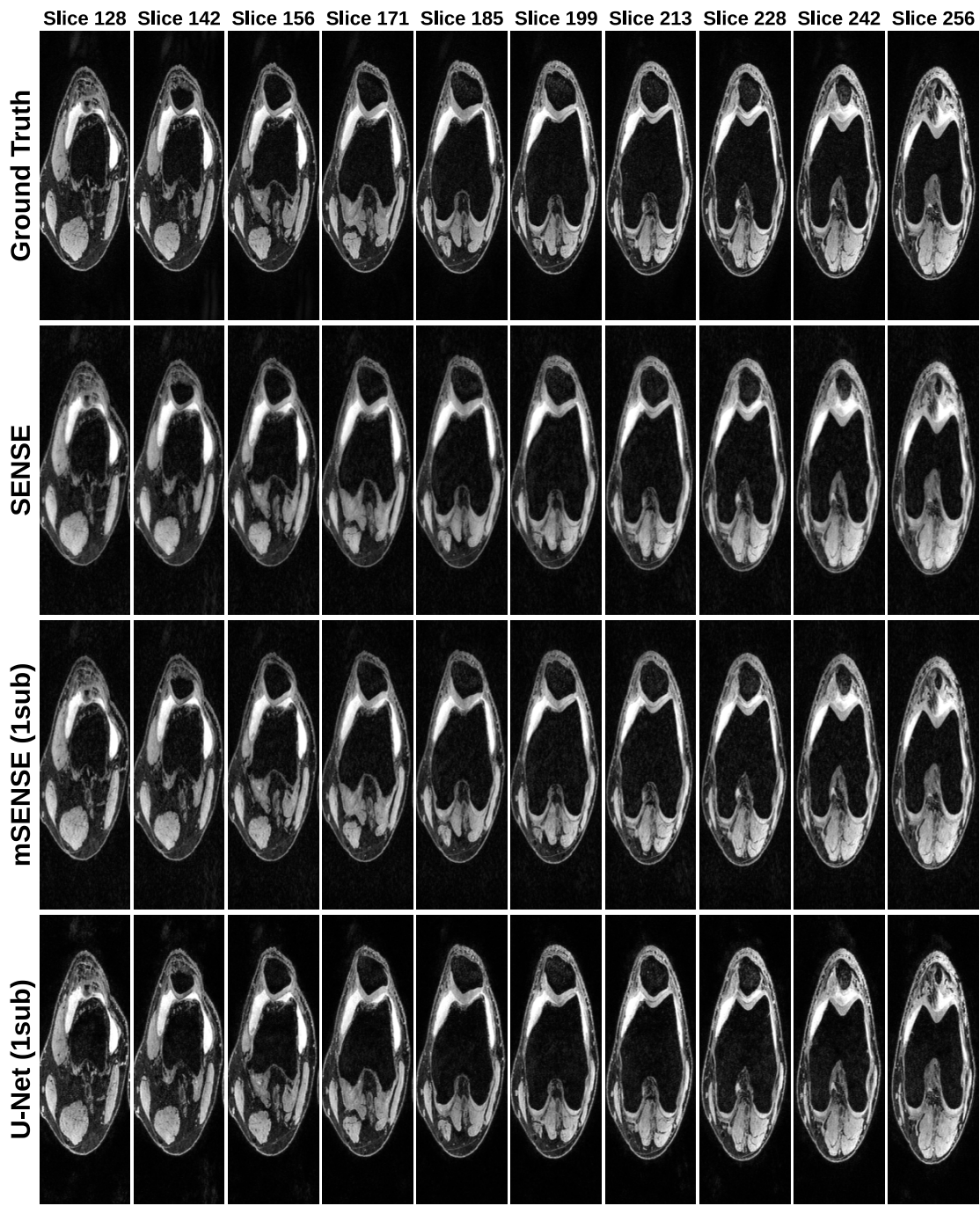}
    \vspace{-1em}
    \caption{Sample reconstructions at 2x acceleration for the first echo in the SKM-TEA dataset using SENSE, Monarch-SENSE (mSENSE), and U-Net. Both mSENSE and U-Net are trained with 1 training scan. SENSE is an untrained method.}
    \label{fig:mri-data-limited-echo1}
\end{figure}

\begin{figure}
    \centering
    \includegraphics[width=6in]{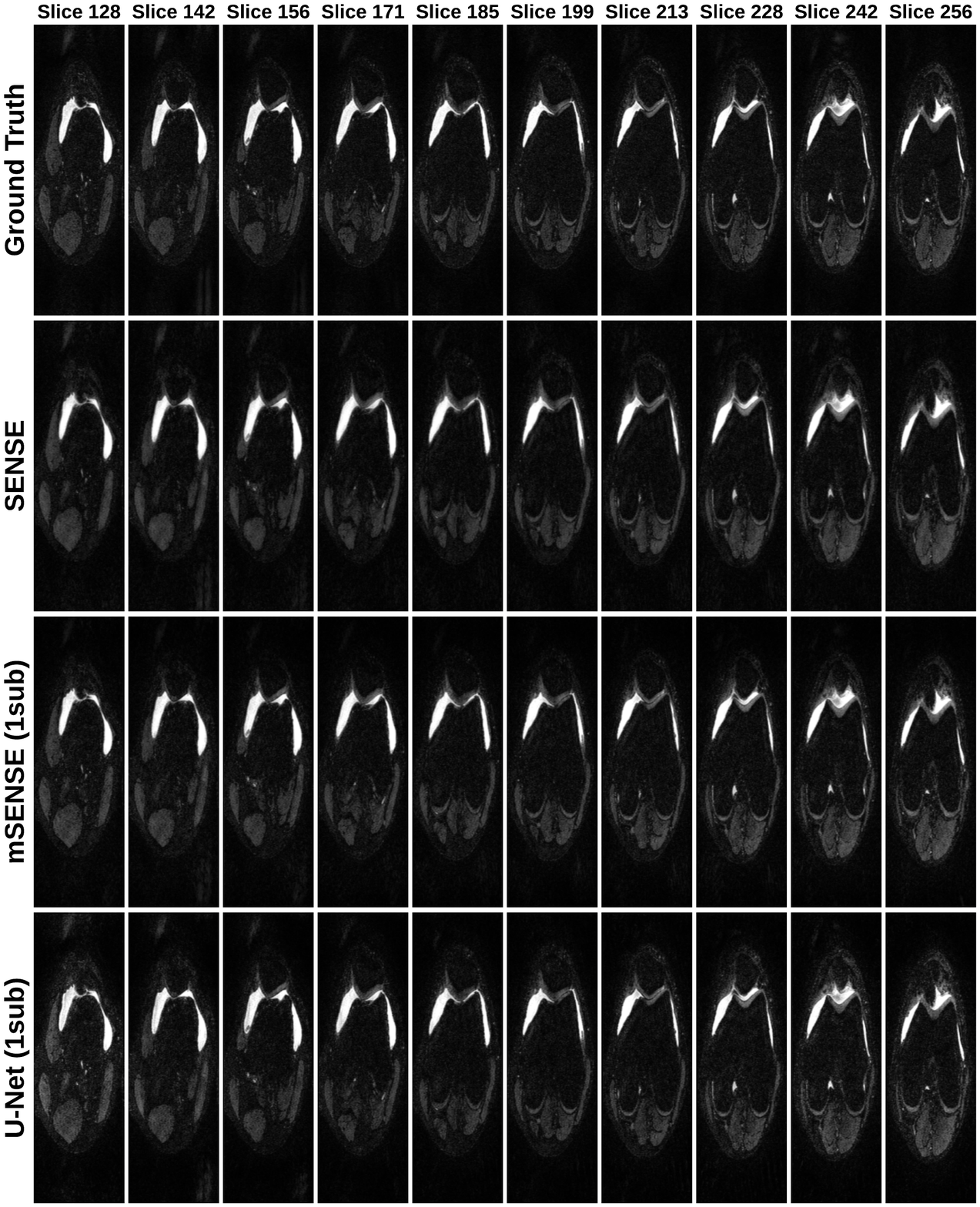}
    \vspace{-1em}
    \caption{Sample reconstructions at 2x acceleration for the second echo in the SKM-TEA dataset using SENSE, Monarch SENSE (mSENSE), and U-Net. Both mSENSE and U-Net are trained with 1 training scan. SENSE is an untrained method.}
    \label{fig:mri-data-limited-echo2}
\end{figure}

\end{document}